\newcommand{\draft}[1]{\iftoggle{draft}{#1}{}}
\newcommand{\neurips}[1]{\iftoggle{neurips}{#1}{}}
\newcommand{\arxiv}[1]{\iftoggle{neurips}{}{#1}}
\declaretheorem[name=Theorem]{theorem}
\declaretheorem[name=Lemma]{lemma}
\declaretheorem[name=Assumption]{assumption}
\declaretheorem[name=Example,style=plain]{example}
\declaretheorem[name=Proposition]{proposition}
  \renewenvironment{proof}[1][Proof]%
  {%
   \par\noindent{\bfseries\upshape {#1.}\ }%
  }%
  {\qed}
\DeclarePairedDelimiter{\abs}{\lvert}{\rvert} %
\DeclarePairedDelimiter{\brk}{[}{]}
\DeclarePairedDelimiter{\crl}{\{}{\}}
\DeclarePairedDelimiter{\prn}{(}{)}
\DeclarePairedDelimiter{\nrm}{\|}{\|}
\DeclarePairedDelimiter{\tri}{\langle}{\rangle}
\DeclareMathOperator{\En}{\mathbb{E}}
\newcommand{\wt}[1]{\widetilde{#1}}
\newcommand{\wh}[1]{\widehat{#1}}
\def\ddefloop#1{\ifx\ddefloop#1\else\ddef{#1}\expandafter\ddefloop\fi}
\def\ddef#1{\expandafter\def\csname bb#1\endcsname{\ensuremath{\mathbb{#1}}}}
\def\ddefloop#1{\ifx\ddefloop#1\else\ddef{#1}\expandafter\ddefloop\fi}
\def\ddef#1{\expandafter\def\csname b#1\endcsname{\ensuremath{\mathbf{#1}}}}
\def\ddef#1{\expandafter\def\csname c#1\endcsname{\ensuremath{\mathcal{#1}}}}
\def\ddef#1{\expandafter\def\csname scr#1\endcsname{\ensuremath{\mathscr{#1}}}}
\def\ddef#1{\expandafter\def\csname h#1\endcsname{\ensuremath{\widehat{#1}}}}
\def\ddef#1{\expandafter\def\csname hc#1\endcsname{\ensuremath{\widehat{\mathcal{#1}}}}}
\def\ddef#1{\expandafter\def\csname t#1\endcsname{\ensuremath{\widetilde{#1}}}}
\def\ddef#1{\expandafter\def\csname tc#1\endcsname{\ensuremath{\widetilde{\mathcal{#1}}}}}
\newcommand{\ls}{\ell}
\newcommand{\veps}{\varepsilon}
\newcommand{\ldef}{\vcentcolon=}
\newcommand{\rdef}{=\vcentcolon}
\let\underbar\undefined
\let\save@mathaccent\mathaccent
\newcommand*\if@single[3]{%
  \setbox0\hbox{${\mathaccent"0362{#1}}^H$}%
  \setbox2\hbox{${\mathaccent"0362{\kern0pt#1}}^H$}%
  \ifdim\ht0=\ht2 #3\else #2\fi
  }
\newcommand*\rel@kern[1]{\kern#1\dimexpr\macc@kerna}
\newcommand*\widebar[1]{\@ifnextchar^{{\wide@bar{#1}{0}}}{\wide@bar{#1}{1}}}
\newcommand*\underbar[1]{\@ifnextchar_{{\under@bar{#1}{0}}}{\under@bar{#1}{1}}}
\newcommand*\wide@bar[2]{\if@single{#1}{\wide@bar@{#1}{#2}{1}}{\wide@bar@{#1}{#2}{2}}}
\newcommand*\under@bar[2]{\if@single{#1}{\under@bar@{#1}{#2}{1}}{\under@bar@{#1}{#2}{2}}}
\newcommand*\wide@bar@[3]{%
  \begingroup
  \def\mathaccent##1##2{%
    \let\mathaccent\save@mathaccent
    \if#32 \let\macc@nucleus\first@char \fi
    \setbox\z@\hbox{$\macc@style{\macc@nucleus}_{}$}%
    \setbox\tw@\hbox{$\macc@style{\macc@nucleus}{}_{}$}%
    \dimen@\wd\tw@
    \advance\dimen@-\wd\z@
    \divide\dimen@ 3
    \@tempdima\wd\tw@
    \advance\@tempdima-\scriptspace
    \divide\@tempdima 10
    \advance\dimen@-\@tempdima
    \ifdim\dimen@>\z@ \dimen@0pt\fi
    \rel@kern{0.6}\kern-\dimen@
    \if#31
      \overline{\rel@kern{-0.6}\kern\dimen@\macc@nucleus\rel@kern{0.4}\kern\dimen@}%
      \advance\dimen@0.4\dimexpr\macc@kerna
      \let\final@kern#2%
      \ifdim\dimen@<\z@ \let\final@kern1\fi
      \if\final@kern1 \kern-\dimen@\fi
    \else
      \overline{\rel@kern{-0.6}\kern\dimen@#1}%
    \fi
  }%
  \macc@depth\@ne
  \let\math@bgroup\@empty \let\math@egroup\macc@set@skewchar
  \mathsurround\z@ \frozen@everymath{\mathgroup\macc@group\relax}%
  \macc@set@skewchar\relax
  \let\mathaccentV\macc@nested@a
  \if#31
    \macc@nested@a\relax111{#1}%
  \else
    \def\gobble@till@marker##1\endmarker{}%
    \futurelet\first@char\gobble@till@marker#1\endmarker
    \ifcat\noexpand\first@char A\else
      \def\first@char{}%
    \fi
    \macc@nested@a\relax111{\first@char}%
  \fi
  \endgroup
}
\newcommand*\under@bar@[3]{%
  \begingroup
  \def\mathaccent##1##2{%
    \let\mathaccent\save@mathaccent
    \if#32 \let\macc@nucleus\first@char \fi
    \setbox\z@\hbox{$\macc@style{\macc@nucleus}_{}$}%
    \setbox\tw@\hbox{$\macc@style{\macc@nucleus}{}_{}$}%
    \dimen@\wd\tw@
    \advance\dimen@-\wd\z@
    \divide\dimen@ 3
    \@tempdima\wd\tw@
    \advance\@tempdima-\scriptspace
    \divide\@tempdima 10
    \advance\dimen@-\@tempdima
    \ifdim\dimen@>\z@ \dimen@0pt\fi
    \rel@kern{0.6}\kern-\dimen@
    \if#31
      \underline{\rel@kern{-0.6}\kern\dimen@\macc@nucleus\rel@kern{0.4}\kern\dimen@}%
      \advance\dimen@0.4\dimexpr\macc@kerna
      \let\final@kern#2%
      \ifdim\dimen@<\z@ \let\final@kern1\fi
      \if\final@kern1 \kern-\dimen@\fi
    \else
      \underline{\rel@kern{-0.6}\kern\dimen@#1}%
    \fi
  }%
  \macc@depth\@ne
  \let\math@bgroup\@empty \let\math@egroup\macc@set@skewchar
  \mathsurround\z@ \frozen@everymath{\mathgroup\macc@group\relax}%
  \macc@set@skewchar\relax
  \let\mathaccentV\macc@nested@a
  \if#31
    \macc@nested@a\relax111{#1}%
  \else
    \def\gobble@till@marker##1\endmarker{}%
    \futurelet\first@char\gobble@till@marker#1\endmarker
    \ifcat\noexpand\first@char A\else
      \def\first@char{}%
    \fi
    \macc@nested@a\relax111{\first@char}%
  \fi
  \endgroup
}
\let\oldparagraph\paragraph
\renewcommand{\paragraph}[1]{\oldparagraph{#1.}}
  \renewenvironment{proof}[1][Proof]%
  {%
   \par\noindent{\bfseries\upshape {#1.}\ }%
  }%
  {\qed\newline}
\newcommand{\filt}{\mathfrak{F}}
\newcommand{\ErrDelBar}{\overline{\mathrm{\mathbf{Err}}}_{\Delta}(T)}
\newcommand{\pred}{\yhat}
\newcommand{\resp}{y}
\newcommand{\bigoh}{\cO}
\newcommand{\bigoht}{\wt{\cO}}
\newcommand{\expfour}{\textsf{Exp4}\xspace}
\newcommand{\squarecb}{\textsf{SquareCB}\xspace}
\newcommand{\adacb}{\textsf{AdaCB}\xspace}
\newcommand{\regcb}{\textsf{RegCB}\xspace}
\newcommand{\mainalg}{\textsf{FastCB}\xspace}
\newcommand{\chisquared}{$\chi^{2}$-divergence\xspace}
\newcommand{\tridis}{triangular discrimination\xspace}
\newcommand{\triangledis}{\tridis}
\newcommand{\rhs}{right-hand side\xspace}
\newcommand{\lhs}{left-hand side\xspace}
\newcommand{\regressor}{regression function\xspace}
\newcommand{\plugin}{plug-in\xspace}
\newcommand{\iid}{i.i.d.\xspace}
\newcommand{\K}{A}
\newcommand{\wb}[1]{\widebar{#1}}
\newcommand{\approxleq}{\lesssim}
\newcommand{\approxgeq}{\gtrsim}
\newcommand{\algcomment}[1]{\textcolor{blue!70!black}{\footnotesize{\texttt{\textbf{//
          #1}}}}}
\newcommand{\indic}{\one}
\renewcommand{\hat}[1]{\wh{#1}}
\newcommand{\logloss}{\ls_{\mathrm{log}}}
\newcommand{\logl}{\logloss}
\newcommand{\loglosst}{log loss\xspace}
\newcommand{\yhat}{\wh{y}}
\newcommand{\ybar}{\bar{y}}
\newcommand{\fstar}{f^{\star}}
\newcommand{\pistar}{\pi^{\star}}
\newcommand{\astar}{a^{\star}}
\newcommand{\dmid}{\mathop{\|}}
\newcommand{\Dkl}[2]{D_{\mathrm{KL}}\prn*{#1\dmid#2}}
\newcommand{\Dhel}[2]{D_{\mathrm{H}}\prn*{#1\dmid#2}}
\newcommand{\Dhelshort}{D_{\mathrm{H}}}
\newcommand{\Dchis}[2]{D_{\chi^{2}}\prn*{#1\dmid#2}}
\newcommand{\Dtri}[2]{D_{\Delta}\prn*{#1\dmid#2}}
\newcommand{\dkl}[2]{d_{\mathrm{KL}}\prn*{#1\dmid#2}}
\newcommand{\alg}{\mathrm{\mathbf{Alg}}_{\textup{\textsf{KL}}}}
\renewcommand{\algtext}{$\alg$\xspace} %
\newcommand{\yh}{\yhat}
\newcommand{\midsem}{\,;}
\newcommand{\RegLog}{\mathrm{\mathbf{Reg}}_{\mathsf{KL}}(T)}
\newcommand{\RegCB}{\mathrm{\mathbf{Reg}}_{\mathsf{CB}}(T)}
\newcommand{\RegBarLog}{\overline{\mathrm{\mathbf{Reg}}}_{\mathsf{KL}}(T)}
\newcommand{\RegBarCB}{\overline{\mathrm{\mathbf{Reg}}}_{\mathsf{CB}}(T)}
\newcommand{\Lhat}{L}
\newcommand{\Lhatb}{\wb{L}}
\newcommand{\Lstar}{L^{\star}}
\newcommand{\Lstarb}{\wb{L}^{\star}}
\newcommand{\fhatls}{\fhat_{\textup{\textsf{LS}}}}
\newcommand{\fhatkl}{\fhat_{\textup{\textsf{KL}}}}
\newcommand{\pihat}{\wh{\pi}}
\newcommand{\pihatls}{\wh{\pi}_{\textup{\textsf{LS}}}}
\newcommand{\pihatkl}{\wh{\pi}_{\textup{\textsf{KL}}}}
\newcommand{\fhat}{\wh{f}}
\newcommand{\Ber}{\mathrm{Ber}}
\newcommand{\ftil}{\tilde{f}}
\newcommand{\muhat}{\wh{\mu}}
\newcommand{\xone}{x\ind{1}}
\newcommand{\xtwo}{x\ind{2}}
\title{{\LARGE Efficient First-Order Contextual
  Bandits:}\\{\Large Prediction, Allocation, and Triangular Discrimination}}
\author{%
  Dylan J. Foster\\
  {\normalsize Microsoft Research, New England}\\
  {\small\texttt{dylanfoster@microsoft.com}}
  \neurips{\And}
  \arxiv{\and}
Akshay Krishnamurthy\\
{\normalsize Microsoft Research, NYC}\\
{\small\texttt{akshaykr@microsoft.com}}  %
}
\begin{document}

\maketitle

\begin{abstract}
 A recurring theme in statistical learning, online learning, and beyond is that faster convergence rates are possible for problems
  with low noise, often quantified by the performance of the best
  hypothesis; such results are known as \emph{first-order} or
  \emph{small-loss} guarantees. While first-order guarantees are
  relatively well understood in statistical and online learning, adapting to low noise in \emph{contextual bandits} (and more broadly, decision making) presents major algorithmic challenges. In
  a COLT 2017 open problem,~\citet{agarwal2017open} asked whether
  first-order guarantees are even possible for contextual bandits
  and---if so---whether they can be attained by efficient
  algorithms. We give a resolution to this question by providing an optimal and
  efficient reduction from contextual bandits to online regression
  with the logarithmic (or, cross-entropy) loss. Our algorithm is
  simple and practical, readily accommodates rich function classes,
  and requires no distributional assumptions beyond realizability. In
  a large-scale empirical evaluation, we find that our approach
  typically outperforms  comparable non-first-order methods.

On the technical side, we show that the logarithmic loss and an information-theoretic quantity called the
\emph{triangular discrimination} play a fundamental role in obtaining first-order
                       guarantees,
and we combine this observation with %
new refinements
to the regression oracle reduction framework of \citet{foster2020beyond}. 
The use of triangular discrimination yields novel results even for the classical statistical learning model, and we anticipate that it will find broader use.

\end{abstract}

\section{Introduction}
\label{sec:intro}

In the contextual bandit problem, a learning agent repeatedly makes decisions based on contextual
information, with the goal of learning a decision-making policy that
minimizes their total loss over time. This model captures simple
reinforcement learning tasks in which the
agent must learn to make high-quality decisions in an uncertain
environment, but does not need to engage in long-term planning or credit
assignment. Owing to the availability of high-quality engineered
reward metrics, contextual bandit algorithms are now routinely
deployed in production for online personalization systems \citep{agarwal2016making,tewari2017ads}.

Contextual bandits encompass both the general problem of statistical
learning with function approximation (specifically, cost-sensitive
classification) and the classical multi-armed bandit problem, yet present
algorithmic challenges greater than the sum of both
parts. In spite of these difficulties, extensive research effort over
the past decade has resulted in efficient, general-purpose algorithms,
as well as a sharp understanding of the optimal
worst-case sample complexity
\citep{auer2002non,beygelzimer2011contextual,agarwal2014taming,foster2020beyond,simchi2020bypassing}.
While the algorithmic and statistical foundations for contextual
bandits are beginning to take shape, we still lack an understanding
of \emph{adaptive} or \emph{data-dependent} algorithms that can
go beyond the worst case and exploit nice properties of
real-world instances for better performance. This is in stark contrast
to supervised statistical learning, where adaptivity has
substantial theory, and where standard algorithms (e.g., empirical risk
minimization) are known to automatically adapt to nice
data~\citep{bousquet2003introduction}. For contextual bandits,
adaptivity poses new challenges that seem to require algorithmic
innovation, and a major research frontier is to develop algorithmic principles for adaptivity and an understanding of the fundamental limits.

To highlight the lack of understanding for adaptive and data-dependent algorithms, a COLT 2017 open problem posed by
Agarwal, Krishnamurthy, Langford, Luo, and Schapire~\citep{agarwal2017open} asks
whether there exist contextual bandit algorithms that achieve a
certain data-dependent \emph{first-order} regret
  bound, which scales with the cumulative loss $\Lstar$ of the best policy,
rather than with the time horizon $T$. 
For multi-armed bandits, first-order regret bounds (also known as \emph{small-loss
    bounds} or \emph{fast rates}) typically scale as $\sqrt{\Lstar}$
  and imply faster convergence for ``easy'' problems, interpolating
  between the optimal $\sqrt{T}$ rate for worst-case instances and
  constant/logarithmic regret for noise-free instances \citep{allenberg2006hannan,foster2016learning}.
\citet{agarwal2017open} observed that existing techniques appear to be
inadequate to achieve this type of guarantee in contextual bandits. Beyond simply
asking whether first-order regret can be achieved, 
they also
asked whether it can be achieved
\emph{efficiently}, which is essential for real-world deployment. 
Subsequently,
Allen-Zhu, Bubeck, and Li~\citep{allen2018make} gave an
inefficient algorithm
with an optimal first-order regret guarantee, resolving the former question, but the
existence of efficient first-order algorithms remained open.
\looseness=-1
\vspace{-.5em}

\paragraph{Contributions}
We give the first optimal and efficient
contextual bandit algorithm with a first-order regret guarantee,
providing a resolution to the second open problem raised
by~\citet{agarwal2017open}. Our algorithm, \mainalg, builds on a
recent line
of research that develops efficient contextual bandit algorithms based
on the computational primitive of (online/offline) \emph{supervised regression}
\citep{krishnamurthy2017active,foster2018practical,foster2020beyond,simchi2020bypassing},
and is efficient in terms of queries to an \emph{online oracle} for
regression with the logarithmic loss. Beyond attaining first-order
regret, \mainalg inherits all of the benefits of recent algorithms
based on regression: it is simple and practical, accommodates flexible
function classes, requires no statistical assumptions beyond
realizability, and enjoys strong empirical performance. \looseness=-1
\vspace{-.5em}
\paragraph{Technical highlights}\looseness=-1
  By invoking the framework of regression oracles, our algorithm design
approach deviates sharply from prior approaches to first-order regret and necessitates the use of techniques that are novel even
in the context of statistical learning. At a high-level, the design of
\mainalg leverages two key techniques: \looseness=-1
\neurips{\begin{enumerate}[leftmargin=*]}
\arxiv{\begin{enumerate}}
\item \emph{First-order regret for classification via logarithmic
    loss:}
  We show that algorithms based on regression with least-squares, as
  used in prior work
  \citep{foster2020beyond,simchi2020bypassing,xu2020upper,foster2020adapting,chen2020online},
  fail to attain first-order regret, even for the simpler problem of
  cost-sensitive classification in statistical learning. In spite of
  this apparent setback, we show that regression with the
  logarithmic loss \emph{does} lead to first-order regret for statistical learning. 
  This is established through a new analysis based on an
  information-theoretic quantity
called the \emph{\tridis}
  \citep{vincze1981concept,lecam1986asymptotic,topsoe2000some}.
\item \emph{Reweighted inverse gap weighting:}  Moving from
  statistical learning to contextual
  bandits, we transform predictions into distributions
  over actions using a scale-sensitive refinement to the \emph{inverse-gap
  weighting scheme} used in the \squarecb algorithm
\citep{abe1999associative,foster2020beyond}. Our new scheme is tailored to
small losses, and we show that its error is controlled by the \tridis.

\end{enumerate}
\looseness=-1
Summarizing, our approach leverages \textbf{prediction}
  via the logarithmic loss, \textbf{allocation} via reweighted
  inverse gap weighting, and \textbf{triangular discrimination} as the bridge from prediction to allocation.

  \arxiv{\paragraph{Empirical results}}\neurips{\textbf{Empirical results.~~}} In \pref{sec:experiments}, we
  evaluate \mainalg on the large-scale contextual bandit benchmark of
  \citet{bietti2018contextual} and find that it typically outperforms
  \squarecb and other non-adaptive baselines
  \citep{foster2020instance}. Interestingly, we observe that most of
  the performance improvement can be attributed to the use of the
  logarithmic loss, while the reweighted allocation scheme provides
  modest additional benefit. These findings raise a natural question
  as to whether simply moving to the logarithmic loss can yield
  performance improvements in production contextual bandit
  deployments.

  \paragraph{On the regression oracle model} 
\neurips{\looseness=-1}
As a disclaimer, we caution
that our algorithm is efficient in terms of an oracle for online
regression, while \citet{agarwal2017open} originally asked
for an algorithm that is efficient in
terms of a \emph{cost-sensitive classification oracle} capable of solving
the policy optimization problem $\argmin_{\pi\in\Pi}\sum_{t=1}^{T}\ls_t(\pi(x_t))$.
Hence, while \mainalg is the first algorithm with
first-order regret that is efficient in \emph{any} oracle model, it
does not formally solve the original open problem. Nonetheless, %
there are strong reasons to prefer a solution based on regression over one based on classification. 
First, %
cost-sensitive classification is intractable
to implement even for simple function classes for which regression can
be solved efficiently~\citep{foster2020beyond}. 
Setting this issue aside,
(online) regression-based algorithms are typically simpler and faster than
classification-based algorithms, and multiple empirical evaluations have shown that algorithms based on
regression dominate those based on classification
\citep{foster2018practical,bietti2018contextual,foster2020instance}. 
\arxiv{Taken
together, these facts suggest that
the regression oracle framework may be the
right model to develop practical contextual bandit algorithms going
forward.}

  \arxiv{\subsection{Organization}}
  \neurips{\textbf{Organization.}~~}
  \neurips{\looseness=-1}
  \pref{sec:cb} contains our algorithm and main
  theorem. \pref{sec:analysis} describes the motivation and analysis
  ideas behind \mainalg, beginning from new techniques for statistical
  learning with regression-based classifiers. Examples for the
  main theorem are given in \pref{sec:cb_examples}, and experimental results are given in
  \pref{sec:experiments}. Detailed discussion of related work is
  deferred to \pref{sec:related}.

\neurips{\section{Main Result: An Efficient First-Order Algorithm for
    Contextual Bandits}}
\arxiv{\section{An Efficient First-Order Algorithm for Contextual Bandits}}
\label{sec:cb}

\begin{algorithm}[t]
  \setstretch{1.1}
  \begin{algorithmic}[1]
    \State \textbf{parameters}:
    \Statex{}\neurips{~~~~~~}\arxiv{~~~~\hspace{1pt}}Learning rate $\gamma>0$.
    \Statex{}\neurips{~~~~~~}\arxiv{~~~~\hspace{1pt}}Online regression oracle $\alg$.
    \For{$t=1,\ldots,T$}
    \State{}Receive context $x_t$.
    \Statex{}\neurips{~~~~~~}\arxiv{~~~~\hspace{1.3pt}}\algcomment{Compute oracle's predictions
      (Eq. \pref{eq:oracle_pred}).}
    \State{}\label{line:oracle}For each action $a\in\cA$, compute
    $\yhat_t(x_t,a)\ldef{}\alg\ind{t}(x_t,a\midsem \crl*{(x_i,a_i,\ls_i(a_i))}_{i=1}^{t-1})$.
    \State{}Let $b_t\in\argmin_{a\in\cA}\yh_{t,a}$.
    \Statex{}\neurips{~~~~~~}\arxiv{~~~~\hspace{1.3pt}}\algcomment{Reweighted inverse gap weighting.}
\State{}\label{line:igw}For each $a\neq{}b_t$, define $p_{t,a} = \frac{\pred_t(x_t,b_t)}{\K{}\pred_t(x_t,b_t) +
      \gamma(\yh_t(x_t,a) -\yh_t(x_t,b_t))}$. Let $p_{t,b_t} = 1-\sum_{a\neq{}b_t}p_{t,a}$.
    \State{}Sample $a_{t}\sim{}p_t$ and observe loss $\ls_t(a_t)$.
    \State{}Update $\alg$ with example $(x_t,a_t,\ls_t(a_t))$.
    \EndFor
  \end{algorithmic}
  \caption{\mainalg (``Fast Rates for Contextual Bandits'')}
  \label{alg:online_main}
\end{algorithm}

We begin by formally introducing the contextual bandit model. At each
round $t\in\brk*{T}$, the learner observes a context $x_t\in\cX$,
selects an action $a_t\in\cA$, then observes a loss $\ls_t(a_t)\in\brk*{0,1}$ for
the action they selected. We assume that $A := |\Acal|$ is finite and that each loss function $\ls_t:\cA\to\brk*{0,1}$ is
drawn independently from a fixed distribution
$\bbP_{\ls_t}(\cdot\mid{}x_t)$, where
$\bbP_{\ls_1},\ldots,\bbP_{\ls_T}$ and $x_1,\ldots,x_T$ are selected
by a potentially adaptive adversary.

We make a standard \emph{realizability} assumption
\citep{chu2011contextual,agarwal2012contextual,foster2018practical,foster2020beyond}. Namely,
we assume that the learner has access to a class of value functions
$\cF\subset(\cX\times\cA\to\brk*{0,1})$ (e.g., neural networks,
kernels, or forests) that models the mean of the loss
distribution.
\begin{assumption}[Realizability]
  \label{ass:realizability}
There exists a \regressor $\fstar\in\cF$ such that for all $t$,
$\fstar(x,a) = \En\brk*{\ls_t(a)\mid{}x_t=x}$.
\end{assumption}
The aim of the learner is to minimize their \emph{regret} to the
optimal policy $\pistar(x)\ldef{}\argmin_{a\in\cA}\fstar(x,a)$:
\begin{equation}
  \label{eq:regret}
\RegCB\ldef{}  
\sum_{t=1}^{T}\ls_t(a_t)
-\sum_{t=1}^{T}\ls_t(\pistar(x_t)).
\end{equation}
For each $f\in\cF$, we let $\pi_f(x)\ldef\argmin_{a\in\cA}f(x,a)$ be
the induced policy. We let $\Pi\ldef\crl*{\pi_f\mid{}f\in\cF}$ be the
induced policy class.

\arxiv{\paragraph{Additional notation}}
\neurips{\textbf{Additional notation.~~}}
We adopt standard big-oh notation, and write $f=\bigoht(g)$ to denote that $f = \bigoh(g\max\crl*{1,\mathrm{polylog}(g)})$. We use $\approxleq$ only in informal statements to highlight the most salient elements of an inequality. 
We use $a\vee{}b=\max\crl{a,b}$ and $a\wedge{}b = \min\crl{a,b}$. %

\subsection{Algorithm and Main Result}

\mainalg builds on the \squarecb algorithm of \citet{foster2020beyond},
which provides an efficient, minimax-optimal reduction from contextual
bandits to online regression
with the square loss. Compared to \squarecb and other subsequent
algorithms based on online regression~\citep{foster2020adapting,chen2020online},
the first twist here is that rather than working with the square loss,
we build on the computational primitive of online
regression with the \emph{logarithmic loss}. While this point is inconsequential for
worst-case guarantees, we establish through upper and lower bounds
(\pref{sec:plugin}) that it is a fundamental distinction
where first-order guarantees are concerned.

\paragraph{Online regression oracles}
In more detail, an online regression oracle,
which we denote by $\alg$ (for ``Kullback-Leibler'') operates in the
following protocol: For each time $t$, the algorithm receives a
context-action pair $(x_t, a_t)$, produces a prediction
$\pred_t\in\brk*{0,1}$, then receives a response $\resp_t$. The algorithm's
prediction error is measured through the binary
logarithmic/cross-entropy loss (``\loglosst'')
\begin{equation}
  \label{eq:logloss}
\logloss(\pred,\resp)\ldef{} \resp\log(1/\pred) + (1-\resp)\log(1/(1-\pred)).
\end{equation}
Its goal is to ensure that the \loglosst regret to the function
class $\cF$ is minimized for all sequences.
\begin{assumption}
  \label{ass:logloss_regret}
The algorithm $\alg$ guarantees that for every (possibly adaptively
chosen) sequence $x_{1:T},a_{1:T},y_{1:T}$, the \loglosst regret is bounded by
a function $\RegLog$:
\begin{equation}
  \sum_{t=1}^{T}\logl(\yhat_t,y_t) -
    \inf_{f\in\cF}\sum_{t=1}^{T}\logl(f(x_t,a_t),y_t) \leq{}
    \RegLog.\label{eq:log_regret}
  \end{equation}
\end{assumption}
Online regression with the logarithmic loss (or, \emph{sequential
probability assignment}) is a fundamental and well-studied problem in
online learning, and there are efficient algorithms available for
many function classes of interest \citep{cover1991universal,vovk1995game,kalai2002efficient,hazan2015online,orseau2017soft,rakhlin2015sequential,foster2018logistic,luo2018efficient}; see
\pref{sec:cb_examples} for examples. While \loglosst regret is a more stringent notion
  of performance than square loss regret, it nonetheless has a
  relatively mature theory characterizing optimal rates
  \citep{shtarkov1987universal,opper99logloss,cesabianchi99logloss,bilodeau2020tight}.

\paragraph{The algorithm}
\mainalg (\pref{alg:online_main}) is a reduction that efficiently
transforms any online regression oracle satisfying \pref{ass:logloss_regret} into a contextual bandit algorithm with an
optimal first-order regret bound. At each round $t$, the algorithm
first computes the estimated loss
\begin{equation}
\yhat_t(x_t,a)\ldef{}\alg\ind{t}(x_t,a\midsem
\crl*{(x_i,a_i,\ls_i(a_i))}_{i=1}^{t-1})\label{eq:oracle_pred}
\end{equation}
predicted by the regression oracle for each action $a$ (\pref{line:oracle}); see
  \pref{app:oracle} for a more detailed formal description of the
  oracle model. Next, \mainalg uses these estimates to assign a probability of being played to each
action $a$ via a scale-sensitive refinement to the inverse gap weighting strategy used in
  \squarecb \citep{abe1999associative,foster2020beyond}, which we call \emph{reweighted
inverse gap weighting} (\pref{line:igw}). Letting
$b_t\ldef\argmin_{a\in\cA}\pred_t(x_t,a)$ be the greedy action
according to the predicted losses, we define
\begin{equation}
  \label{eq:igw}
p_{t,a} \ldef \frac{\pred_t(x_t,b_t)}{\K{}\pred_t(x_t,b_t) +
      \gamma(\yh_t(x_t,a) -\yh_t(x_t,b_t))}\quad\forall{}a\neq{}b_t,\quad\text{and}\quad{}p_{t,b_t}\ldef1-\sum_{a\neq{}b_t}p_{t,a},
  \end{equation}
where $\gamma>0$ is a learning rate parameter.
Given this distribution, \mainalg simply samples $a_t\sim{}p_t$, then
updates the oracle with the resulting tuple $(x_t, a_t,
\ls_t(a_t))$. Our main theorem shows that this leads to an optimal
first-order regret bound.%
\begin{restatable}[Main theorem]{theorem}{maintheorem}
  \label{thm:main}
Suppose \Cref{ass:realizability,ass:logloss_regret} hold. Then \pref{alg:online_main} guarantees that for all sequences with $\En\brk[\big]{\sum_{t=1}^{T}\ls_t(\pistar(x_t))}\leq{} \Lstar$,  by choosing $\gamma=\sqrt{\K{}
  \Lstar/3\RegLog}\vee{}10\K{}$,
\begin{equation}
  \label{eq:mainalg_regret}
\En\brk{\RegCB} \leq 40\sqrt{\Lstar\cdot\K{}\RegLog} + 600\K{}\RegLog. 
\end{equation}
\end{restatable}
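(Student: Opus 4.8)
The plan is to follow the regression-to-bandits reduction template: bound the instantaneous regret at each round by an \emph{exploration} term plus an \emph{estimation} term, sum over $t$, and then relate the estimation term to the oracle's \loglosst regret through the \tridis while relating the exploration term to $\Lstar$ through a self-bounding argument. The learning rate $\gamma$ trades off the two contributions, reproducing the two pieces $\sqrt{\Lstar\,\K\,\RegLog}$ and $\K\RegLog$ of \pref{eq:mainalg_regret}.

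First I would establish the key per-round inequality for reweighted inverse gap weighting (the content of \pref{sec:analysis}). Conditioning on the history $\filt_{t-1}$ and writing $\astar_t=\pistar(x_t)$ and $b_t=\argmin_a\yh_t(x_t,a)$, the claim is the pointwise bound
\[
  \textstyle\sum_{a}p_{t,a}\fstar(x_t,a)-\fstar(x_t,\astar_t)
  \;\le\;
  \frac{c_1\K\,\yh_t(x_t,b_t)}{\gamma}
  \;+\;
  c_2\,\gamma\sum_{a}p_{t,a}\,\Dtri{\Ber(\fstar(x_t,a))}{\Ber(\yh_t(x_t,a))}
\]
for universal constants $c_1,c_2$. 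This is the heart of the argument. After adding and subtracting $\yh_t$, the ``decision regret'' $\sum_a p_{t,a}(\yh_t(x_t,a)-\yh_t(x_t,b_t))$ is controlled directly from the \pref{eq:igw} formula, since each summand satisfies $p_{t,a}\prn{\yh_t(x_t,a)-\yh_t(x_t,b_t)}\le \yh_t(x_t,b_t)/\gamma$; this is precisely where the numerator $\yh_t(x_t,b_t)$ produces a \emph{first-order} exploration cost rather than the worst-case $1/\gamma$. The discrepancy between $\fstar$ and $\yh_t$ is converted into the estimation term by the elementary inequality $\abs{\fstar(x_t,a)-\yh_t(x_t,a)}\le\sqrt{\Dtri{\Ber(\fstar(x_t,a))}{\Ber(\yh_t(x_t,a))}\cdot\prn{\fstar(x_t,a)+\yh_t(x_t,a)}}$ (immediate from the definition of \tridis) followed by AM--GM at scale $\gamma$. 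The delicate point, which I expect to be the main obstacle, is the prediction error $\yh_t(x_t,\astar_t)-\fstar(x_t,\astar_t)$ at the \emph{unplayed} optimal action: it cannot be charged to the oracle, and must instead be absorbed into the estimation term by an AM--GM split weighted by $p_{t,\astar_t}$, with the reweighted formula chosen exactly so that the resulting $1/p_{t,\astar_t}$ factor stays benign even when $\yh_t(x_t,b_t)$ is small.

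Next I would sum the two accumulated quantities. For the estimation sum, taking expectations makes the importance weighting automatic: $\En_{a_t\sim p_t}\brk{\Dtri{\Ber(\fstar(x_t,a_t))}{\Ber(\yh_t(x_t,a_t))}}=\sum_a p_{t,a}\,\Dtri{\Ber(\fstar(x_t,a))}{\Ber(\yh_t(x_t,a))}$, so only the divergence at the played action enters. I would then use the comparison $\Dtri{\Ber(\mu)}{\Ber(\nu)}\le c\,\Dkl{\Ber(\mu)}{\Ber(\nu)}$ together with the identity $\En_{y\sim\Ber(\fstar)}\brk{\logl(\yh,y)-\logl(\fstar,y)}=\Dkl{\Ber(\fstar)}{\Ber(\yh)}$; combined with \Cref{ass:logloss_regret} and realizability (\Cref{ass:realizability}, which makes $\fstar$ the per-step minimizer of expected log loss and places it in $\cF$), this yields $\En\brk{\sum_t\sum_a p_{t,a}\,\Dtri{\Ber(\fstar(x_t,a))}{\Ber(\yh_t(x_t,a))}}\le c\,\RegLog$. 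For the exploration sum I would use $\yh_t(x_t,b_t)\le\yh_t(x_t,a_t)$ (greedy minimality) and $\yh_t(x_t,a_t)=\fstar(x_t,a_t)+\prn{\yh_t(x_t,a_t)-\fstar(x_t,a_t)}$; the leading term sums in expectation to the algorithm's own loss, namely $\Lstar+\En\brk{\RegCB}$, while the residual prediction error at the played action is again controlled by \tridis and AM--GM.

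Finally I would collect everything into a self-referential inequality of the form $\En\brk{\RegCB}\le\frac{c\,\K}{\gamma}\prn{\Lstar+\En\brk{\RegCB}}+c'\gamma\RegLog+(\text{lower order})$ and solve it. The floor $\gamma\ge10\K$ forces $c\,\K/\gamma\le\tfrac12$, so the $\En\brk{\RegCB}$ appearing on the right is absorbed into the left-hand side; substituting $\gamma=\sqrt{\K\Lstar/3\RegLog}\vee10\K$ balances $\K\Lstar/\gamma$ against $\gamma\RegLog$ to produce the $\sqrt{\Lstar\,\K\,\RegLog}$ term, while the regime $\gamma=10\K$ (small $\Lstar$) and the absorbed residuals contribute the additive $\K\RegLog$ term, yielding the stated constants. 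Beyond the per-round lemma, the second delicate point is closing this self-bounding loop cleanly, so that the unknown $\En\brk{\RegCB}$ entering through the exploration term is genuinely dominated once $\gamma$ is bounded below by a multiple of $\K$.
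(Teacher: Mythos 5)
Your proposal is correct and follows essentially the same route as the paper's proof: your per-round inequality is the paper's \pref{thm:per_round} (with the conversion from the exploration cost $\yh_t(x_t,b_t)$ into the learner's expected loss deferred to the summation stage rather than folded into the per-round statement), your treatment of the prediction error at the optimal action via a $p_{t,\astar}$-weighted AM--GM split is exactly the content of \pref{lem:yb_fa}, and your \tridis-to-KL bridge is \pref{prop:kl_calibrated}. The concluding self-bounding inequality, the absorption enabled by $\gamma\geq{}10\K$, and the balancing choice of $\gamma$ likewise match the paper's argument, so the remaining differences (one-sided versus two-sided Bernoulli \tridis, and where the bookkeeping happens) are purely organizational.
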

The dominant term in this regret bound scales with
  $\sqrt{\Lstar}$ whenever the oracle \algtext attains a fast
  $\log(T)$-type regret bound. As a simple example, whenever $\cF$ is finite, we can
instantiate \algtext so that $\RegLog\leq\log\abs{\cF}$
\citep{vovk1995game}, whereby
\mainalg enjoys optimal \citep{agarwal2012contextual} first-order regret:
\[
\En\brk{\RegCB} \leq \bigoh\prn*{\sqrt{\Lstar\cdot\K{}\log\abs{\cF}} + \K{}\log\abs{\cF}}.
\]
Beyond first-order regret, \mainalg inherits all of the advantages of online
regression-based algorithms:
\neurips{\begin{itemize}[leftmargin=*]}
  \arxiv{\begin{itemize}}
\item \emph{Efficiency and simplicity.} The memory and runtime used by
  the algorithm---on top of what is required by the
  regression oracle---scales only as $\bigoh(A)$ per step;
  implementation is trivial.
\item \emph{Flexibility.} Working with regression as a primitive means
  that the algorithm easily accomodates rich, potentially nonparametric
  function classes, and we can instantiate \pref{thm:main} to get provable
  end-to-end regret guarantees for concrete classes of interest. For example, for linear models in $\bbR^{d}$ we can
  efficiently attain
  $\RegLog\leq\bigoh(d\log(T))$ \citep{cover1991universal,kalai2002efficient}, which yields a first-order
  regret bound $\RegCB \approxleq \sqrt{\Lstar\cdot{}Ad}$; our
  result is new even for this simple special case. Similar guarantees are available for kernels, generalized linear
  models, and many other nonparametric classes. On the other hand, even for function classes where provable
algorithms are not available, regression is amenable to practical
heuristics (e.g., gradient descent for non-convex models). See \pref{sec:cb_examples} for detailed examples.
\end{itemize}
While we assume that an upper bound on the optimal loss is known for
  simplicity, one can extend to the unknown case by running the algorithm in epochs, setting $\gamma$ in
  terms of the algorithm's estimated loss
  $\Lhat_t=\sum_{\tau=1}^{t}\ls_\tau(a_\tau)$, and applying the
  doubling trick. \pref{thm:main} also readily extends to high probability.

\section{Overview of Analysis}
\label{sec:analysis}

We now outline the algorithmic principles and analysis ideas
behind \mainalg. First, in \pref{sec:plugin}, we take a step back and
consider the sub-problem of cost-sensitive classification in
statistical learning. We establish that approaches based on
least-squares fail to attain first-order regret
(\pref{thm:least_squares_plugin}) for cost-sensitive classification,
then show how to fix this problem using \loglosst regression
(\pref{thm:plugin_main}); this analysis serves as an introduction to the triangular discrimination. With this result in hand, we move to the contextual bandit setting and transform predictions into distributions
  over actions using the reweighted inverse-gap
  weighting scheme in \pref{eq:igw}, which exploits small losses. Our main result here shows that this scheme satisfies a first-order
  variant of the \emph{per-round minimax inequality} of
  \citet{foster2020beyond}, which links the instantaneous contextual
  bandit regret to the triangular discrimination for the regression oracle
  on a per-round basis (\pref{thm:per_round}). Full proofs are deferred to \pref{app:plugin,app:cb}.

\subsection{Warmup: First-order Regret Bounds for Plug-In Classifiers}
\label{sec:plugin}

For the simpler problem of cost-sensitive classification in statistical learning, the
literature on \emph{plug-in classification} shows that whenever realizability
  conditions such as \pref{ass:realizability} hold, we can obtain
  optimal worst-case regret by taking the greedy policy/classifier induced by a least-squares estimator.
  We first show that this approach fails to attain first-order regret.%
The statistical learning setting we consider is as follows.  We receive a
dataset $D_n$ consisting of $n$
context-loss pairs $(x_t, \ls_t)\sim\cD$ \iid, where
the entire loss function $\ls_t:\cA\to\brk*{0,1}$ is observed. 
Analogously to~\pref{ass:realizability}, we assume access to a function class $\cF\subseteq(\cX\times\cA\to\brk*{0,1})$ such
that
$\En_{\cD}\brk*{\ls(a)\mid{}x} = \fstar(x,a)$
for some $\fstar\in\cF$, and take $\Pi:=\crl*{\pi_f\mid{}f\in\cF}$ as
the induced class of policies. Our goal is to learn a policy
$\pihat:\cX\to\cA$ such that the regret (or, excess risk)
\begin{equation}
  \label{eq:risk_csc}
  L(\pihat) - L^{\star}
\end{equation}
is small, where $L(\pi)\ldef\En_{\cD}\brk{\ls(\pi(x))}$ and
$L^{\star}\ldef{}L(\pistar)$, with $\pistar:=\pi_{\fstar}$. Formally, this an easier problem than contextual bandits, since any
algorithm with a regret bound for contextual bandits yields a bound on the
cost-sensitive classification regret \pref{eq:risk_csc} via
online-to-batch conversion.

A classical result in statistical learning \citep{vapnik1971uniform,panchenko2002some,srebro2010smoothness} shows that
if we compute the policy/classifier
$\pihat:=\argmin_{\pi\in\Pi}\sum_{t=1}^{n}\ls_t(\pi(x_t))$ that minimizes the
empirical risk, we obtain a first-order regret
bound of the form\footnote{Following the convention in contextual
  bandit literature, we focus on finite classes with
  $\abs*{\cF}<\infty$ in this discussion,
  but one can extend our observations to general classes, e.g., using the
  machinery of \citet{zhang2006from}.}
\begin{equation}
\En\brk*{L(\pihat)}-\Lstar \approxleq \sqrt{\frac{\Lstar\cdot\log\abs{\cF}}{n}} +
\frac{\log\abs{\cF}}{n}.\label{eq:fast_rate_erm}
\end{equation}
This is an optimal first-order guarantee, but computing $\pihat$ is
typically computationally intractable, even for relatively simple
policy classes. As an alternative, the approach of plug-in
classification aims to use the realizability assumption to develop
algorithms based on the more tractable primitive of regression.
Here, another
classical result (e.g., \citet{audibert2007fast}\footnote{This result is well-known in the
  binary setting. We are not aware of a reference for the
  multiclass/cost-sensitive version here, though it is implicit in many
  recent works on contextual bandits.}), shows that if we perform
least-squares via
\[
\fhatls\ldef{}\argmin_{f\in\cF}\sum_{t=1}^{n}\sum_{a\in\cA}(f(x_t,a)-\ls_t(a))^{2},
\]
and take $\pihatls\ldef\pi_{\fhatls}$ as our classifier, then under the realizability assumption
we are guaranteed
\begin{equation}
  \label{eq:slow_rate_plugin}
\En\brk*{L(\pihatls)} - \Lstar \approxleq \sqrt{\frac{A\log\abs*{\cF}}{n}}.
\end{equation}
While this result is rate-optimal, it is not first-order, and first-order regret bounds for plug-in classification are
conspicuously absent from the literature. We show that
this is fundamental.
\begin{restatable}[Failure of least-squares for plug-in classification]{theorem}{leastsquaresplugin}
  \label{thm:least_squares_plugin}
  Let $\cA=\crl{1,2}$ and $\cX=\crl{1,2}$. For every $n>10^8$, there exists a function class $\cF\subseteq(\cX\times\cA\to\brk*{0,1})$ with $\abs{\cF}=2$, and a
realizable distribution $\cD$ such that $\Lstar\leq \tfrac{2^{7}}{n}<1$, yet 
$L(\pihatls) - \Lstar \geq{}2^{-5}\sqrt{\frac{1}{n}}$ 
with probability at least $1/10$.
\end{restatable}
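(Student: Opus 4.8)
The plan is to construct a two-context, two-function instance in which the least-squares selection is driven by how well a regressor fits a \emph{high-variance but suboptimal} action, whereas the policy-relevant decision turns on a much smaller gap. This decouples the quantity that least-squares is effectively estimating from the quantity that governs regret: the confusion (variance) lives where it does not count toward $\Lstar$, and the decision error lives where the gap is $\Theta(n^{-1/2})$. The upshot is that $\fhatls$ selects the wrong regressor with constant probability and pays $\Theta(n^{-1/2})$ regret even though $\Lstar=\Theta(1/n)$, which is far larger than the first-order rate $\sqrt{\Lstar/n}+1/n=O(1/n)$ that would be forced by \pref{eq:fast_rate_erm}.

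Concretely, I would take $\cF=\crl{f_1,f_2}$ with $\fstar=f_1$ and Bernoulli losses $\ls(x,a)\sim\Ber(\fstar(x,a))$, and draw each context with probability $q=\tfrac12$. In the \emph{decision} context $x=1$, set $\fstar(1,1)=0$ (optimal) and $\fstar(1,2)=\Delta$ with $\Delta=\Theta(n^{-1/2})$, and let $f_2$ be the swap $f_2(1,1)=\Delta,\;f_2(1,2)=0$, so that $\pi_{f_2}$ wrongly prefers action $2$ on $x=1$. In the \emph{confusion} context $x=2$, set $\fstar(2,1)=0$ (optimal) and $\fstar(2,2)=\tfrac12$ (maximal variance, suboptimal); here $f_2$ agrees with $\fstar$ except at the high-variance entry $(2,2)$, where $f_2(2,2)=\tfrac12+\delta$ with $\delta=\Theta(n^{-1/2})$. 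Since this perturbation does not cross the decision boundary on $x=2$, we have $\pi_{f_2}=\pistar$ there and no policy disagreement arises from $x=2$. Crucially $\Lstar = q\cdot 0+(1-q)\cdot 0 \le 2^{7}/n$, because the only high-variance entry sits on the never-played suboptimal action.

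Because $\abs{\cF}=2$, the estimator $\fhatls$ equals $f_2$ exactly when $Z\ldef\sum_{t=1}^n\sum_{a}\brk[\big]{(f_2(x_t,a)-\ls_t(a))^2-(f_1(x_t,a)-\ls_t(a))^2}<0$, an i.i.d.\ sum of bounded terms. I would compute the mean $\En[Z]=n\prn*{2q\Delta^2+(1-q)\delta^2}$ (the squared-loss excess risk of $f_2$) and the variance $\mathrm{Var}[Z]=n(1-q)\delta^2(1+o(1))$, observing that the variance is controlled \emph{entirely} by the differing high-variance entry $(2,2)$, while the decision entries contribute only $O(\Delta^3)$. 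Choosing $\Delta=\Theta(n^{-1/2})$ with $q\Delta=2^{-5}n^{-1/2}$ and $\delta=\Theta(n^{-1/2})$ makes both $\En[Z]$ and $\mathrm{Var}[Z]$ order-one constants, tuned so that $\En[Z]$ is a constant multiple of $\mathrm{SD}[Z]=\sqrt{\mathrm{Var}[Z]}$ strictly below $\Phi^{-1}(9/10)\approx 1.28$.

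The hard part is the anti-concentration step: I must \emph{lower} bound $\Pr[Z<0]$ by a constant, which needs a central-limit rather than a tail argument (Chebyshev only upper-bounds this probability). I would apply the Berry--Esseen theorem to the i.i.d.\ sum $Z$; each summand is $O(n^{-1/2})$, so the normalized third moment is $O(1)$ and the Gaussian-approximation error is $O(n^{-1/2})$ — this is exactly where the hypothesis $n>10^8$ is used, rendering the error negligible. This gives $\Pr[Z<0]\ge\Phi\prn*{-\En[Z]/\mathrm{SD}[Z]}-o(1)\ge 1/10$. On the event $\crl{Z<0}$ we have $\fhatls=f_2$, so $\pihatls$ errs on context $1$ and incurs $L(\pihatls)-\Lstar=q\Delta=2^{-5}n^{-1/2}$; hence $L(\pihatls)-\Lstar\ge 2^{-5}\sqrt{1/n}$ with probability at least $1/10$. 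The remaining work is routine bookkeeping: checking that all constructed values lie in $\brk*{0,1}$, that $\fstar\in\cF$ is realizable, that ties in the $\argmin$ are broken toward $f_1$, and that the constants $c_1,c_2,q$ can be chosen to hit the three targets $2^{7}$, $2^{-5}$, and $1/10$ simultaneously (there is ample slack, so the $n>10^8$ threshold is generous).
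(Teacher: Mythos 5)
Your proof is correct, and it reaches the theorem by a genuinely different route than the paper. Both constructions exploit the same core weakness---least-squares weights errors by their squared magnitude rather than inversely by their variance, so noise sitting on an entry irrelevant to $\Lstar$ can dominate model selection---but the probabilistic mechanisms differ. The paper makes the high-variance context \emph{rare} ($\Pr\brk{x=\xtwo}=1/n$, with $\Ber(1/2)$ noise there and a constant-size gap between the two regressors at that entry): its bad event is the concrete combinatorial event that this context appears exactly once and its single coin lands $0$, whose probability (at least $e^{-1}/2$ minus Chernoff corrections) is computed by hand, with no appeal to any normal approximation. You instead make the high-variance context \emph{common} ($q=1/2$) and separate the regressors there by only $\delta=\Theta(n^{-1/2})$, so that the least-squares comparison statistic $Z$ is a sum of $n$ \iid terms with constant mean and constant standard deviation, and the sign flip is an aggregate Gaussian fluctuation certified by Berry--Esseen; this is where your hypothesis $n>10^8$ does its real work, whereas the paper uses largeness of $n$ only for crude parameter constraints (e.g., $1/n<2^{-22}$). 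Your computations check out: per sample, the decision context contributes $2\Delta^2$ to the mean of $Z$ and only $O(\Delta^3)$ to its variance, while the confusion context contributes $\delta^2\pm\delta$, so with $q\Delta=2^{-5}n^{-1/2}$ and, say, $\delta=n^{-1/2}$, one gets $\En\brk{Z}/\mathrm{SD}\brk{Z}\approx(2^{-8}+1/2)\big/(1/\sqrt{2})\approx 0.71$, hence $\Pr\brk{Z<0}\geq\Phi(-0.71)-O(n^{-1/2})>1/5>1/10$, and on that event the regret is exactly $q\Delta=2^{-5}\sqrt{1/n}$ with $\Lstar=0$. In exchange for invoking Berry--Esseen, your route buys cleaner bookkeeping and ample slack in all constants; the paper's route buys elementarity (Chernoff bounds plus direct calculation) and a more vivid failure mode, in which a single freak sample in a rare context flips the estimator.
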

Since the instance in this theorem has
$\sqrt{\frac{\Lstar\cdot{}A\log\abs{\cF}}{n}}\approxleq\frac{1}{n}$,
we conclude
that plug-in classification with least-squares fails to attain the
first-order regret bound in \pref{eq:fast_rate_erm} with constant probability; a lower
bound in expectation follows immediately.

\arxiv{ 
The main insight behind the result is that standard least-squares
incurs a poor dependence on the noise variance in the presence of
heteroscedastic noise.  Consider a (simplified) heteroscedastic regression problem in which we receive samples $\{(x_i,y_i)\}_{i=1}^n$ \iid satisfying $y_i = \fstar(x_i) + \veps_i$ and $\En\brk*{\veps_i \mid x_i} = 0$, but where $\sigma_x^2 := \En\brk*{\veps_i^2 \mid x_i = x}$ may vary substantially with $x$. 
In such settings, least-squares incurs a dependence on the worst case variance $\sup_x \sigma_x^2$, rather than the more favorable average variance $\En_x\brk*{\sigma_x^2}$.
However, the average variance plays a central role in obtaining
first-order bounds for cost-sensitive classification, since for losses
in $[0,1]$, we always have $\En_x\brk*{\Var\brk{\ell(\pistar(x)) \mid x}}
\leq \Lstar$. Thus, to prove the lower bound, we construct an instance
where the worst-case variance is constant, yet $\Lstar \approxleq
1/n$, and we show that the error for least-squares indeed
scales with the former quantity.
  }

\subsubsection{Fast Rates for Plug-In Classifiers: Triangular Discrimination and
  Logarithmic Loss}
It would appear we are at an impasse, as \pref{thm:least_squares_plugin} shows that square loss regression oracles of the type used in \citet{foster2020beyond} are unlikely to attain first-order regret bounds on their own. However, the plug-in classification approach is not completely doomed.
All we need to do to fix this issue is
  change the loss function and instead perform regression with the
  \emph{logarithmic loss}.

  To understand why plug-in least-squares fails and how it can be
  improved, it will be
  helpful to review the key steps in the analysis leading to the rate
  \pref{eq:slow_rate_plugin}.
  \newcommand{\stepref}[1]{\hyperlink{step#1}{\small\texttt{\textbf{Step #1}}}}
  \newcommand{\stepnum}[1]{\hypertarget{step#1}{{\color{blue!70!black}\small\texttt{\textbf{Step #1}}}}}
  \begin{enumerate}[leftmargin=34.8pt]
  \item[\stepnum{1}.] First, using a generic regret decomposition based on
    realizability, for any $f$ we have
    \begin{equation}
      \label{eq:stepone}
      L(\pi_f) -
      \Lstar\leq 2\max_{\pi\in\crl{\pi_f,\pistar}}\En_{\cD}\abs[\big]{f(x,\pi(x))-\fstar(x,\pi(x))}.
    \end{equation}
\item[\stepnum{2}.]
  Next, by Cauchy-Schwarz, for any policy $\pi$ we have
      \begin{equation}
        \label{eq:steptwo}
        \En_{\cD}\abs[\big]{f(x,\pi(x))-\fstar(x,\pi(x))}
        \leq{}\prn*{\En_{\cD}\abs[\big]{f(x,\pi(x))-\fstar(x,\pi(x))}^2}^{1/2},
      \end{equation}
      which we may further upper bound by $\prn*{\sum_{a\in\cA}\En_{\cD}\abs[\big]{f(x,a)-\fstar(x,a)}^2}^{1/2}$.
\item[\stepnum{3}.] Finally, under realizability, a standard concentration argument based on Bernstein's
  inequality implies that the least-squares estimator satisfies
      \begin{equation}
      \label{eq:stepthree}
      \En\brk*{\sum_{a\in\cA}\En_{\cD}\abs[\big]{\fhatls(x,a)-\fstar(x,a)}^2}
      \approxleq \frac{A\log\abs*{\cF}}{n}.
    \end{equation}
    Combining this bound with \stepref{2}, we conclude that
    $\En\brk*{L(\pihatls)} - \Lstar \leq \sqrt{A\log\abs{\cF}/n}$.
  \end{enumerate}
  The issue here is that even in the presence of low noise, the squared error in
  \pref{eq:stepthree} shrinks no faster than $\tfrac{1}{n}$. This
  holds even if
  $\Lstar\propto\tfrac{1}{n}$, as in the lower bound construction for
  \pref{thm:least_squares_plugin}.
Consequently, once we
  apply Cauchy-Schwarz in \stepref{2}, we lose all hope of attaining a first-order bound.

Our starting point toward improving this result is 
a refined application of Cauchy-Schwarz, by which we can replace the right hand side of~\pref{eq:steptwo} with
\begin{align}
&\prn*{\En_{\cD}\brk*{f(x,\pi(x))+\fstar(x,\pi(x))}\cdot{}\En_{\cD}\brk*{\frac{\prn[\big]{f(x,\pi(x))-\fstar(x,\pi(x))}^2}{f(x,\pi(x))+\fstar(x,\pi(x))}}}^{1/2}.   \label{eq:cauchy_refined}
\end{align}
The ratio term above is closely related to the \emph{triangular
  discrimination}, an information-theoretic
divergence measure which
we define for $p,q\in\bbR_{+}^{A}$ as\footnote{The \tridis is
  traditionally defined over the simplex $\Delta_{A}$, but for our
  application it is useful to work with the entire positive orthant.}
\begin{equation}
  \label{eq:tridis_def}
  \Dtri{p}{q} \ldef{} \sum_{a}\frac{(p_a-q_a)^{2}}{p_a+q_a}.
\end{equation}
The \tridis---also known as the symmetric \chisquared and Vincze-Le Cam
distance---is a fundamental, oft-overlooked quantity in information
theory \citep{vincze1981concept,lecam1986asymptotic,topsoe2000some}.
Since readers may be unfamiliar, we record
some basic facts.
\neurips{
\begin{proposition}[\citet{topsoe2000some}]
  The \tridis $D_{\Delta}$, over the domain $\Delta_{A}$, i) is the
  $f$-divergence given by $f(t) = \frac{(t-1)^{2}}{t+1}$, ii)
  is the square of a distance metric, and iii) is equivalent (up to a multiplicative constant) to both the
    Hellinger distance and Jensen-Shannon divergence.
\end{proposition}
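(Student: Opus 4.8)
The plan is to treat the three claims separately, deferring the metric property since it is the only one requiring a genuine idea. For (i), recall that the $f$-divergence with convex generator $f$ satisfying $f(1)=0$ is $D_f(p\|q)=\sum_a q_a\,f(p_a/q_a)$. I would substitute $f(t)=\frac{(t-1)^2}{t+1}$ and simplify:
\[
\sum_a q_a\,\frac{(p_a/q_a-1)^2}{p_a/q_a+1}=\sum_a\frac{(p_a-q_a)^2}{p_a+q_a}=\Dtri{p}{q}.
\]
It then remains to check that $f$ is a legitimate generator, which is transparent after writing $f(t)=t-3+\frac{4}{t+1}$: indeed $f(1)=0$ and $f''(t)=\frac{8}{(t+1)^3}>0$ on $\bbR_{+}$, so $f$ is convex.

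For (iii), both comparisons reduce to elementary one-dimensional inequalities. For the Hellinger distance I would use the pointwise factorization
\[
\frac{(x-y)^2}{x+y}=(\sqrt{x}-\sqrt{y})^2\cdot\frac{(\sqrt{x}+\sqrt{y})^2}{x+y},\qquad \frac{(\sqrt{x}+\sqrt{y})^2}{x+y}=1+\frac{2\sqrt{xy}}{x+y}\in[1,2],
\]
where the bracketed bound follows from AM--GM. Summing over $a$ yields $\Dhelshort^2\le\Dtri{p}{q}\le 2\,\Dhelshort^2$ (up to the normalization convention for $\Dhelshort$). For Jensen--Shannon I would observe that it is also an $f$-divergence, with generator $g(t)=\tfrac{t}{2}\log\frac{2t}{t+1}+\tfrac12\log\frac{2}{t+1}$, and then compare generators: since equivalence of two $f$-divergences follows from a two-sided bound $c_1 g(t)\le f(t)\le c_2 g(t)$ on $(0,\infty)$, it suffices to show the ratio $f(t)/g(t)$ is bounded above and below. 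This is a routine calculus check, the only subtlety being that both $f$ and $g$ vanish to second order at $t=1$, so one extends the ratio continuously there and inspects the limits as $t\to 0$ and $t\to\infty$.

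The substance is in (ii), specifically the triangle inequality for $\sqrt{\Dtri{\cdot}{\cdot}}$; symmetry is immediate from the defining formula and identity of indiscernibles will fall out of the construction. The plan is to exhibit an isometric embedding of $\bbR_{+}$, equipped with $\sqrt{(x-y)^2/(x+y)}$, into a Hilbert space, after which the triangle inequality is inherited from the ambient norm. The key algebraic identities are
\[
\frac{(x-y)^2}{x+y}=x+y-\frac{4xy}{x+y},\qquad \frac{xy}{x+y}=\int_0^\infty e^{-t/x}e^{-t/y}\,dt=\langle\psi_x,\psi_y\rangle_{L^2(\bbR_{+})},
\]
with $\psi_x(t)=e^{-t/x}$ (and $\psi_0\equiv 0$). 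Setting $\Phi_x=\sqrt{2}\,\psi_x$ gives $\langle\Phi_x,\Phi_y\rangle=\frac{2xy}{x+y}$ and $\|\Phi_x\|^2=x$, whence $\|\Phi_x-\Phi_y\|_{L^2}^2=x+y-\frac{4xy}{x+y}=\frac{(x-y)^2}{x+y}$. Taking the direct sum over coordinates, the map $p\mapsto(\Phi_{p_1},\dots,\Phi_{p_A})$ into $L^2(\bbR_{+})^{\oplus A}$ satisfies $\|\Phi(p)-\Phi(q)\|^2=\Dtri{p}{q}$, so $\sqrt{\Dtri{\cdot}{\cdot}}$ is the pullback of a genuine Hilbert metric, and injectivity of $x\mapsto\Phi_x$ gives $\Dtri{p}{q}=0\Rightarrow p=q$. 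I expect the main obstacle to be precisely this step: locating the representation $\frac{xy}{x+y}=\langle\psi_x,\psi_y\rangle$ that certifies positive-definiteness of the harmonic-mean kernel, since a naive attempt to verify the triangle inequality directly, or to show $\Dtri{\cdot}{\cdot}$ is conditionally negative definite by hand, is considerably messier. The constants in the Jensen--Shannon comparison are the only other place requiring care.
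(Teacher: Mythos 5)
Your proposal is correct in all three parts, but it is worth noting that the paper never proves this proposition at all: it is stated as a citation to \citet{topsoe2000some}, and the only ingredient the paper actually needs and proves itself is the one-sided Hellinger comparison in \pref{prop:hellinger_chi_squared}, namely $\Dhelshort^2(p\dmid{}q)\geq\frac{1}{4}\Dtri{p}{q}$, established via the same square-root factorization $(\sqrt{x}-\sqrt{y})^2=(x-y)^2/(\sqrt{x}+\sqrt{y})^2$ that you use (your two-sided version, giving equivalence up to the normalization constant, is the natural completion of that argument). Your part (i) computation and the convexity check via $f(t)=t-3+\frac{4}{t+1}$ are correct, and your Jensen--Shannon generator $g$ is the right one; the ratio argument is sound because $f/g$ extends continuously to $t=1$ (both vanish to second order, with $f''(1)=1$, $g''(1)=1/4$) and has finite positive limits $2/\log 2$ at both $t\to0$ and $t\to\infty$. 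The genuinely substantive contribution is your proof of (ii): the identities $\frac{(x-y)^2}{x+y}=x+y-\frac{4xy}{x+y}$ and $\frac{xy}{x+y}=\int_0^\infty e^{-t/x}e^{-t/y}\,dt$ are both correct (note $\|\psi_x\|^2=x/2$, so the $\sqrt{2}$ rescaling is exactly what is needed to get $\|\Phi_x\|^2=x$), and the resulting isometric embedding of $(\bbR_{+}^{A},\sqrt{\Dtri{\cdot}{\cdot}})$ into $L^2(\bbR_{+})^{\oplus A}$ gives the triangle inequality for free, with $x=0$ handled by $\psi_0\equiv 0$. This Laplace-transform certificate of positive definiteness of the harmonic-mean kernel is a clean, self-contained route to the metric property that neither the paper nor (in this form) the cited reference supplies; what it buys is a fully elementary verification of the proposition, at the cost of machinery (Hilbert-space embedding) that the paper sidesteps entirely by citing the literature, since none of its downstream results use the metric property or the Jensen--Shannon equivalence.
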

}
\arxiv{
\begin{proposition}[\citet{topsoe2000some}]
  The \tridis $D_{\Delta}$, over the domain $\Delta_{A}$:
  \begin{enumerate}
  \item is the $f$-divergence corresponding to $f(t) = \frac{(t-1)^{2}}{t+1}$.
  \item is the square of a distance metric.
  \item is equivalent (up to a multiplicative constant) to both the
    Hellinger distance and Jensen-Shannon divergence.
  \end{enumerate}
\end{proposition}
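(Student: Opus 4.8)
The plan is to treat the three claims in increasing order of difficulty, dispatching the first by a direct substitution, the third by comparing the generating functions of the divergences, and reserving the bulk of the effort for the metric property.

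For the first claim, recall that the $f$-divergence generated by a convex $f$ with $f(1)=0$ is $D_f(p\dmid q)=\sum_a q_a f(p_a/q_a)$. Taking $f(t)=(t-1)^2/(t+1)$, I would first record that $f(1)=0$ and that $f(t)=t-3+4/(t+1)$, so $f''(t)=8/(t+1)^3>0$ on $(0,\infty)$, confirming $f$ is a legitimate generator. Substituting $t=p_a/q_a$ and simplifying the summand gives $q_a f(p_a/q_a)=(p_a-q_a)^2/(p_a+q_a)$ term by term, which sums to $\Dtri{p}{q}$; this is immediate.

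For the third claim, I would observe that the Hellinger divergence $\Dhelshort^2(p,q)=\sum_a(\sqrt{p_a}-\sqrt{q_a})^2$ is the $f$-divergence for $f_H(t)=(\sqrt t-1)^2$, and compute the pointwise ratio of generators $f(t)/f_H(t)=(\sqrt t+1)^2/(t+1)=1+2\sqrt t/(t+1)\in[1,2]$. Since this holds summand by summand, I obtain $\Dhelshort^2\le\Dtri{p}{q}\le2\,\Dhelshort^2$ directly. For Jensen--Shannon, which is the $f$-divergence for $g(t)=\tfrac{t}{2}\ln\tfrac{2t}{t+1}+\tfrac12\ln\tfrac{2}{t+1}$, the same strategy applies: I would show $g(t)/f(t)$ is bounded above and below by positive constants on $(0,\infty)$. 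The only delicate point is the indeterminate limit at $t=1$, where both generators vanish; a second-order expansion resolves it, since $g''(1)=1/4$ and $f''(1)=1$ force the ratio $\to1/4$. The limits at $0$ and $\infty$ are finite and positive (both equal $\tfrac{\ln2}{2}$), so the ratio is pinched between two constants and equivalence follows.

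The second claim is the technical heart, and I would prove the stronger statement that $\sqrt{\Dtri{\cdot}{\cdot}}$ embeds isometrically into a Hilbert space, which yields all metric axioms at once (symmetry and definiteness being visible from the definition). By Schoenberg's theorem it suffices to verify that $\Delta$ is of negative type, and since $\Delta(p,q)=\sum_a D(p_a,q_a)$ with $D(x,y):=(x-y)^2/(x+y)$ acting coordinatewise, it in fact suffices to show the scalar kernel $D$ on $[0,\infty)$ is of negative type. Here I would use $1/(x+y)=\int_0^\infty e^{-(x+y)s}\,ds$ to write, for coefficients with $\sum_i c_i=0$,
\[
\sum_{i,j}c_ic_j\,D(x_i,x_j)=2\int_0^\infty\prn*{A''(s)A(s)-A'(s)^2}\,ds,\qquad A(s):=\sum_i c_i e^{-x_i s}.
\]
Integrating $A''A$ by parts turns the integrand into $-2A'(s)^2$ plus a boundary term $[A'A]_0^\infty$, and the crucial observation is that this boundary term vanishes: at $s=0$ because $A(0)=\sum_i c_i=0$, and at $s=\infty$ because $A'(s)\to0$. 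Thus the expression equals $-4\int_0^\infty A'(s)^2\,ds\le0$, establishing negative type for $D$, hence for $\Delta$, hence the Hilbert embedding and the metric property. The main obstacle I anticipate is precisely this verification---in particular making the vanishing boundary term rigorous when some $x_i=0$ (so that the corresponding exponential does not decay), which I would handle by noting $A'(\infty)=0$ regardless and setting $D(0,0)=0$ by convention; proving the scalar triangle inequality for $|x-y|/\sqrt{x+y}$ directly and invoking Minkowski across coordinates is available as a messier fallback.
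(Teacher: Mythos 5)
Your proof is correct, but there is no in-paper argument to compare it against: the paper imports this proposition wholesale from \citet{topsoe2000some}, and the only fragment it verifies itself is \pref{prop:hellinger_chi_squared}, an elementary one-sided Hellinger bound. Your write-up is therefore a genuinely self-contained alternative. Parts (i) and (iii) follow the standard route: direct substitution for the $f$-divergence identity, and two-sided bounds on ratios of generators for the equivalences. Your Hellinger computation $f(t)/f_H(t)=1+2\sqrt{t}/(t+1)\in[1,2]$ is exactly the paper's difference-of-squares step $(\sqrt{x}-\sqrt{y})^2=(x-y)^2/(\sqrt{x}+\sqrt{y})^2\geq(x-y)^2/(2(x+y))$ from the proof of \pref{prop:hellinger_chi_squared}, repackaged as a statement about generators; and your compactness argument for the Jensen--Shannon ratio (removable singularity at $t=1$ with value $1/4$ via $g''(1)/f''(1)$, limits $\ln(2)/2$ at $0$ and $\infty$, positivity in between) is sound. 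Part (ii) is where you deviate most from the classical treatment: instead of proving the scalar triangle inequality for $\abs{x-y}/\sqrt{x+y}$ and combining coordinates via Minkowski (the route you correctly keep as a fallback), you show the kernel $(x-y)^2/(x+y)$ is of negative type using the Laplace representation $1/(x+y)=\int_0^\infty e^{-(x+y)s}\,ds$, the identity $\sum_{i,j}c_ic_j(x_i-x_j)^2e^{-(x_i+x_j)s}=2A''(s)A(s)-2A'(s)^2$, and integration by parts, then invoke Schoenberg's theorem. I checked this computation: the boundary term $[A'A]_0^\infty$ does vanish ($A(0)=\sum_i c_i=0$ at one end; at the other, $A'$ consists only of exponentially decaying terms while $A$ stays bounded, even when some $x_i=0$), giving $-4\int_0^\infty A'(s)^2\,ds\leq 0$ as claimed, and your convention $D(0,0)=0$ is the continuous extension, so the degenerate pairs cause no trouble. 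The Schoenberg route costs more machinery but buys more than the statement asks for---an isometric embedding of $(\Delta_A,\sqrt{D_\Delta})$ into Hilbert space---whereas the scalar-metric-plus-Minkowski route is elementary but delivers only the metric axioms.
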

}
The \tridis turns out to be ``just right'' for our purposes, in that it is both i) large enough to
facilitate the scale-sensitive application of Cauchy-Schwarz in
\pref{eq:cauchy_refined}, and ii) small enough (compared to the more
standard \chisquared) to facilitate minimizing from samples. 

Returning to~\pref{eq:cauchy_refined}, we can upper bound with the \tridis and leverage a certain \emph{self-bounding} property that it satisfies to arrive at the following improvement on \stepref{1}/\stepref{2}.
  \begin{restatable}[Regret decomposition for \tridis]{lemma}{triangle}
    \label{lem:triangle_selfbounding} For any
    $f:\cX\times\cA\to\brk*{0,1}$, %
        \begin{equation}
      \label{eq:triangle_selfbounding}
      L(\pi_{f}) - \Lstar
      \leq 8\prn*{\Lstar\cdot
        \En_{\cD}\brk*{\Dtri{\fstar(x,\cdot)}{f(x,\cdot)}}}^{1/2}
      + 17\En_{\cD}\brk*{\Dtri{\fstar(x,\cdot)}{f(x,\cdot)}}.
    \end{equation}
  \end{restatable}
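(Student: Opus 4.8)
The plan is to combine the generic regret decomposition of \stepref{1} (\pref{eq:stepone}) with the refined Cauchy--Schwarz inequality \pref{eq:cauchy_refined}, and then close the argument with a self-bounding quadratic inequality. Throughout, abbreviate $D \ldef \En_{\cD}\brk*{\Dtri{\fstar(x,\cdot)}{f(x,\cdot)}}$, write $\Delta \ldef L(\pi_f) - \Lstar$ for the quantity to be bounded, and for each $\pi \in \crl{\pi_f,\pistar}$ set $E_\pi \ldef \En_{\cD}\abs{f(x,\pi(x)) - \fstar(x,\pi(x))}$ and $S_\pi \ldef \En_{\cD}\brk*{f(x,\pi(x)) + \fstar(x,\pi(x))}$. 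In this notation \pref{eq:stepone} reads $\Delta \leq 2\max\crl{E_{\pi_f}, E_{\pistar}}$.

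First I would apply \pref{eq:cauchy_refined} to each $E_\pi$, which bounds it by the geometric mean of the magnitude factor $S_\pi$ and a ratio factor $\En_{\cD}\big[(f(x,\pi(x))-\fstar(x,\pi(x)))^2/(f(x,\pi(x))+\fstar(x,\pi(x)))\big]$. The ratio factor is the easy part: its integrand is a single non-negative summand of $\Dtri{\fstar(x,\cdot)}{f(x,\cdot)}$, hence is at most $D$ for both policies, giving $E_\pi \leq (S_\pi D)^{1/2}$. (The only wrinkle is the convention $0/0 = 0$ when $f(x,a)=\fstar(x,a)=0$, which is harmless since the numerator also vanishes there.)

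The substance lies in controlling the magnitude factor $S_\pi$, and this is where the self-bounding enters. For $\pistar$ we have $\En_{\cD}\brk*{\fstar(x,\pistar(x))} = \Lstar$ and $\En_{\cD}\brk*{f(x,\pistar(x))} \leq \Lstar + E_{\pistar}$, so $S_{\pistar} \leq 2\Lstar + E_{\pistar}$. For $\pi_f$, the greedy property $f(x,\pi_f(x)) \leq f(x,\pistar(x))$ gives $\En_{\cD}\brk*{f(x,\pi_f(x))} \leq \Lstar + E_{\pistar}$, while $\En_{\cD}\brk*{\fstar(x,\pi_f(x))} = L(\pi_f) = \Lstar + \Delta$; hence $S_{\pi_f} \leq 2\Lstar + E_{\pistar} + \Delta$. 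Writing $M \ldef \max\crl{E_{\pi_f}, E_{\pistar}}$ and using $\Delta \leq 2M$ together with $E_{\pistar} \leq M$, both estimates collapse to $\max\crl{S_{\pi_f}, S_{\pistar}} \leq 2\Lstar + 3M$.

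It remains to solve the resulting self-referential inequality. Combining the pieces gives $M \leq \prn*{(2\Lstar + 3M)D}^{1/2}$, i.e.\ $M^2 - 3DM - 2\Lstar D \leq 0$. Solving this quadratic in $M$ and using $\sqrt{a+b}\leq\sqrt a + \sqrt b$ yields $M \leq 3D + \sqrt{2\Lstar D}$, so that $\Delta \leq 2M \leq 6D + 2\sqrt{2}\,\sqrt{\Lstar D}$, comfortably within the claimed constants $8$ and $17$. I expect the main obstacle to be precisely this circularity: because $\En_{\cD}\brk*{\fstar(x,\pi_f(x))} = L(\pi_f)$ reintroduces the very regret $\Delta$ being bounded, a naive application of Cauchy--Schwarz loops back on itself. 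The fix---folding $\Delta$ and $E_{\pistar}$ into the single quantity $M$ and extracting the bound from a quadratic---is exactly the self-bounding mechanism, and landing at constants $8$ and $17$ requires only mild care about which slack to discard (e.g.\ using $\Delta \leq 2M$ rather than separating the two policies).
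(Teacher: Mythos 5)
Your proof is correct, and in fact it establishes the lemma with sharper constants: $L(\pi_f)-\Lstar \leq 2\sqrt{2}\prn*{\Lstar D}^{1/2} + 6D$, where $D \ldef \En_{\cD}\brk*{\Dtri{\fstar(x,\cdot)}{f(x,\cdot)}}$. While you follow the same high-level blueprint as the paper (refined Cauchy--Schwarz to introduce the \tridis, then a self-bounding step to break the circular dependence on $L(\pi_f)$), the execution differs from the paper's proof in two substantive ways. First, the magnitude factor: the paper controls $\En_{\cD}\brk*{\fstar(x,\pi(x))+f(x,\pi(x))}$ through a separate AM--GM lemma (\pref{lem:fhat_to_fstar}), which bounds it by $D + 4L(\pi)$---i.e., by appealing to the \tridis a second time---and applies Cauchy--Schwarz with a pointwise maximum of the two policies' sums inside the expectation. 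You instead bound $S_\pi$ by $2\Lstar + E_{\pistar}$ (plus $\Delta$ for $\pi_f$) using only the triangle inequality and the greedy property of $\pi_f$, keeping everything in terms of the $L_1$ errors. Second, the circularity: the paper resolves it by a two-pass bootstrap---expand the square root, absorb $L(\pi_f)$ by AM--GM to get the crude bound $L(\pi_f)\leq 12\sqrt{2}\,D + 3\Lstar$, then substitute this back into the Cauchy--Schwarz bound---whereas you fold both policies' errors into the single quantity $M=\max\crl*{E_{\pi_f},E_{\pistar}}$, arrive at one quadratic inequality $M^2 - 3DM - 2\Lstar D \leq 0$, and solve it in a single pass. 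Your route is shorter, and the one-pass quadratic is precisely what buys the improved constants; the paper's per-policy magnitude lemma has the mild advantage of being stated for an arbitrary policy and of slotting into the multiclass refinement (\pref{thm:multinomial_plugin}) essentially unchanged, but for the lemma at hand your argument is a genuine simplification.
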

\pref{lem:triangle_selfbounding}
  shows that low \triangledis
  (i.e. $\En_{\cD}\brk*{\Dtri{\fstar(x,\cdot)}{f(x,\cdot)}}\propto1/n$)
  suffices for an optimal first-order regret bound. What
  remains is to find an estimator $\fhat$ that minimizes this quantity
  given only samples. Our key observation here is that the \tridis
  satisfies a refined variant of Pinsker's inequality (originally due
  to \citet{topsoe2000some}), which allows us to bound it by the Kullback-Leibler divergence:%
  \begin{equation}
    \label{eq:triangle_kl}
    \Dtri{\fstar(x,\cdot)}{f(x,\cdot)}
    = \sum_{a}\frac{(f(x,a)-\fstar(x,a))^2}{f(x,a)+\fstar(x,a)}
    \leq{} 2 \sum_{a}\dkl{\fstar(x,a)}{f(x,a)},
  \end{equation}
  where $\dkl{p}{q}\ldef p\log(p/q) + (1-p)\log((1-p)/(1-q))$ is the
  binary KL-divergence. Note that the \tridis is critical here, as the \emph{opposite} inequality holds for \chisquared.
This bound suggests that we should minimize the logarithmic loss,
since---under the realizability assumption---this loss is closely
related to the KL-divergence.
In particular, we show
  (\pref{thm:plugin_chi_squared} in \pref{app:plugin}), that by taking
  the estimator
  \[
\fhatkl\ldef{}
\argmin_{f\in\cF}\sum_{t=1}^{n}\sum_{a\in\cA}\logloss(f(x_t,a), \ls_t(a)),
\]
we are guaranteed that with high probability, \neurips{$\En_{\cD}\brk[\big]{D_{\Delta}\prn{\fstar(x,\cdot)\dmid\fhatkl(x,\cdot)}} \approxleq \frac{A\log\abs*{\cF}}{n}$.}
\arxiv{
\[
  \En_{\cD}\brk*{\Dtri{\fstar(x,\cdot)}{\fhatkl(x,\cdot)}} \approxleq \frac{A\log\abs*{\cF}}{n}.
\]
}
  Putting everything together, we arrive at a first-order regret bound
for the plug-in classifier $\pihatkl\ldef{}\pi_{\fhatkl}$.\footnote{The dependence on $A$ in this result can be improved under additional
assumptions on the loss distribution. As an example, in
\pref{app:plugin} we remove the leading $A$ factor for the special case
of multiclass classification.}
\begin{restatable}[First-order regret bound for plug-in classification]{theorem}{pluginmain}
\label{thm:plugin_main}
Let $\delta \in (0,1)$. Suppose that~\pref{assum:plugin_realizability} holds. Then with probability
at least $1-\delta$, we have
\begin{align*}
L(\pihatkl) - \Lstar\leq 
16\sqrt{\frac{\Lstar \cdot \K{}\rbr{\log|\Fcal| + \log(\K{}/\delta)}}{n}} + 68\frac{\K{}\rbr{\log |\Fcal| + \log(\K{}/\delta)}}{n}.
\end{align*}
\end{restatable}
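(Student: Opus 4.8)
The plan is to combine the regret decomposition of \pref{lem:triangle_selfbounding} with a high-probability fast-rate bound on the expected \triangledis of the logarithmic-loss estimator, so that the theorem follows by direct substitution. Concretely, I would first instantiate \pref{lem:triangle_selfbounding} at $f=\fhatkl$ (legitimate since $\fhatkl\in\cF\subseteq(\cX\times\cA\to\brk*{0,1})$), yielding
\begin{equation*}
L(\pihatkl) - \Lstar \leq 8\prn*{\Lstar\cdot\En_{\cD}\brk*{\Dtri{\fstar(x,\cdot)}{\fhatkl(x,\cdot)}}}^{1/2} + 17\En_{\cD}\brk*{\Dtri{\fstar(x,\cdot)}{\fhatkl(x,\cdot)}}.
\end{equation*}
It then remains to control the single quantity $\En_{\cD}\brk*{\Dtri{\fstar(x,\cdot)}{\fhatkl(x,\cdot)}}$ with high probability.

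The substantive step is to prove a fast ($1/n$) rate on this expected \triangledis, which is exactly the content of \pref{thm:plugin_chi_squared}: I would show that with probability at least $1-\delta$,
\begin{equation*}
\En_{\cD}\brk*{\Dtri{\fstar(x,\cdot)}{\fhatkl(x,\cdot)}} \leq 4\cdot\frac{\K\prn*{\log\abs{\cF}+\log(\K/\delta)}}{n}.
\end{equation*}
The route here is to first pass from the \triangledis to the KL divergence via the refined Pinsker inequality \pref{eq:triangle_kl}, so that it suffices to bound $\En_{\cD}\brk*{\sum_{a}\dkl{\fstar(x,a)}{\fhatkl(x,a)}}$. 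Under realizability, this population KL is precisely the excess log-loss risk of $\fhatkl$ relative to $\fstar$, and since $\fhatkl$ minimizes the empirical log loss, its empirical excess risk is nonpositive. The task thus reduces to a one-sided uniform deviation bound showing that for every $f\in\cF$ and action $a$ the population excess log-loss is controlled by its empirical counterpart up to an additive $O(\log(1/\delta)/n)$ term. I would obtain this through a Chernoff/likelihood-ratio (MGF) argument for the log loss, union bounded over the $\abs{\cF}$ functions and the $\K$ actions, which produces the $\log\abs{\cF}+\log(\K/\delta)$ factor and the leading $\K$.

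Finally I would substitute the concentration bound into the decomposition. Writing $B\ldef \K(\log\abs{\cF}+\log(\K/\delta))/n$ and using $\En_{\cD}\brk*{\Dtri{\fstar(x,\cdot)}{\fhatkl(x,\cdot)}}\leq 4B$, the first term becomes $8\sqrt{\Lstar\cdot 4B}=16\sqrt{\Lstar B}$ and the second becomes $17\cdot 4B=68B$, matching the claimed constants exactly; this constant-matching also serves as a consistency check on the factor $4$ in the concentration step.

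I expect the main obstacle to be this fast-rate concentration. A naive boundedness argument would yield only a $\sqrt{1/n}$ deviation, which is too slow; obtaining the $1/n$ rate requires exploiting the \emph{self-bounding} structure of the log loss, namely that the variance of the log-loss excess is itself controlled by its mean and ultimately by the \triangledis, the very quantity being bounded. The delicacy is compounded by the fact that the log loss is unbounded as predictions approach $0$ or $1$, so the MGF/Chernoff control of the likelihood ratio $\log(\fhatkl/\fstar)$ must be handled directly rather than through a generic bounded-difference inequality. This is precisely why the argument is routed through the log loss (and \pref{eq:triangle_kl}) rather than the square loss, whose self-bounding property fails to deliver a first-order rate, as shown in \pref{thm:least_squares_plugin}.
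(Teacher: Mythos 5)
Your overall skeleton coincides with the paper's: instantiate \pref{lem:triangle_selfbounding} at $f=\fhatkl$, plug in the high-probability bound of \pref{thm:plugin_chi_squared}, and the constants $16$ and $68$ assemble exactly as you compute. The gap is in your proposed proof of \pref{thm:plugin_chi_squared}. You pass from the triangular discrimination to KL via \pref{eq:triangle_kl} and then claim that the population KL of the log-loss ERM, $\En_{\cD}\brk[\big]{\sum_a \dkl{\fstar(x,a)}{\fhatkl(x,a)}}$, is bounded by its (nonpositive) empirical counterpart plus $O(\K\log(\abs{\cF}\K/\delta)/n)$. That claim is false: maximum-likelihood estimation yields Hellinger-type rates but \emph{not} KL rates. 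Concretely, take a single context and a single action, $\cF=\crl{\fstar,\ftil}$ with $\fstar\equiv 1/n$ and $\ftil\equiv 0$ (or $\ftil\equiv e^{-n^2}$ if you insist on interior values). With probability $(1-1/n)^n\approx e^{-1}$ all $n$ observed labels are $0$; on that event $\ftil$ has strictly smaller empirical log loss than $\fstar$, so $\fhatkl=\ftil$, yet $\dkl{\fstar}{\ftil}$ is infinite (respectively, of order $n$). So no Chernoff/MGF argument can deliver your intermediate claim. The obstruction is visible at the MGF level: writing $Z$ for the excess log loss of $f$ on a fresh sample, realizability gives $\En\brk{\exp(-Z)}=1$ exactly, which only shows the empirical excess risk cannot be very negative and says nothing about the population KL; and your proposed ``self-bounding'' fix also fails, since the variance of the excess log loss is \emph{not} controlled by its mean (for $\Ber(1/2)$ versus $\Ber(\eps)$ the variance scales as $\log^{2}(1/\eps)$ while the KL scales as $\log(1/\eps)$). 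Note the counterexample does not contradict the theorem: there $\Dtri{\fstar}{\ftil}=O(1/n)$; it is only the KL intermediary that blows up. In the paper, \pref{eq:triangle_kl} serves as motivation, and is used rigorously only in the online analysis of \pref{thm:main}, where the oracle's prediction at time $t$ is measurable with respect to the past, so conditional expectations convert regret into KL without any uniform concentration; the batch proof never bounds a KL.

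The paper's actual route goes through Hellinger, which relates to the triangular discrimination in the benign direction: by \pref{prop:hellinger_chi_squared}, $\Dtri{p}{q}\leq 4\Dhelshort^2(p\dmid q)$. The Chernoff/symmetrization argument is applied to \emph{one half} of the negative log-likelihood ratio, i.e.\ to $-\tfrac{1}{2}\sum_i\brk[\big]{\ls_i(a)\log(\fstar(x_i,a)/f(x_i,a)) + (1-\ls_i(a))\log((1-\fstar(x_i,a))/(1-f(x_i,a)))}$; the factor $\tfrac{1}{2}$ is exactly what turns the ghost-sample moment term into the Hellinger affinity $\En_{x'}\brk[\big]{\sqrt{\fstar(x',a)f(x',a)}+\sqrt{(1-\fstar(x',a))(1-f(x',a))}} = 1-\En_{x'}\brk{\Dhelshort^2(\fstar(x',a)\dmid f(x',a))}$, rather than the uninformative likelihood ratio. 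Combining this with the ERM property (the empirical half-log-likelihood ratio of $\fhatkl$ is nonpositive) and a union bound over the $\K$ actions gives $\sum_a\En_x\brk{\Dhelshort^2(\fstar(x,a)\dmid\fhatkl(x,a))}\approxleq \K\prn{\log\abs{\cF}+\log(\K/\delta)}/n$, and then \pref{prop:hellinger_chi_squared} yields the triangular-discrimination bound. If you substitute this Hellinger step for your KL step, the rest of your argument goes through as written.
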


\neurips{
   Interestingly, applications of the \tridis similar to
   \pref{lem:triangle_selfbounding} have been discovered recently 
   across many branches of mathematics, including theoretical
   computer science (communication complexity)
   \citep{yehudayoff2020pointer}, probability, and group theory (construction of group homomorphisms)
   \citep{erschler2010homomorphisms,benjamini2015disorder,ozawa2015functional}. Additionally,
   \citet{bubeck2020first} use a related \emph{non-negative
     \chisquared} to provide first-order Bayesian regret bounds for
   Thompson sampling for the multi-armed bandit.\looseness=-1
   }
   \arxiv{
   Interestingly, applications of the \tridis similar to
   \pref{lem:triangle_selfbounding} have recently been discovered
   across a number of branches of mathematics, including theoretical
   computer science (communication complexity lower bounds), probability, and group theory (e.g.,
   construction of group homomorphisms)
   \citep{yehudayoff2020pointer,erschler2010homomorphisms,benjamini2015disorder,ozawa2015functional}. 
   }

\neurips{\subsection{Moving to Contextual Bandits: Inverse Gap
    Weighting meets Triangular Discrimination}}
\arxiv{\subsection{Contextual Bandits: Inverse Gap Weighting and Triangular Discrimination}}
\label{sec:cb_analysis}
\mainalg builds on the development for plug-in classifiers in
\pref{sec:plugin} but with two key differences. First, since we need
to make decisions on the fly for arbitrary sequences of contexts, the
algorithm estimates losses using an \emph{online} regression oracle
for the logarithmic loss, as described in \pref{ass:logloss_regret}. Second, and more importantly, since the
algorithm receives partial feedback, the strategy for selecting
actions is critical. Here our main technical result shows that the
reweighted
inverse gap weighting strategy \pref{eq:igw} satisfies a certain \emph{per-round}
inequality that links the instantaneous contextual bandit error to the
\tridis between the oracle's prediction $\pred_t$ and the true loss
function $\fstar$. %

\begin{restatable}[First-order per-round inequality]{theorem}{perround}
  \label{thm:per_round}
Let $y \in [0,1]^\K{}$ be given and $b \in \argmin_{a}
y_a$. Define $p_a = \frac{y_b}{\K{}y_b + \gamma (y_a - y_b)}$ for $a \ne b$, and
$p_b = 1 - \sum_{a \ne b} p_a$. If $\gamma\geq{}2\K{}$, then for all $f \in [0,1]^\K{}$ and $\astar\in\argmin_{a}f_a$, we have
\begin{align}
  \label{eq:per_round}
      \underbrace{\sum_{a}p_a(f_a-f_{\astar})}_{\textnormal{CB regret}} 
  \leq{} \underbrace{\frac{5\K{}}{\gamma}\sum_{a}p_af_a}_{\textnormal{bias
  from \emph{exploring}}} +
  \underbrace{7\gamma\sum_{a}p_a\frac{(y_a-f_a)^{2}}{y_a+f_a}}_{\textnormal{error
  from \emph{exploiting}}}.
\end{align}
\end{restatable}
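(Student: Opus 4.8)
The plan is to prove \pref{eq:per_round} as a deterministic inequality in $y,f\in[0,1]^A$, abbreviating the per-action \tridis as $D_a:=\tfrac{(y_a-f_a)^2}{y_a+f_a}$, so the target reads $\sum_a p_a(f_a-f_{\astar})\le \tfrac{5A}{\gamma}\sum_a p_af_a+7\gamma\sum_a p_aD_a$. First I would collect the facts the reweighted weights supply. The identity $p_a\bigl(Ay_b+\gamma(y_a-y_b)\bigr)=y_b$ for $a\ne b$ gives, after summing, $\sum_a p_a(y_a-y_b)=\tfrac{y_b(Ap_b-1)}{\gamma}\le \tfrac{A}{\gamma}p_by_b$; and since $p_a\le 1/A$ for $a\ne b$ it also gives $p_b\ge 1/A$. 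Two elementary conversions then do the rest: the self-bounding bound $|y_a-f_a|\le D_a+2f_a$, hence $y_a+f_a\le 4f_a+D_a$; and the min-below-average facts $f_{\astar}=\min_af_a\le\sum_a p_af_a$ together with $p_bf_b\le\sum_a p_af_a$ and $p_bD_b\le\sum_a p_aD_a$, which let me trade a single factor of $1/p_b\le A$ for a loss or \tridis weighted by $p$.

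I would decompose the regret against the predicted-greedy action $b=\argmin_ay_a$:
\[ \sum_a p_a(f_a-f_{\astar}) = \underbrace{\sum_a p_a(f_a-y_a)}_{E_1} + \underbrace{\sum_a p_a(y_a-y_b)}_{G} + \underbrace{(y_b-f_{\astar})}_{E_2}. \]
The exploration gap is handled by the identity above: $G\le\tfrac{A}{\gamma}p_b(3f_b+D_b)\le\tfrac{3A}{\gamma}\sum_a p_af_a+\tfrac{A}{\gamma}\sum_a p_aD_a$, where $\gamma\ge2A$ makes the last term at most $\tfrac12\sum_a p_aD_a$. For the bulk estimation term $E_1$ I would apply Young's inequality in the scale-sensitive form $|y_a-f_a|=\sqrt{D_a}\,\sqrt{y_a+f_a}\le\tfrac{w}{2}D_a+\tfrac{1}{2w}(y_a+f_a)$ with $w=2\gamma/A$, then use $y_a+f_a\le 4f_a+D_a$; the linear piece becomes an $O(A/\gamma)\sum_a p_af_a$ bias while the quadratic piece is absorbed into $7\gamma\sum_a p_aD_a$. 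It is precisely here that the \tridis refinement buys the first-order scaling: the denominator $y_a+f_a$ turns the usual constant $O(1/\gamma)$ bias of the square-loss \squarecb analysis into a loss-proportional one.

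The crux is $E_2=y_b-f_{\astar}$, the misestimation at the true optimum, which must be controlled \emph{without} destroying the cancellation between exploration and estimation. I expect this to be the main obstacle: when $\astar\ne b$ the weight $p_{\astar}$ can be arbitrarily small (large predicted gap $y_{\astar}-y_b$), so the naive route $y_b-f_{\astar}\le|y_{\astar}-f_{\astar}|$ followed by a $p_{\astar}$-weighted Young step produces a hopeless $1/(\gamma p_{\astar})$ bias. I would resolve it by splitting on the gap through $\tfrac{1}{p_{\astar}}=A+\gamma\tfrac{y_{\astar}-y_b}{y_b}$. If $\gamma(y_{\astar}-y_b)\le Ay_b$, then $p_{\astar}\ge\tfrac{1}{2A}$ and a $p_{\astar}$-weighted Young step (weight $4\gamma p_{\astar}$) works exactly as in the case $\astar=b$: using $f_{\astar}\le\sum_a p_af_a$ and $D_{\astar}\le 2A\sum_a p_aD_a$ it contributes an $\tfrac{A}{\gamma}\sum_a p_af_a$ bias and an $O(\gamma)\sum_a p_aD_a$ error. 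If instead $\gamma(y_{\astar}-y_b)>Ay_b$ (so $p_{\astar}>\tfrac{y_b}{2\gamma(y_{\astar}-y_b)}$), I would charge $E_2$ entirely to the exploitation error at $\astar$ via the key algebraic inequality
\[ y_b\,(y_{\astar}-f_{\astar})^2 \ \ge\ (y_{\astar}-y_b)\,(y_b-f_{\astar})\,(y_{\astar}+f_{\astar}), \qquad 0\le f_{\astar}\le y_b\le y_{\astar}, \]
which (substituting $g=y_{\astar}-y_b$, $e=y_b-f_{\astar}$) reduces to $g^2(y_b-e)+e^2(y_b+g)\ge0$ and rearranges to $y_b-f_{\astar}\le 2\gamma p_{\astar}D_{\astar}\le 2\gamma\sum_a p_aD_a$, with no bias at all. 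When $f_{\astar}\ge y_b$ the term $E_2$ is nonpositive and there is nothing to prove.

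Finally I would add the three contributions. The bias coefficients sum to $\tfrac{A}{\gamma}+\tfrac{3A}{\gamma}+\tfrac{A}{\gamma}=\tfrac{5A}{\gamma}$ (the $E_2$ bias being $\tfrac{A}{\gamma}$ in the small-gap case and $0$ in the large-gap case), and the error coefficients total at most $7\gamma$ in both cases, using $\gamma\ge2A$ throughout to absorb the stray $A^2/\gamma\le A/2$ factors arising from the $1/p_b,1/p_{\astar}\le 2A$ steps; this yields \pref{eq:per_round}. The only degenerate point, $y_b=0$ (where the weights vanish), is dispatched by continuity or by observing that all mass then sits on $b$.
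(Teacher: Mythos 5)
Your proposal is correct: the identity $\gamma\sum_{a\neq b}p_a(y_a-y_b)=y_b(\K{}p_b-1)$, the self-bounding conversions $y_a\le 3f_a+\frac{(y_a-f_a)^2}{y_a+f_a}$ and $y_a+f_a\le 4f_a+\frac{(y_a-f_a)^2}{y_a+f_a}$, and the polynomial inequality $y_b(y_{\astar}-f_{\astar})^2\ge(y_{\astar}-y_b)(y_b-f_{\astar})(y_{\astar}+f_{\astar})$ (which does reduce to $g^2f_{\astar}+e^2y_{\astar}\ge 0$ under your substitution) all check out, and the constants assemble comfortably within the $\frac{5\K{}}{\gamma}$ and $7\gamma$ budgets when $\gamma\ge 2\K{}$. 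Your skeleton coincides with the paper's: the same three-way decomposition into estimation error, exploration gap, and misestimation $y_b-f_{\astar}$ at the true optimum; Young/AM-GM against the triangular denominator for the estimation term; and the definition of $p$ together with $p_b\ge 1/\K{}$ for the gap term. Where you genuinely diverge is the crux term $y_b-f_{\astar}$, which the paper isolates as \pref{lem:yb_fa}. The paper splits on whether $\astar=b$ or $\astar\neq b$; in the latter case it expands $y_b-f_{\astar}$ exactly through the definition of $p_{\astar}$, kills one piece by AM-GM and bounds the other using monotonicity of $t\mapsto(t-c)/t$, obtaining $y_b-f_{\astar}\le\frac{\K{}}{4\gamma}y_b+2\gamma p_{\astar}\frac{(y_{\astar}-f_{\astar})^2}{y_{\astar}+f_{\astar}}$, and then must invoke this same bound a second time to convert the leftover $y_b$ into $f_{\astar}$. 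You instead split on whether the predicted gap satisfies $\gamma(y_{\astar}-y_b)\le\K{}y_b$: in the small-gap regime $p_{\astar}\ge\frac{1}{2\K{}}$ and a weighted Young step suffices, while in the large-gap regime your polynomial inequality charges the entire misestimation to the exploitation error at $\astar$ with no bias term at all. The two routes prove the same lemma up to constants, but yours makes the underlying dichotomy explicit---either $\astar$ receives enough probability for a variance argument, or the predicted gap is so large that the prediction error at $\astar$ alone pays for $y_b-f_{\astar}$---and, because it produces $f_{\astar}$ directly rather than $y_b$, it avoids the paper's second pass through the lemma. Two cosmetic blemishes: the degenerate case $y_b=0$, where $p$ is ill-defined under ties at zero, is waved off in your write-up (though the paper's own proof divides by $y_b$ and is equally silent), and your Young weight $w=2\gamma/\K{}$ in the estimation term is an odd choice---$w=2\gamma$ is the natural one---but it still lands within the stated constants.
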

The inequality \pref{eq:per_round} may be thought of as an algorithmic
analogue of the refined Cauchy-Schwarz lemma
\pref{eq:triangle_selfbounding}, with the learning rate $\gamma$
modulating the tradeoff between exploration and exploitation. Applying the inequality for each step
$t$ (with $p=p_t$, $y=\yhat_t(x_t,\cdot)$, and $f=\fstar(x_t,\cdot)$),
and using the Pinsker-type inequality~\pref{eq:triangle_kl},
we are guaranteed that
\begin{align}
  \label{eq:}
  \En\brk*{\RegCB} \leq{} \frac{5\K}{\gamma}\En\brk*{\Lhat_T} + 14\gamma\cdot\RegLog,
\end{align}
where $L_T\ldef\sum_{t=1}^{T}\ls_t(a_t)$. By a standard argument, this implies the main result in
\pref{thm:main}.

Compared to the per-round inequality used to analyze the original
version of \squarecb in \citet{foster2020beyond}, the main improvement
given by \pref{thm:per_round} is that, by reweighting---which leads to
less exploration when the optimal loss is small---we are able to
replace a constant exploration bias of order $\frac{\K{}}{\gamma}$
incurred by \squarecb with the scale-sensitive bias term
$\frac{\K}{\gamma}\cdot\sum_{a}p_af_a$ in \pref{eq:per_round}, leading
to a first-order bound. The price for this improvement is that we must now
minimize the \tridis rather than the squared error used by \squarecb, but this is taken care of by the \loglosst
oracle.\looseness=-1

\arxiv{
  \section{Examples}
  \label{sec:cb_examples}
\neurips{In this section we}\arxiv{We now} take advantage of the extensive literature on regression with the
logarithmic loss
\citep{cover1991universal,vovk1995game,kalai2002efficient,hazan2015online,orseau2017soft,rakhlin2015sequential,foster2018logistic,luo2018efficient}
and instantiate
\pref{thm:main} to give provable and efficient first-order
regret bounds for a number of function classes of interest. To the
best of our knowledge, our results are new for each of these special
cases.

\begin{example}[Finite function classes]
  If $\cF$ is a finite class, Vovk's aggregating algorithm \citep{vovk1995game} guarantees
  that\footnote{See \pref{prop:exp_concave} for a proof that the loss
    $\logl(\yhat,y)$ is mixable over the domain $\brk*{0,1}$, which is required to apply this result.}
  \begin{equation}
    \label{eq:vovk}
    \RegLog \leq \log\abs{\cF}.
  \end{equation}
  With this choice, $\mainalg$ satisfies $\En\brk{\RegCB} \leq \bigoh\prn*{\sqrt{\Lstar\cdot\K{}\log\abs{\cF}} + \K{}\log\abs{\cF}}$.
\end{example}

\begin{example}[Low-dimensional linear functions]
  Suppose that $\cF$ takes the form
  \[
\cF = \crl*{(x,a)\mapsto{} \tri*{w,\phi(x,a)}\mid{}w\in\Delta_{d}},
\]
where $\phi(x,a)\in\bbR_{+}^{d}$ is a fixed feature map with
$\nrm*{\phi(x,a)}_\infty\leq{}1$. Then the continuous exponential weights
algorithm ensures that
\[
\RegLog \leq{} \bigoh(d\log(T/d)),
\]
and can be implemented in $\mathrm{poly}(d,T)$ time per step using
log-concave sampling
\citep{cover1991universal,kalai2002efficient}. With this choice,
\mainalg satisfies
\begin{equation}
  \label{eq:linear}
  \En\brk*{\RegCB} \leq{}\bigoh\prn*{\sqrt{\Lstar\cdot\K{}d\log(T/d)} + \K{}d\log(T/d)}.
\end{equation}
\end{example}
Beyond attaining first-order regret, this bound in \pref{eq:linear} is minimax optimal
when the number of actions is constant \citep{li2019nearly}. A natural
direction for future work is to improve the result for large action spaces. Another
more practical choice for the oracle in this setting is the algorithm
of \citet{luo2018efficient}, which has slightly worse regret
$\RegLog\leq\bigoht(d^2)$, but runs in time $\bigoh(Td^{2.5})$ per step.

While first-order regret bounds for contextual bandits have primarily been
investigated for finite classes prior to this work, an advantage of
working within the regression oracle framework is that we can easily
lift our first-order guarantees to rich, nonparametric function classes.
\begin{example}[High/infinite-dimensional linear functions]
  \label{ex:highdim}
  Suppose that $\cF$ takes the form
  \[
\cF = \crl*{(x,a)\mapsto{} \tfrac{1}{2}(1+\tri*{w,\phi(x,a)})\mid{}\nrm*{w}_{2}\leq{}1},
\]
where $\nrm*{\phi(x,a)}_2\leq{}1$ is a fixed feature map. For this
setting, \citet[Section 6.1]{rakhlin2015sequential} show that the
follow-the-regularized-leader algorithm with
log-barrier regularization has\footnote{This is technically
  only proven for the case where $y\in\crl*{0,1}$, but the proof easily extends to $y\in\brk*{0,1}$.}
\[
\RegLog \leq{} \bigoh(\sqrt{T\log(T)}).
\]
This algorithm can be implemented in time $\bigoh(d)$ per step. For
this choice, \mainalg satisfies the dimension-independent rate
\begin{equation}
  \En\brk*{\RegCB} \leq{}\bigoh\prn*{(\K\Lstar)^{1/2}T^{1/4} +
    \K\sqrt{T}},
  \label{eq:example_highdim}
\end{equation}
\end{example}
Let us interpret the bound in \pref{eq:example_highdim}. First, we recall that the minimax optimal
rate for this function class is $\K^{1/2}T^{3/4}$, which the bound
above always achieves in the worst case
\citep{abe1999associative,foster2020beyond}; this
``worse-than-$\sqrt{T}$'' rate is the price we pay for working with an
expressive function class. On the other hand, if
$\Lstar$ is constant the bound in \pref{eq:example_highdim} improves
to $\bigoh(\K\sqrt{T})$, which beats the worst-case rate. While one
might hope that a tighter rate of the form, e.g.,
$(\Lstar)^{3/4}$, might be possible, by adapting
a lower bound in \citet[Section 4]{srebro2010smoothness}, one can show that the
result in \pref{eq:example_highdim} cannot be improved.

\begin{example}[Kernels]The algorithm in \pref{ex:highdim} kernelizes
  and hence can be immediately applied when
  $\cF=\crl{(x,a)\mapsto{}\frac{1}{2}(1+g(x,a))\mid{}g\in\cG}$, where
  $\cG$ is  reproducing kernel space with RKHS norm
  $\nrm*{\cdot}_{\cG}$ and kernel $\cK$. The regret bound in
  \pref{eq:example_highdim} continues to hold for this setting as
  long as
  $\nrm*{g}_{\cG}\leq{}1$ and $\cK((x,a), (x,a)) \leq{} 1$.
\end{example}
The logarithmic loss is also well-suited to generalized
linear models, as the following example highlights.%
\begin{example}[Generalized linear models]
  \label{ex:logistic}
Let
$\cF=\crl*{(x,a)\mapsto\sigma(\tri{w,\phi(x,a)})\mid{}w\in\bbR^{d},
  \nrm*{w}_2\leq{}1}$, where $\sigma(t)=1/(1+e^{-t})$ is the logistic
link function and $\phi(x,a)$ is a fixed feature map. In this case,
the map $w\mapsto{}\logloss(\sigma(\tri{w,\phi(x,a)}),y)$ is
equivalent to the
standard logistic loss function applied to $\tri{w,\phi(x,a)}$, and we can use the algorithm from
\citet{foster2018logistic} to obtain $\RegLog\leq\bigoh(d\log(T/d))$
and $\RegCB\leq\bigoht(\sqrt{\Lstar\cdot{}Ad} + Ad)$. When $d$ is
large, we can also use online gradient descent on the logistic
loss, which gives $\RegLog\leq\bigoh(\sqrt{T})$ and $\En\brk*{\RegCB} \leq{}\bigoh\prn*{(\K\Lstar)^{1/2}T^{1/4} +
    \K\sqrt{T}}$.
\end{example}

Beyond the algorithmic examples above, for general function classes \citet{bilodeau2020tight}
provide a tight characterization for the minimax optimal rates
for online regression with the logarithmic loss in terms of
\emph{sequential covering numbers} \citep{rakhlin2015sequential} for
the class $\cF$. We can use these in tandem with
\pref{thm:main} to give new regret bounds for general
classes. For example, when $\cF$ is the set of all $\brk*{0,1}$-valued
$1$-Lipschitz functions over $\brk*{0,1}^{d}$,
\citet{bilodeau2020tight} show that the optimal rate for \loglosst
regression is $\RegLog=\Theta(T^{\frac{d}{d+1}})$, which gives $\RegCB\approxleq
\bigoh\prn*{(\Lstar)^{1/2}T^{\frac{d}{2(d+1)}}+T^{\frac{d}{d+1}}}$ for
\mainalg.\neurips{\looseness=-1}

 }

\section{Experiments}
\label{sec:experiments}

\newcommand{\pvloss}{L_{\textsf{PV}}(T)}

We compared the performance of \mainalg to that of the
de-facto alternative, \squarecb \citep{foster2020beyond} in the
large-scale contextual bandit evaluation suite (``bake-off'') of
\citet{bietti2018contextual}. We found that \mainalg typically enjoys
improved performance, particularly on datasets where the optimal
loss $\Lstar$ is small. As a secondary observation, we found that
using generalized linear models with the logarithmic loss rather than a linear
model with the square loss (as in prior
work~\citep{bietti2018contextual,foster2020instance}) leads to
substantial improvements, even for \squarecb. We summarize results
here; further details
are given in \pref{app:experiments}.

\paragraph{Datasets}
The \emph{contextual bandit bake-off} is a collection of over 500
multiclass, multilabel, and cost-sensitive classification datasets
available on the \href{https://www.openml.org/}{\texttt{openml.org}}
platform \citep{vanschoren2014openml}. The collection was introduced
in \citet{bietti2018contextual} for the purpose of benchmarking
oracle-based contextual bandit algorithms.
Following~\citet{bietti2018contextual}, we use the multiclass
classification datasets from the collection (each context $x$ has a
``correct'' label $y$ associated with it) to simulate bandit feedback by assigning loss $0$ if the learner predicts the correct label and $1$ otherwise.

\paragraph{Algorithms and oracle}
We use the standard implementation of \squarecb in the
Vowpal Wabbit (VW) online learning
library,\footnote{\url{https://vowpalwabbit.org}} as used by
\citet{foster2020instance}. We also implement
\mainalg in VW.

For both algorithms, we instantiate the oracle as
  performing online logistic regression with a fixed dataset-dependent
  feature map.  This choice is convenient because i) it naturally
  produces predictions in $\brk*{0,1}$, as required by \mainalg, and
  ii), it formally meets our oracle requirements, since it is equivalent
  to online \loglosst regression with a generalized linear model.
  It can also be viewed as an
  admissible online square loss oracle, as required by \squarecb
  (see \pref{app:experiments} for further discussion). We
  additionally instantiate \squarecb with a linear model and the
  square loss, which was shown to be the strongest non-adaptive method in prior evaluations~\citep{foster2020instance}.
  We do not compare with high-performing adaptive algorithms like
  \textsf{RegCB} and \textsf{AdaCB} (as used in
  \cite{bietti2018contextual,foster2020instance}) as these algorithmic modifications
  are somewhat complementary, and we expect they can be incorporated into \mainalg.
 All oracles are trained with the default VW learning
  rule, which performs online gradient descent with adaptive
  updates~\citep{duchi2011adaptive,karampatziakis2011online,ross2013normalized}.\looseness=-1

For both \mainalg and \squarecb, we apply inverse gap weighting (the
reweighted and original version, respectively) with a
time-varying learning rate schedule in which we set $\gamma=\gamma_t$ in
\pref{line:igw} of \pref{alg:online_main} at round $t$, and likewise for
\squarecb.  Following \citet{foster2020instance}, we set $\gamma_t=\gamma_0t^{\rho}$, where $\gamma_0\in\crl{10, 50, 100, 400,
700, 10^{3}}$ and $\rho\in\crl*{.25, .5}$ are hyperparameters.

\newcommand{\fastcbl}{\mainalg.\textsf{L}\xspace}
\newcommand{\squarecbl}{\squarecb.\textsf{L}\xspace}
\newcommand{\squarecbs}{\squarecb.\textsf{S}\xspace}

\begin{figure}
\addtolength{\belowcaptionskip}{-1em}
  \centering
  ~\hfill
    \begin{tabular}{ | l | c | c | c | c | c | c | c | c | }
    \hline
      $\downarrow$ vs $\rightarrow$ & S.S & S.L & F.L \\ \hline
      SquareCB.S & - & -55 & -66 \\ \hline
      SquareCB.L & 55 & - & -11 \\ \hline
      \textbf{FastCB.L} & \textbf{66} & \textbf{11} & - \\ \hline
    \end{tabular}
    \hfill
        \begin{tabular}{ | l | c | c | c | c | c | c | c | c | }
    \hline
      $\downarrow$ vs $\rightarrow$ & S.S & S.L & F.L \\ \hline
      SquareCB.S & - & -54 & -64 \\ \hline
      SquareCB.L & 54 & - & -3 \\ \hline
          \textbf{FastCB.L} & \textbf{64} & \textbf{3} & - \\ \hline
    \end{tabular}\hfill~

\vspace{0.25cm}

  \begin{centering}
    \includegraphics[height=1.45in]{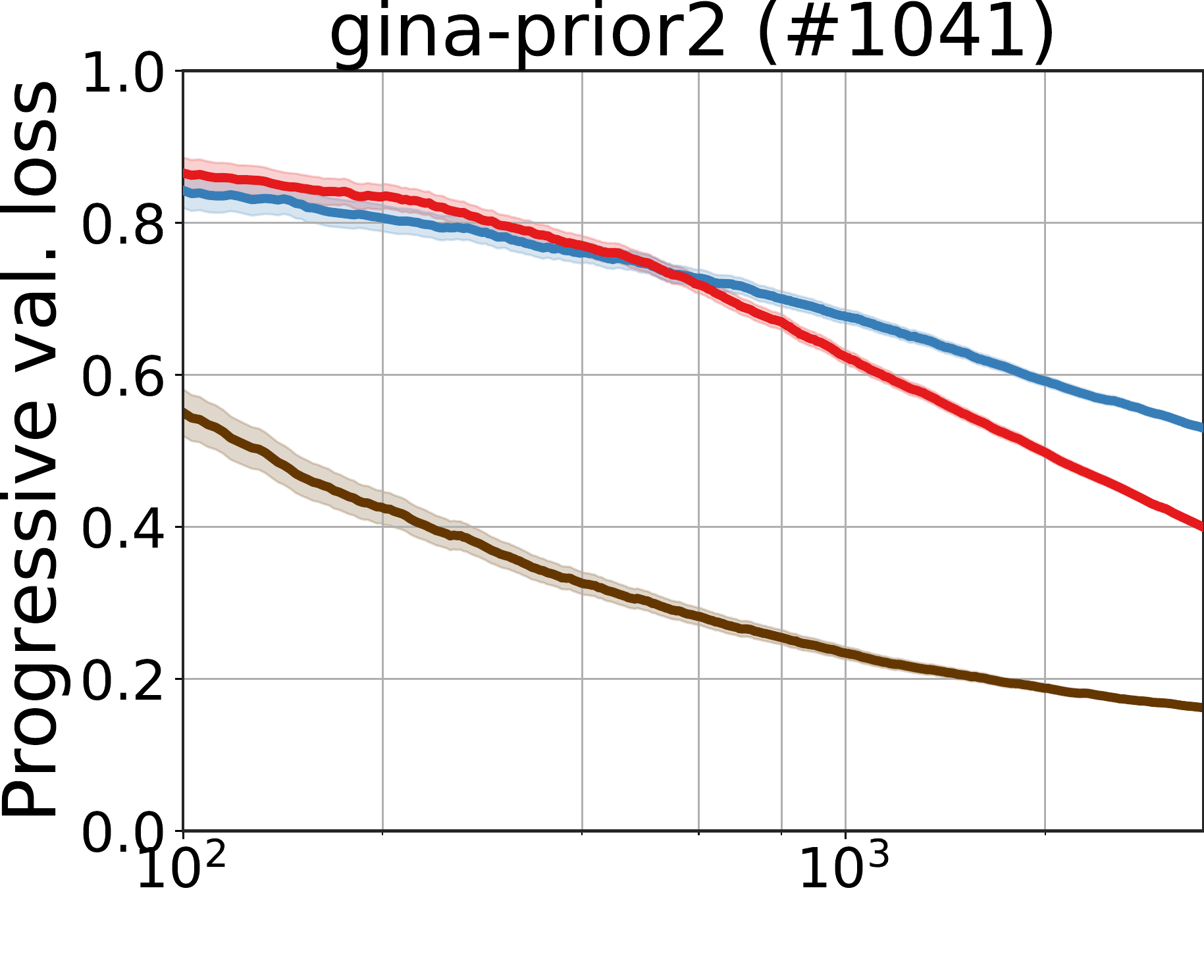}~~
    \includegraphics[height=1.45in]{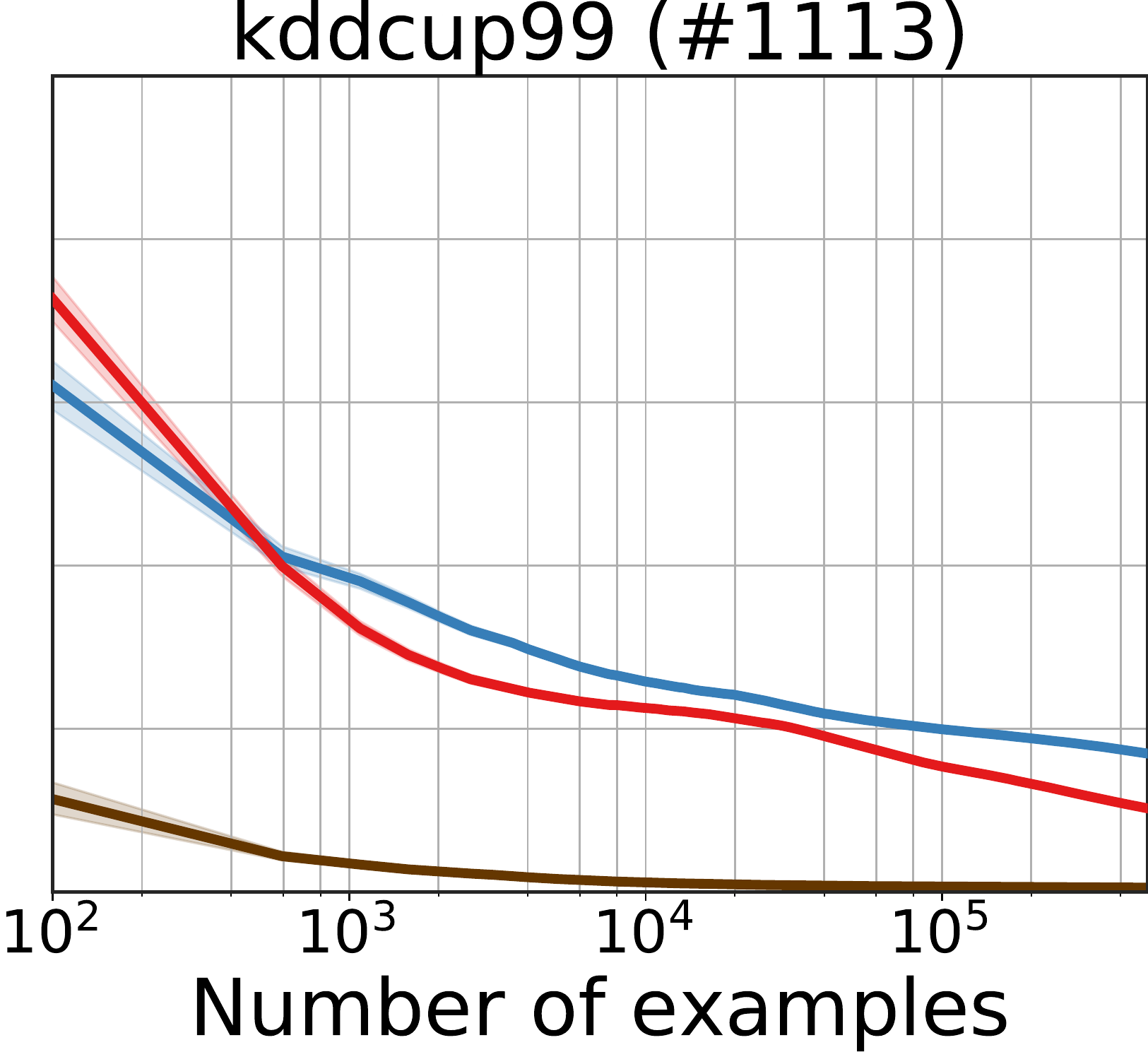}~~
    \includegraphics[height=1.45in]{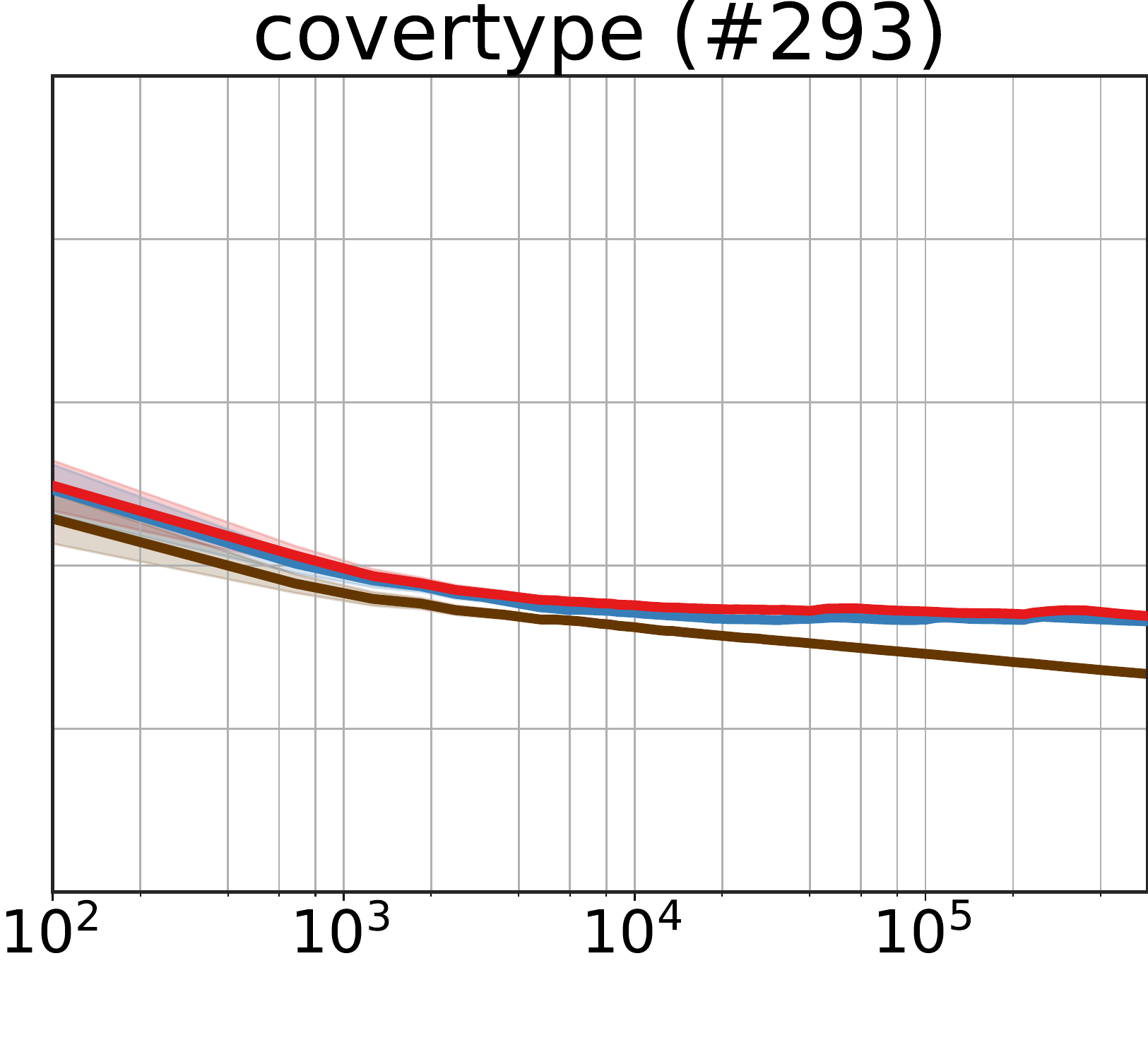}~~~~

    \medskip
    \hfill\includegraphics[width=0.45\textwidth]{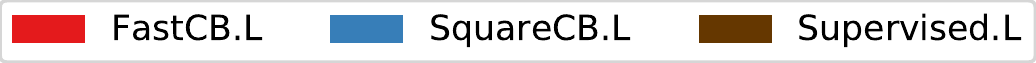}\hfill
  \end{centering}
  \caption{\emph{Top:} Head-to-head win-loss differences. Each entry indicates the statistically
    significant win-loss   difference between the row algorithm and the column
    algorithm. \emph{Top-Left:} All hyperparameters are optimized on each
    dataset. \emph{Top-Right:} Best fixed hyperparameter configuration
    across all datasets; only the oracle's learning rate is optimized per-dataset.
    \emph{Bottom:} Progressive validation results for representative
    datasets depicting significant wins for \fastcbl (left, center) and a loss (right).}
  \label{fig:experiments}
\end{figure}

\paragraph{Evaluation}
We evaluate the performance of each algorithm
using \emph{progressive validation} (PV) loss, defined as
$\pvloss = \frac{1}{T}\sum_{t=1}^{T}\ls_t(a_t)$ \citep{blum1999beating}. Following
\citet{bietti2018contextual}, we define a given algorithm as beating
another algorithm \emph{significantly} on a given dataset using an approximate $Z$-test. See
\pref{app:experiments} for details. For each pair $(a,b)$ of algorithms,
\pref{fig:experiments} (top row)
displays the number of datasets where $a$ beats
$b$ significantly, minus the number of datasets where $b$ beats $a$ significantly.

\paragraph{Results}
We
find (\pref{fig:experiments}, top row) that \mainalg with the logistic loss oracle (\fastcbl) has a positive win-loss difference against \squarecb with both logistic and square loss oracles (\squarecbl/\squarecbs),
indicating the strongest overall performance. This holds
both when hyperparameters are optimized on a per-dataset basis and
for the best global hyperparameter
configuration. \neurips{\looseness=-1}

Perhaps surprisingly, 
our results %
suggest that the largest gains
come from switching from the square loss oracle to the logistic loss
oracle (\squarecbs vs. \squarecbl), while the gains from switching from the original inverse gap
weighting strategy to our reweighted version (\squarecbl vs. \fastcbl)
are more
marginal. Inspecting the results in more detail, we find that when we compare \fastcbl and \squarecbl with hyperparameters 
optimized on a per-dataset basis, %
\fastcbl wins on 14/17 of the datasets in which either algorithm wins
significantly, and that all but two of
these 14 datasets have $\Lstar\leq{}0.2$. This suggests that the
reweighted inverse gap weighting strategy is indeed helpful when
$\Lstar$ is small. 
\pref{fig:experiments} (bottom row)
displays progressive validation performance for
\fastcbl and \squarecbl for three representative datasets which
illustrate this phenomenon.

The fact that \fastcbl does not strictly improve over \squarecbl on
every dataset, in spite of being very similar, might be attributed to
the fact that the constants in the per-round inequality
\pref{eq:per_round} are worse than those in the corresponding
inequality for \squarecbl, suggesting worse performance when
$\Lstar$ is not small. Thus, a fruitful future direction might be to
find a strategy with optimal constants for \pref{eq:per_round}. \neurips{\looseness=-1}

\arxiv{
\section{Related Work}
\label{sec:related}
First-order regret bounds have a long history throughout statistical
learning
\citep{vapnik1971uniform,panchenko2002some,srebro2010smoothness}, online
learning
\citep{freund1997decision,auer2002adaptive,PLG,cesa2007improved,luo2015achieving,koolen2015second,foster2015adaptive},
and bandits \citep{allenberg2006hannan,foster2016learning,agarwal2017open,lykouris2017small,allen2018make}. Below we highlight some of the
most relevant lines of work.

\paragraph{Statistical learning and plug-in classification}
Beginning with the work of \citet{vapnik1971uniform} for VC classes,
classical work in statistical learning
\citep{panchenko2002some,srebro2010smoothness} provides first-order
regret  (or, excess risk) bounds for \emph{empirical risk minimization} which, in our
setting, corresponds to the (typically intractable) policy
optimization problem
$\argmin_{\pi\in\Pi}\sum_{t=1}^{T}\ls_t(\pi(x_t))$.
These results are also sometimes referred to as relative deviation
bounds.

In the realizable setting (i.e., under \pref{ass:realizability}), the process of fitting a model $\fhat$ for the losses using
regression and then performing classification with the induced
classifier $\pi_{\fhat}$ is often referred to as \emph{plug-in
  classification}
\citep{yang1999minimax,audibert2007fast,devroye2013probabilistic}. While
these works establish worst-case optimal guarantees for plug-in
classifiers, first-order regret bounds are---to the best of our
knowledge---unexplored, and our observations regarding the
suboptimality of least-squares and optimality of \loglosst regression
are new. 

\paragraph{Bandits}
First-order regret bounds for multi-armed bandits appear in
\citet{allenberg2006hannan} (see also \citet{foster2016learning,bubeck2020first}), and
have been extended to the semi-bandit framework
\citep{neu2015first,lykouris2017small} and linear bandits
\citep{ito2020tight}. For contextual bandits, \citet{agarwal2017open}
show that many common algorithms fall short of achieving first-order
regret, and we are not aware of any optimal first-order algorithms outside the solution of
\citet{allen2018make}, even if one disregards efficiency or considers additional assumptions such as realizability.

On the technical side, \citet{bubeck2020first} provide first-order
regret bounds for Thompson sampling for the multi-armed bandit in the Bayesian
setting. Their approach takes advantage of a certain \emph{nonnegative \chisquared}
which is closely related to the \tridis we work with. Curiously,
their analysis uses this divergence to measure distance between
(posterior) distributions over actions, whereas we use the \tridis to
measure distance between regression functions. It would be interesting
to understand whether there are deeper (e.g., primal-dual) connections between these approaches.

\paragraph{Fast rates under margin/gap conditions}
Another line of work on plug-in classifiers aims for faster rates under
various margin assumptions, and---similar to our work---observes that least-squares can be
suboptimal in certain settings \citep{audibert2007fast}. Fast rates based on margin conditions
are distinct from first-order bounds (neither type of bound
implies the other in general), but it would be interesting to
understand their relationship more closely. Recent work \citep{foster2020instance}
extends these developments to contextual bandits and provides
logarithmic regret bounds based on similar gap/margin conditions. As in statistical learning, these types of guarantees are
incomparable to first-order regret bounds.

\paragraph{Heteroscedastic regression}
Our observations regarding suboptimality of least-squares for plug-in
classification are also closely related to regression with
heteroscedastic noise (\citet{carroll1982adapting}; \citet[][Chapter
6]{takeshi1985advanced}). Consider a regression setting where we receive
variables $\crl*{(x_i,y_i)}_{i=1}^{n}$ \iid, with
$y_i=\fstar(x_i)+\veps_i$ for some $\fstar\in\cF$, where
$\En\brk*{\veps_i\mid{}x_i}=0$, and our goal is to produce an estimator
such that the $L_1$-error $\En\abs{\fhat(x)-\fstar(x)}$ is small. In
the heteroscedastic model, the noise variance
$\sigma^{2}_x\ldef{}\En\brk{\veps_i^2\mid x_i=x}$ may vary as a
function of $x$. Using the same construction as
\pref{thm:least_squares_plugin}, one can show that standard
least-squares incurs error scaling with the worst-case variance
$\sup_x \sigma_x^2$, while, if the variances were known, weighted
least-squares with weights $w_x:=1/\sigma_x^2$ would yield error scaling
with the more favorable average variance $\EE\brk*{\sigma_x^2}$. Key to our results
is that for responses in $[0,B]$, we have $\EE\brk*{\sigma_x^2} \leq
B\cdot \EE\brk*{\fstar(x)}$ and, as we show, the logarithmic loss
achieves error scaling with the latter quantity \emph{without
  knowledge of the variances}. We mention in passing that regression with
heteroscedastic noise has found recent use in the context of
reinforcement learning with linear function
approximation \citep{zhou2020nearly,zhang2021variance}.

 }

\section{Discussion}
\label{sec:discussion}
We have given the first efficient algorithm with optimal
first-order regret for contextual bandits, resolving a variant of the open
problem posed by~\citet{agarwal2017open}. Let us 
briefly mention some
extensions. First, we believe that our techniques can also be used to
obtain first-order guarantees for stochastic
contextual bandits with an \emph{offline} \loglosst oracle~(à la
\citet{simchi2020bypassing})---albeit with a more technical analysis. As another
extension, in~\pref{app:extensions} we show how to use our method to efficiently obtain a
first-order regret bound when working with rewards rather than losses. Such a guarantee is useful when no policy accumulates
much reward, as is common in personalization applications. Several
other extensions appear to be straightforward, including working with infinite action
spaces~\citep{foster2020adapting}.%

We close with some directions for future work. Directly relevant
to our theoretical results is to continue the investigation into
adaptivity in contextual bandits and reinforcement
learning. More broadly, while triangular discrimination has been used in various
mathematics disciplines, we are not aware of many applications in
algorithm design. Are there other uses for the triangular
discrimination in machine learning?  We look forward to pursuing these
directions.

\neurips{\begin{ack}}
 \arxiv{\subsection*{Acknowledgements}}
 We thank Sivaraman Balakrishnan, John Langford, Zakaria Mhammedi, and
 Sasha Rakhlin for many helpful discussions. We also thank Sasha
 Rakhlin for providing Google Cloud credits used to run the
 experiments.
\neurips{\end{ack}}

\neurips{\newpage}

\bibliography{refs}

\newpage

\neurips{
\section*{Checklist}
\input{section_checklist}
}

\newpage

\appendix

\draft{
\section*{Notation/Conventions}
\begin{itemize}
\item Upper case section titles (i realized all my award winning papers have this lol)
\item $\K$: number of actions
\item Big-O: $\bigoh$
\item Indicator: $\indic$
\item Let's just say ``regression function'' and not ``regressor''.
\item \iid setting:
  \begin{itemize}
  \item Joint distribution $\cD$ over $(x,\ls)$.
  \item $\En_{\cD}$ and $\bbP_{\cD}$ for expectation and probability
    under this law.
    \item Suppress the $\cD$ subscript when this is clear from
      context.
    \item $\bbP\brk{\cdot}$, not $\bbP(\cdot)$
    \end{itemize}
  \item Divergences:
    \begin{itemize}
    \item $\Dkl{P}{Q}$, $\Dhel{P}{Q}$, $\Dchis{P}{Q}$, $\Dtri{P}{Q}$.
    \end{itemize}
\end{itemize}
}

\neurips{
\section{Additional Discussion}
\label{app:additional}
\input{appendix_additional}
}

\section{Proofs for Plug-In Classification Results (\pref{sec:plugin})}
\label{app:plugin}

\subsection{Proof of \pref{thm:least_squares_plugin}}
  \newcommand{\LhatLS}{\wh{L}_{\textup{\textsf{LS}}}}
  \newcommand{\aone}{a\ind{1}}
  \newcommand{\atwo}{a\ind{2}}
  \newcommand{\vepsn}{\veps_n}
  \newcommand{\mun}{\mu_n}
  \newcommand{\nun}{\nu_n}
  \newcommand{\pn}{p_n}
\leastsquaresplugin*

\begin{proof}%
  Let $\LhatLS(f) =
  \frac{1}{n}\sum_{t=1}^{n}\sum_{a\in\cA}(f(x_t,a)-\ls_t(a))^2$ be the
  empirical square loss, so that
  $\fhatls=\argmin_{f\in\cF}\LhatLS(f)$. We adopt the shorthand
  $\vepsn=1/n$ throughout the proof.
  \paragraph{Construction}
We define $\cX=\crl*{\xone,\xtwo}$ and $\cA=\crl*{\aone,\atwo}$, so that there are only two possible
contexts and actions.

The data-generating process for our construction has three parameters, $\mun$, $\nun$,
and $\pn$. We choose $\bbP_{\cD}(x=\xone)=1-\pn$, and define
$\fstar$ and the conditional loss distribution as follows:
\begin{itemize}
\item $\fstar(\xone,\aone)=\mun$ and $\fstar(\xone,\atwo)=\nun$, where
  $\mun<\nun$. We choose $\ls(\aone)\sim\Ber(\mun)\mid{}\xone$ and
  $\ls(\atwo)=\nun~\text{a.s.}\mid{}\xone$.
\item $\fstar(\xtwo,\aone)=\fstar(\xtwo,\atwo)=\tfrac{1}{2}$. We choose $\ls(\aone)\sim\Ber(\tfrac{1}{2})\mid{}\xtwo$ and
  $\ls(\atwo)=\tfrac{1}{2}~\text{a.s.}\mid{}\atwo$.
\end{itemize}
We take $\cF=\crl{\fstar,\ftil}$, where $\ftil$ will be fully
specified in the sequel, but is chosen to satisfy
$\ftil(x,\atwo)=\fstar(x,\atwo)$ for all $x$. This, combined with the
fact that $\ls(\atwo)$ is deterministic conditioned on $x$, means that
our analysis will only concern the realized outcomes for $\ls(\aone)$.

The high level idea for our construction is to set
$\pn,\mun\propto\vepsn=1/n$, which ensures that
{$\Lstar\leq(1-\pn)\mun + \pn\approxleq\frac{1}{n}$}, then show
that if we choose $\ftil(\cdot,\aone)\approx(\sqrt{\vepsn},0)$, we have $\fhatls=\ftil$
with constant probability. We then choose $\nun\approx\sqrt{\vepsn}/2$,
which implies that $\pi_{\ftil}(\xone)=\atwo\neq{}\pistar(\xone)$, and consequently
\[
L(\pihatls) = L(\pi_{\ftil}) \approxgeq{}(1-\pn)\cdot\fstar(\xone,\pi_{\ftil}(\xone))=(1-\pn)\cdot\nun
\approxgeq \sqrt{\vepsn}.
\]
We make this approach formal below.
\paragraph{Bad event}

Let $n_1$ and $n_2$ be the number of examples for which $x=\xone$ and
$x=\xtwo$. Let $n_1(0)$ and $n_1(1)$ be the number of examples for which $x=\xone$
and $\ls(\aone)=0$ or $\ls(\aone)=1$, respectively, and let $n_2(0)$ and $n_2(1)$ be
defined likewise. We restrict to $n\geq{}4$ going forward so that $\vepsn\leq1/4$.

Let $\muhat_1=\frac{1}{n_1}\sum_{i:x_i=\xone}\ls(\aone)$ (whenever $n_1>0$), and let $\muhat_2$
be defined likewise.

We prove the following proposition, which states that a certain
event that is unfavorable for the least-squares estimator occurs with
constant probability.
\begin{proposition}
  \label{prop:few_b_examples}
  Let $n\geq{}256$. Then if we set $\pn=\vepsn$ and $\mun=2^7\vepsn$, the
  following event holds with probability at least $1/10$.
  \begin{enumerate}
  \item $n_2=n_2(0)=1$, and in particular $\muhat_2=0$.\label{item:few_b_1}
  \item $n_1\geq\frac{3}{8}n$. \label{item:few_b_2}
  \item $\muhat_1 \leq{} \frac{3}{2}\mun$. \label{item:few_b_3}
  \end{enumerate}
\end{proposition}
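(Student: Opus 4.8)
The plan is to treat the event $n_2=n_2(0)=1$ as an anchor, observe that it automatically forces the bound $n_1\geq\tfrac38 n$, and then control the deviation $\muhat_1\leq\tfrac32\mun$ by a \emph{conditional} concentration argument; multiplying the anchor probability by the conditional success probability will deliver the claimed constant $1/10$. The recurring technical point is that, because $\pn=\vepsn=1/n$, the event $\{n_2=1\}$ pins down $n_1=n-1$, which decouples the three items.

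First I would compute the probability of the anchor event. Since each of the $n$ examples independently has context $\xtwo$ with probability $\pn=1/n$, the count $n_2$ is $\mathrm{Binomial}(n,1/n)$, so using $\ln(1-1/n)\geq-\tfrac{1}{n-1}$,
\[
\bbP\brk*{n_2=1}=(1-1/n)^{n-1}\geq e^{-1}.
\]
Conditioned on $n_2=1$, the unique $\xtwo$-example draws $\ls(\aone)\sim\Ber(\tfrac12)$ and hence equals $0$ (so that $n_2(0)=1$ and $\muhat_2=0$) with probability exactly $\tfrac12$, independently of everything occurring on the $\xone$-examples. Thus $\bbP\brk*{n_2=n_2(0)=1}\geq\tfrac{1}{2e}$. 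Moreover, on this event $n_1=n-1\geq\tfrac38 n$ for every $n\geq 2$, so the first item deterministically implies the second, and it remains only to intersect with the third.

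For the third item I would use the following conditional-independence observation: conditioned on $n_2=n_2(0)=1$, the count $n_1=n-1$ is fixed and the losses $\crl*{\ls_i(\aone):x_i=\xone}$ are still \iid $\Ber(\mun)$, since they are independent of the context assignment and of the realized loss on the lone $\xtwo$-example. With $\mun=2^7\vepsn=128/n$, this makes $n_1\muhat_1\sim\mathrm{Binomial}(n-1,\mun)$ with mean $\lambda=(n-1)\mun\geq 127$ for $n\geq 256$, and the bad event $\crl*{\muhat_1>\tfrac32\mun\}$ is exactly $\crl*{n_1\muhat_1>(1+\tfrac12)\lambda}$. A multiplicative Chernoff bound then gives $\bbP\brk*{\muhat_1>\tfrac32\mun\mid n_2=n_2(0)=1}\leq e^{-\lambda/10}<10^{-5}$, so
\[
\bbP\brk*{\text{all three items hold}}\geq\tfrac{1}{2e}\prn*{1-10^{-5}}>\tfrac{1}{10}.
\]
The one step warranting care is this conditional-independence claim: one must check that conditioning on the anchor event does not perturb the law of the $\xone$-losses, so that after conditioning one genuinely has a clean $\mathrm{Binomial}(n-1,\mun)$ to feed into Chernoff. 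The remaining ingredients—the $e^{-1}$ lower bound on $(1-1/n)^{n-1}$ and the Chernoff tail estimate—are routine.
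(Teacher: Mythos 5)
Your proof is correct, and it takes a cleaner route than the paper's own argument. The paper bounds the three items as separate events and applies a union bound: it lower bounds the probability of the anchor event by $e^{-1}/2$ (via the same computation you give), then spends a multiplicative Chernoff bound (\pref{lem:chernoff}) to show $n_1\geq\tfrac{3}{8}n$ with probability at least $1-e^{-3n/64}$, and a second Chernoff bound---applied conditionally on the realized context sequence and on the second event---to get $\wh{\mu}_1\leq\tfrac{3}{2}\mu_n$ with conditional probability at least $1-e^{-3}$, arriving at $e^{-1}/2-e^{-12}-e^{-3}\geq 1/10$. You instead notice that, with only two contexts, $n_2=1$ \emph{deterministically} forces $n_1=n-1\geq\tfrac{3}{8}n$, so the paper's second event requires no probabilistic treatment at all; you then replace the union bound by conditioning on the anchor event and multiplying probabilities. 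The conditional-independence step you flag as the delicate one is sound: the anchor event is measurable with respect to the context assignments and the loss of the lone $\xtwo$-example, which are jointly independent of the $\xone$-losses; to make it airtight, decompose the anchor event over which index $j$ carries $\xtwo$, note that in each cell the remaining losses are \iid $\Ber(\mu_n)$, and observe that this conditional law is the same in every cell, hence also conditionally on their union. The payoff of your structure is one concentration inequality instead of two, a sharper constant ($\tfrac{1}{2e}(1-10^{-5})\approx 0.18$ versus the paper's $\approx 0.13$), and no union bound; even the paper's weaker Chernoff constant ($e^{-\lambda/16}\leq e^{-7.9}$ in place of your $e^{-\lambda/10}$) would suffice in your last step. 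What the paper's route buys in exchange is that all of its conditioning is on the context sequence alone, so each concentration step is an off-the-shelf application of \pref{lem:chernoff} with no argument about conditional laws needed.
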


Going forward, we adopt the parameter setting in
\pref{prop:few_b_examples} and condition on the event in the
proposition, which we denote by $\scrE$. Note that this parameter setting ensures that
\[
\Lstar =(1-\vepsn)\fstar(\xone,\aone) + \vepsn\fstar(\xtwo,\aone) =
(1-\vepsn)\mun + \frac{\vepsn}{2} \leq{} 2^{8}\vepsn,
\]
as long as $\mun<\nun$.
\paragraph{Lower bound under the bad event}

Next, we observe that for both $f\in\cF$, since $f(x,\atwo)$
perfectly predicts $\ls(\atwo)$ for all $x$, we have
\[
\LhatLS(f) \equiv \frac{n_1}{n}(f(\xone,\aone)-\muhat_1)^{2} + \frac{n_2}{n}(f(\xtwo,\aone)-\muhat_2)^{2},
\]
up to additive noise that depends only on the realization of the
dataset, not on the function $f$ under consideration. Since our
argument only depends on the relative value of $\LhatLS$, we identify
$\LhatLS$ with this representation going forward. We first observe that
conditioned by \pref{prop:few_b_examples} (\pref{item:few_b_1}), we have $\muhat_2=0$, so
that
\[
\LhatLS(\fstar) \geq{} \frac{n_2}{n}(\fstar(\xtwo,\aone)-\muhat_2)^{2} =
\vepsn\cdot{}(\fstar(\xtwo,\aone))^{2} = \frac{\vepsn}{4}.
\]
Here we use that $n_2=1$ under the bad event and that $\vepsn=1/n$.
On the other hand, if we set $\ftil(\xtwo,\aone)=0$, we have
\begin{align*}
\LhatLS(\ftil) = \frac{n_1}{n}(\ftil(\xone,\aone)-\muhat_1)^2 &\leq{} 2(\ftil(\xone,\aone))^{2} +
                                                              2\muhat_1^{2} \\
  & \leq{}2(\ftil(\xone,\aone))^{2} + 2^{3}\mun^{2} \\ &\leq{} 2(\ftil(\xone,\aone))^{2} + 2^{17}\vepsn^{2},
\end{align*}
where we have used \pref{prop:few_b_examples}
(\pref{item:few_b_3}). Note that as long as $\vepsn<2^{-20}$, we have $2^{17}\vepsn^{2}<\vepsn/8$. If
this is satisfied, then by choosing $\ftil(\xone,\aone)=\sqrt{\vepsn/16}$, we have
\[
\LhatLS(\ftil) < \frac{\vepsn}{4} \leq{} \LhatLS(\fstar),
\]
and we conclude that $\fhatls=\ftil\neq\fstar$ whenever $\scrE$
occurs. 

To conclude, we set $\nun=\sqrt{\vepsn}/8$. Since
$\ftil(\xone,\atwo)=\nun$, we have
$\ftil(\xone,\atwo)<\ftil(\xone,\aone)$, so that
$\pi_{\ftil}(\xone)=\atwo\neq\pistar(\xone)$; this choice satisfies
$\mun<\nun$ as required as long as $\vepsn<2^{-20}$. Finally, we observe
that
\[
L(\pi_{\ftil}) - \Lstar = (1-\vepsn)(\nun-\mun) \geq{}
\frac{1}{2}(\sqrt{\vepsn}/8-2^{7}\vepsn) > 2^{-5}\sqrt{\vepsn},
\]
as long as $\vepsn<2^{-22}$.
\end{proof}

\begin{proof}[\pfref{prop:few_b_examples}]
  Let $\scrE_1$, $\scrE_2$, and $\scrE_3$ denote the respective events
  in \pref{prop:few_b_examples}. We lower bound their probabilities one
  by one.
  \paragraph{Event $\scrE_1$}
We calculate
  \[
    \bbP(n_2=1)=\sum_{i=1}^{n}\vepsn\cdot(1-\vepsn)^{n-1}=\frac{1}{1-\vepsn}(1-\vepsn)^{1/\vepsn}\geq{}e^{-1},
  \]
  where we have used that $(1-1/x)^x\geq{}e^{-1}(1-1/x)$ for
  $x\geq{}1$. Hence, since $\ls(\aone)\sim\Ber(\tfrac{1}{2})$ given $x\ind{2}$, $\scrE_1$ happens with probability at least
  $1-\delta_1$ for $\delta_1\ldef1-e^{-1}/2$.
  \paragraph{Event $\scrE_2$}
  We recall a standard multiplicative variant of the Chernoff bound.
  \begin{lemma}[Chernoff bound (e.g., \citet{boucheron2013concentration})]
    \label{lem:chernoff}
    Let $Y_i\sim\Ber(\mu)$ \iid. Then for any
    $x\in\brk*{0,1/2}$,
    \[
\bbP\prn*{\sum_{i=1}^{n}Y_i\geq{}(1+x)\mu{}n}\vee \bbP\prn*{\sum_{i=1}^{n}Y_i\leq(1-x)\mu{}n}\leq{}e^{-\frac{1}{4}x^{2}\mu{}n}.
    \]
  \end{lemma}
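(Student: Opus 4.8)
The plan is to prove the two tail bounds separately via the standard Chernoff (exponential Markov) method and then take their maximum. Writing $S_n=\sum_{i=1}^{n}Y_i$, the strategy for the upper tail is to observe that for any $\lambda>0$, Markov's inequality applied to $e^{\lambda S_n}$ together with independence gives
\[
\bbP\prn*{S_n\geq{}(1+x)\mu{}n}\leq{}e^{-\lambda(1+x)\mu{}n}\prod_{i=1}^{n}\En\brk*{e^{\lambda Y_i}}.
\]
Since each $Y_i\sim\Ber(\mu)$, I would bound the moment generating function by $\En\brk*{e^{\lambda Y_i}}=1+\mu(e^{\lambda}-1)\leq{}\exp\prn*{\mu(e^{\lambda}-1)}$, so that the right-hand side is at most $\exp\prn*{-\lambda(1+x)\mu{}n+\mu{}n(e^{\lambda}-1)}$. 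Optimizing the exponent over $\lambda$ yields the choice $\lambda=\log(1+x)$ and the classical form
\[
\bbP\prn*{S_n\geq{}(1+x)\mu{}n}\leq{}\exp\prn*{-\mu{}n\brk*{(1+x)\log(1+x)-x}}.
\]

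It then remains to verify the elementary inequality $(1+x)\log(1+x)-x\geq{}\tfrac14 x^2$ for $x\in\brk*{0,1/2}$, which is exactly where the slack in the claimed constant $1/4$ (as opposed to the sharper $1/3$) makes life easy. I would prove it by noting that both sides vanish at $x=0$ and comparing derivatives, which reduces the claim to $\log(1+x)\geq{}x/2$; this in turn follows from one more derivative comparison, since $\tfrac{d}{dx}\log(1+x)=\tfrac{1}{1+x}\geq\tfrac23>\tfrac12$ throughout $\brk*{0,1/2}$, while the two functions again agree at the origin.

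For the lower tail I would run the symmetric argument with $\lambda<0$ (equivalently, apply the upper-tail computation to $1-Y_i\sim\Ber(1-\mu)$), producing $\bbP\prn*{S_n\leq{}(1-x)\mu{}n}\leq{}\exp\prn*{-\mu{}n\brk*{(1-x)\log(1-x)+x}}$, and then reduce to $(1-x)\log(1-x)+x\geq{}\tfrac14 x^2$ on $\brk*{0,1/2}$ via the analogous derivative comparison $-\log(1-x)\geq{}x/2$, which holds since $\tfrac{1}{1-x}\geq{}1>\tfrac12$. Taking the maximum of the two resulting tail probabilities gives the stated bound. I do not expect a genuine obstacle here: the Chernoff method is entirely routine, and the only mildly delicate part is confirming the two one-dimensional inequalities on $\brk*{0,1/2}$; the generous constant $1/4$ makes both derivative comparisons immediate, so no careful optimization of the exponent is required.
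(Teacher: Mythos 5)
Your proof is correct, and since the paper never proves this lemma at all---it simply cites \citet{boucheron2013concentration}---your Cram\'er--Chernoff derivation is precisely the standard textbook argument that the citation points to: exponential Markov, the MGF bound $1+\mu(e^{\lambda}-1)\leq e^{\mu(e^{\lambda}-1)}$, the optimal choices $\lambda=\log(1\pm x)$, and the elementary comparisons $(1+x)\log(1+x)-x\geq x^{2}/4$ and $(1-x)\log(1-x)+x\geq x^{2}/4$ on $[0,1/2]$, all of which check out. One small caveat: your parenthetical claim that the lower tail follows ``equivalently'' by applying the upper-tail computation to $1-Y_i\sim\Ber(1-\mu)$ is not accurate---that substitution rescales the relative deviation to $x'=x\mu/(1-\mu)$ and yields an exponent of order $x^{2}\mu^{2}n/(1-\mu)$, which is strictly weaker than $x^{2}\mu n$ whenever $\mu<1/2$ (exactly the regime $\mu\propto 1/n$ in which the paper invokes the lemma)---but this remark is never used, since your direct $\lambda<0$ argument correctly produces the stated exponent $(1-x)\log(1-x)+x$.
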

As long as $\pn=1/n\leq{}1/4$, \pref{lem:chernoff} implies that
$n_1\geq{}\frac{3n}{8}$ with probability at least
$1-e^{-\frac{3n}{64}}\rdef{}1-\delta_2$, so that event $\scrE_2$ holds.
\paragraph{Event $\scrE_3$}

We observe that conditioned on the
realization of $x_1,\ldots,x_n$, \pref{lem:chernoff} implies that
\[
\muhat_1 \leq{} \frac{3}{2}\mun
\]
with probability at least $1-e^{-\frac{1}{16}\mun{}n_1}$. Conditioned
on $\scrE_2$, this probability is at least
$1-e^{-\frac{3}{128}\mun{}n}$. Since $\mun=128/n$, which is admissible
whenever $n\geq{}256$, we conclude that $\scrE_3$ holds with
probability at least $1-e^{-3}\rdef{}1-\delta_3$ given $\scrE_2$.

\paragraph{Wrapping up}
Taking a union bound, we have that $\scrE=\bigcup_{i=1}^{3}\scrE_i$
occurs with probability at least $1-\sum_{i=1}^{3}\delta_i \geq e^{-1}/2-e^{-12}-e^{-3}\geq{}1/10$.

\end{proof}

\subsection{Proof of \pref{thm:plugin_main}}
\newcommand{\err}{\mathrm{err}}

\subsubsection{Overview of Results}

Recall that we work in the plug-in classification setting of
\pref{sec:plugin}, where $\cX$ is the feature/context space, $\cA$ is the
label/action space, and $\cD$ is the joint distribution over
context-loss pairs $(x,\ls)$. We take a class of regression functions
$\cF\subseteq(\cX\times\cA\to\brk*{0,1})$ as a given and make the
following realizability assumption.
\begin{assumption}
\label{assum:plugin_realizability}
Define $f^\star(x,a) = \EE_{\Dcal}[\ls(a) \mid x]$. We assume
$f^\star \in \Fcal$.
\end{assumption}

Under realizability, the optimal classifier is $\pi^\star(x)\ldef
\argmin_{a \in \Acal} f^\star(x,a)$, and we have $L(\pi) = \EE\brk{
f^\star(x,\pi(x))}$. Motivated by realizability, the \plugin approach
to classification finds and estimator $\hat{f}\in\cF$ and returns the induced classifier $\hat{\pi}(x)\ldef
\argmin_{a \in \Acal} \hat{f}(x,a)$. In this section, we estimate the
losses using the following \loglosst regression problem.
\begin{align}
\label{eq:plugin_erm}
\fhatkl \gets \argmin_{f \in \Fcal} \frac{1}{n}\sum_{i=1}^n \sum_{a \in \Acal} \ls_i(a) \log(1/f(x_i,a)) + (1-\ls_i(a))\log(1/(1-f(x_i,a))).
\end{align}

For the resulting classifier $\pihatkl\ldef\pi_{\fhatkl}$, we prove the following theorem.

\pluginmain*

\paragraph{Multiclass classification}
We also provide a refinement of \pref{thm:plugin_main} for the important special case of
multiclass classification. Here, rather than observing a cost function $\ls
\in [0,1]^\K{}$ we simply observe a label $y \in \cA$ and the goal is to
predict the correct label. Formally, the distribution $\Dcal$ is
supported on $\Xcal \times \cA$ and we measure the error of a
classifier as $\err(\pi) := \PP_{\cD}\brk{\pi(x) \ne y}$. This can be seen as a
special case of cost-sensitive classification by defining loss
function $\ls(a)=\indic\crl*{a\neq{}y}$, and the realizability assumption is
as before, so that $f^\star(x,a) = \PP_{\cD}\brk{a \neq y \mid x}$. 

In this setting, rather than reducing to Bernoulli MLE, it is more
natural to reduce to multinomial MLE. Since our function class is
designed to predict the probability that a given action is
\emph{wrong} (that is, $\PP_{\cD}\brk{y = a \mid
  x } = 1 -f^\star(x,a)$), the multinomial MLE problem is
\begin{align*}
\fhatkl\gets \argmax_{f \in \Fcal} \frac{1}{n} \sum_{i=1}^n \log(1 - f(x_i,y_i)).
\end{align*}
The resulting policy is $\pihatkl := \argmin_{a} \fhatkl(x,a)$, for
which we establish the following guarantee.

\begin{theorem}
\label{thm:multinomial_plugin}
Let $\delta \in (0,1)$ and consider the multiclass classification setting
under~\pref{assum:plugin_realizability}. Then with probability at least $1-\delta$,
\begin{align*}
\err(\pihatkl) - \err(\pi^\star) \leq 8\sqrt{\frac{\err(\pi^\star)
    \cdot 2\log(|\Fcal|/\delta)}{n}} + 34\frac{\log(|\Fcal|/\delta)}{n}.
\end{align*}
\end{theorem}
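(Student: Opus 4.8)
The plan is to reuse the triangular-discrimination machinery developed for \pref{thm:plugin_main}, but to replace the per-action Bernoulli estimation (which is responsible for the factor of $\K{}$ there) with a single \emph{categorical} estimation step. First I would record that the multiclass setting is the special case $\ls(a)=\indic\crl*{a\neq{}y}$ of cost-sensitive classification, so that $f^\star(x,a)=\PP_\cD\brk{y\neq a\mid x}$, $L(\pi)=\err(\pi)$, and $L^\star=\err(\pi^\star)$. Consequently the regret decomposition of \pref{lem:triangle_selfbounding} applies verbatim to $f=\fhatkl$ and gives
\[
\err(\pihatkl)-\err(\pi^\star)\leq 8\prn*{\err(\pi^\star)\cdot\En_\cD\brk*{\Dtri{\fstar(x,\cdot)}{\fhatkl(x,\cdot)}}}^{1/2}+17\,\En_\cD\brk*{\Dtri{\fstar(x,\cdot)}{\fhatkl(x,\cdot)}}.
\]
Matching this against the target constants ($8\sqrt{2}$ and $34=17\cdot 2$), it suffices to show that with probability $1-\delta$ one has $\En_\cD\brk{\Dtri{\fstar(x,\cdot)}{\fhatkl(x,\cdot)}}\leq 2\log(\abs{\Fcal}/\delta)/n$, with \emph{no} dependence on $\K{}$.

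The heart of the argument is to control this expected triangular discrimination using the fact that $y$ is a single categorical draw. Writing $p^\star_a:=1-\fstar(x,a)=\PP\brk{y=a\mid x}$ and $\hat{p}_a:=1-\fhatkl(x,a)$ for the induced conditional distributions over $\cA$, I would first establish the pointwise inequality
\[
\Dtri{\fstar(x,\cdot)}{\fhatkl(x,\cdot)}=\sum_{a}\frac{(p^\star_a-\hat{p}_a)^2}{(1-p^\star_a)+(1-\hat{p}_a)}\leq 2\,\Dhelshort^{2}(p^\star,\hat{p}),
\]
the multinomial analogue of the Pinsker-type bound \pref{eq:triangle_kl}. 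The key point---and the source of the improvement over the cost-sensitive bound---is that this inequality is \emph{false} action-by-action but becomes true once the simplex constraints $\sum_a p^\star_a=\sum_a\hat{p}_a=1$ are invoked: an estimation error at the dominant, likely-correct action must be compensated by errors at the remaining actions, whose square-root discrepancies make the categorical Hellinger distance large enough to absorb the loss-scaled triangular discrimination without a factor of $\K{}$. I would prove it termwise by writing $(p_a-\hat{p}_a)^2=(\sqrt{p_a}-\sqrt{\hat{p}_a})^2(\sqrt{p_a}+\sqrt{\hat{p}_a})^2$, bounding $(\sqrt{p_a}+\sqrt{\hat{p}_a})^2\leq 2(p_a+\hat{p}_a)$, and summing against the constraint.

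To finish, I would note that $\fhatkl$ is exactly the maximum-likelihood estimator for the categorical conditional law of $y$ given $x$, and invoke the standard MLE concentration bound for a finite, realizable class. This follows from the one-sided exponential inequality $\En_{y\sim p^\star}\brk{\sqrt{p_f(y\mid x)/p^\star(y\mid x)}\mid x}=1-\tfrac12\Dhelshort^{2}(p^\star,p_f)$, Markov's inequality, and a union bound over $\Fcal$, yielding $\En_\cD\brk{\Dhelshort^{2}(p^\star(\cdot\mid x),\hat{p}(\cdot\mid x))}\lesssim\log(\abs{\Fcal}/\delta)/n$; crucially, because $y$ is a single draw the bound carries no factor of $\K{}$, in sharp contrast to the cost-sensitive estimator \pref{eq:plugin_erm}, which decomposes into $\K{}$ independent Bernoulli problems. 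Chaining the three displays gives $\En_\cD\brk{\Dtri{\fstar(x,\cdot)}{\fhatkl(x,\cdot)}}\leq 2\log(\abs{\Fcal}/\delta)/n$ and hence the theorem.

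The step I expect to be the main obstacle is obtaining this $\K{}$-free control with the precise constant $2$. Two things must line up: the constrained Hellinger inequality of the second step (which is where the simplex structure is genuinely used and is the conceptual novelty relative to the cost-sensitive analysis), and the categorical MLE bound of the third step. The argument must be routed through the \emph{bounded} Hellinger distance rather than the KL divergence, since the likelihood ratios $p^\star/\hat{p}$ may be unbounded---the class permits $\fhatkl(x,a)$ arbitrarily close to $1$, i.e. $\hat{p}_a$ close to $0$---so the KL divergence does not concentrate. Tracking the constants in the MLE step and verifying that the Hellinger-based inequality is tight enough to feed \pref{lem:triangle_selfbounding} with the stated constants is the delicate part.
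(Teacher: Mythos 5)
Your high-level strategy---treat multiclass as cost-sensitive classification with $\ls(a)=\indic\crl{a\neq y}$, control a triangular-discrimination-type error for the categorical MLE via the multinomial Hellinger distance, and get a $\K{}$-free rate because $y$ is a single draw---is the same skeleton as the paper's, and your MLE concentration step is essentially the paper's argument (Bhattacharyya coefficient, Markov, union bound; note that with the paper's $\tfrac12$-normalized Hellinger distance the identity is $\En_{y\sim p^\star}\brk{\sqrt{p_f(y\mid x)/p^\star(y\mid x)}\mid x}=1-\Dhelshort^2(p^\star\dmid{}p_f)$, not $1-\tfrac12\Dhelshort^2$, which only works in your favor). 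The structural difference is where Hellinger enters. You keep \pref{lem:triangle_selfbounding} as a black box, which forces you to bound the \emph{full sum} $\sum_a (p^\star_a-\hat p_a)^2/\prn{(1-p^\star_a)+(1-\hat p_a)}$ by $\Dhelshort^2$. The paper never forms this sum: it re-runs the proof of \pref{lem:triangle_selfbounding} and replaces the ``sum over all actions'' step by \pref{prop:hellinger_mult_to_ber}, a \emph{max}-over-actions inequality (proved by data processing plus \pref{prop:hellinger_chi_squared}) applied only to the two policy-evaluated actions $\pihatkl(x)$ and $\pistar(x)$.

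The genuine gap is your central claimed inequality $\sum_a\frac{(p^\star_a-\hat p_a)^2}{(1-p^\star_a)+(1-\hat p_a)}\leq 2\,\Dhelshort^2(p^\star\dmid{}\hat p)$: it is false with constant $2$, and indeed with any constant below $4$. Counterexample with two actions: take $p^\star=(1-s,\,s)$ and $\hat p=(1-cs,\,cs)$ with $c\in(0,1)$ fixed and $s\to 0$. The left side is $\sim s(1-c)^2/(1+c)$ while $\Dhelshort^2\sim\tfrac{s}{2}(1-\sqrt{c})^2$, so the ratio tends to $2(1+\sqrt{c})^2/(1+c)$, which exceeds $2$ for every $c\in(0,1)$ and tends to $4$ as $c\to 1$. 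Your proposed termwise proof breaks exactly at the dominant action: the factor $(\sqrt{p_a}+\sqrt{\hat p_a})^2/(2-p_a-\hat p_a)$ is at most $2$ only when $p_a+\hat p_a\leq 1$ and is unbounded as $p_a+\hat p_a\to 2$, and ``summing against the simplex constraint'' does not rescue it. The correct repair is to make your compensation intuition quantitative: for the unique action $a_0$ with $p_{a_0}+\hat p_{a_0}>1$, write $p_{a_0}-\hat p_{a_0}=-\sum_{a\neq a_0}(p_a-\hat p_a)$ and apply Cauchy--Schwarz, which bounds that term alone by $4\Dhelshort^2$; the remaining terms contribute at most another $4\Dhelshort^2$, giving the sum inequality with constant $8$ (the optimal constant is $4$, already attained by the single dominant term). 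Consequently your chain proves a bound of the stated \emph{form}, but with constants roughly twice the theorem's (at best around $16\sqrt{\mathrm{err}(\pistar)\log(\abs{\Fcal}/\delta)/n}+68\log(\abs{\Fcal}/\delta)/n$ versus the stated $8\sqrt{2\,\mathrm{err}(\pistar)\log(\abs{\Fcal}/\delta)/n}+34\log(\abs{\Fcal}/\delta)/n$). In fairness, the paper's own sketch writes a factor $2$ where its \pref{prop:hellinger_mult_to_ber} supplies $4$, so its stated constants are not fully accounted for either; but your route, as written, rests on a key inequality that is false and whose proposed proof fails at the decisive step.
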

Compared to \pref{thm:plugin_main}, we see that by working in the simpler
multiclass classification setting, we can remove the dependence
on $A$ from the theorem.

\subsubsection{Preliminaries}
For discrete distributions $p,q \in \Delta_\K$, the Hellinger distance is defined as \[\Dhelshort^2(p\dmid{}q)
= \frac{1}{2}\sum_{a}(\sqrt{p_a} - \sqrt{q_a})^2.\] For scalars $p,q \in
[0,1]$ we overload notation and interpret $\Dhelshort^2(p\dmid{}q) \equiv \Dhelshort^2( (p,1-p) \dmid{} (q,1-q))$ as the Hellinger divergence between the implied Bernoulli distributions.
We similarly overload $\Dtri{p}{q}\equiv\Dtri{(p,1-p)}{(q,1-q)}$ as
the Bernoulli triangular discrimination when given scalar arguments.

The following useful result relates the Hellinger distance to the \tridis.
\begin{proposition}
\label{prop:hellinger_chi_squared}
For all $p,q \in [0,1]$, we have
\begin{align*}
\Dhelshort^2(p\dmid{}q) \geq \frac{1}{4}\Dtri{p}{q} \geq \frac{1}{4}\frac{(p-q)^2}{(p+q)}.
\end{align*}
\end{proposition}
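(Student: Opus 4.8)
The plan is to reduce the whole statement to a single per-coordinate inequality and then sum over the two Bernoulli coordinates. Recall that for scalar arguments both divergences are interpreted over the two-point distributions $(p,1-p)$ and $(q,1-q)$, with $\Dhelshort^2(p\dmid q) = \tfrac12\big[(\sqrt{p}-\sqrt{q})^2 + (\sqrt{1-p}-\sqrt{1-q})^2\big]$. So it suffices to establish, for every pair of nonnegative reals $a,b$ (not both zero), the elementary bound
\[
\frac{(a-b)^2}{a+b} \;\leq\; 2\,(\sqrt{a}-\sqrt{b})^2.
\]
First I would prove this by factoring $(a-b)^2 = (\sqrt{a}-\sqrt{b})^2(\sqrt{a}+\sqrt{b})^2$ and observing that
\[
\frac{(\sqrt{a}+\sqrt{b})^2}{a+b} \;=\; 1 + \frac{2\sqrt{ab}}{a+b} \;\leq\; 2,
\]
where the final step is AM--GM, i.e. $2\sqrt{ab}\leq a+b$.

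For the first inequality of the proposition, I would apply this bound once with $(a,b)=(p,q)$ and once with $(a,b)=(1-p,1-q)$, then add. Using that $(1-p)-(1-q) = -(p-q)$ and $(1-p)+(1-q) = 2-p-q$, the two left-hand sides sum exactly to $\Dtri{p}{q} = \frac{(p-q)^2}{p+q} + \frac{(p-q)^2}{2-p-q}$, while the two right-hand sides sum to $2\big[(\sqrt{p}-\sqrt{q})^2 + (\sqrt{1-p}-\sqrt{1-q})^2\big] = 4\,\Dhelshort^2(p\dmid q)$. Rearranging yields $\Dhelshort^2(p\dmid q) \geq \tfrac14\,\Dtri{p}{q}$.

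The second inequality is then immediate: since $\Dtri{p}{q}$ is a sum of two nonnegative terms, discarding the $\frac{(p-q)^2}{2-p-q}$ term gives $\Dtri{p}{q} \geq \frac{(p-q)^2}{p+q}$, and multiplying through by $\tfrac14$ completes the chain. There is no genuine obstacle here; the only ingredient that is not pure bookkeeping is the factorization $(a-b)^2 = (\sqrt{a}-\sqrt{b})^2(\sqrt{a}+\sqrt{b})^2$, which is precisely what converts each triangular-discrimination term into a Hellinger term times a factor bounded by $2$.
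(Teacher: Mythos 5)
Your proof is correct and follows essentially the same route as the paper's: both rest on the factorization $(a-b)^2 = (\sqrt{a}-\sqrt{b})^2(\sqrt{a}+\sqrt{b})^2$ together with the AM--GM bound $(\sqrt{a}+\sqrt{b})^2 \leq 2(a+b)$, applied coordinate-wise to the two Bernoulli terms. The only difference is presentational---you sum the two per-coordinate inequalities and rearrange, while the paper lower-bounds each Hellinger term directly---so there is nothing substantive to change.
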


\subsubsection{Proof of \pref{thm:plugin_main} and \pref{thm:multinomial_plugin}}
We focus on proving~\pref{thm:plugin_main} and provide a sketch
for~\pref{thm:multinomial_plugin}, which is quite similar. For the
former, the core of the argument is a generalization guarantee for
$\fhatkl$.
\begin{theorem}
\label{thm:plugin_chi_squared}
Under the conditions of~\pref{thm:plugin_main},  with probability at
least $1-\delta$, we have
\begin{align}
\EE_{\Dcal} \sbr{ \sum_{a \in \Acal} \frac{(\fhatkl{}(x,a) -
  f^\star(x,a))^2}{\fhatkl{}(x,a) + f^\star(x,a)}} \leq \frac{4
  \K{}\rbr{ \log |\Fcal| + \log(\K{}/\delta)}}{n}.
  \label{eq:tridis_generalization}
\end{align}
\end{theorem}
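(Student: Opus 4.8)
The plan is to pass from the triangular discrimination to the squared Hellinger distance via \pref{prop:hellinger_chi_squared}, and then to control the expected squared Hellinger distance between $\fhatkl$ and $\fstar$ by a standard maximum-likelihood (exponential-moment) argument, carried out one action at a time and combined with a union bound over actions. Concretely, for Bernoulli arguments $p,q\in[0,1]$ we have $\frac{(p-q)^2}{p+q}\le\Dtri{p}{q}\le 4\,\Dhelshort^2\prn*{p\dmid q}$, where the right inequality is exactly \pref{prop:hellinger_chi_squared}; applying this coordinate-wise, the quantity on the left-hand side of \pref{eq:tridis_generalization} is at most $4\,\En_{\cD}\brk*{\sum_{a}\Dhelshort^2\prn*{\fstar(x,a)\dmid\fhatkl(x,a)}}$. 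It therefore suffices to prove that, with probability at least $1-\delta$,
\begin{equation}
  \En_{\cD}\brk*{\sum_{a\in\cA}\Dhelshort^2\prn*{\fstar(x,a)\dmid\fhatkl(x,a)}} \le \frac{\K\prn*{\log\abs{\cF}+\log(\K/\delta)}}{n}, \label{eq:plan-hellinger}
\end{equation}
and multiply through by $4$.

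The heart of the argument is an exponential-moment bound for a single action. Fix $a\in\cA$ and $f\in\cF$, and set $W(f;x,\ls)\ldef\logloss(f(x,a),\ls(a))-\logloss(\fstar(x,a),\ls(a))$. Because $\logloss(\cdot,\resp)$ is affine in the target $\resp$, the map $\resp\mapsto\exp\prn*{-\tfrac{1}{2}W(f;x,\resp)}$ is convex on $[0,1]$. Upper-bounding this convex map by its chord and taking the conditional expectation over $\ls(a)\in[0,1]$ (whose mean is $\fstar(x,a)$ under \pref{assum:plugin_realizability}) yields the Hellinger-affinity bound
\[
  \En\brk[\big]{\exp(-\tfrac{1}{2}W(f;x,\ls))\mid x}\le\sqrt{\fstar(x,a)f(x,a)}+\sqrt{(1-\fstar(x,a))(1-f(x,a))}=1-\Dhelshort^2\prn*{\fstar(x,a)\dmid f(x,a)}.
\]
For binary losses this is the familiar Bernoulli identity; the chord step is precisely what extends it to cost-sensitive losses valued in $[0,1]$.

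Next I would tensorize over the \iid sample. By independence, $\En\brk*{\exp\prn*{-\tfrac{1}{2}\sum_{i}W(f;x_i,\ls_i)}}\le\prn*{1-\En_{\cD}\brk*{\Dhelshort^2(\fstar(x,a)\dmid f(x,a))}}^n\le\exp\prn*{-n\,\En_{\cD}\brk*{\Dhelshort^2(\fstar(x,a)\dmid f(x,a))}}$. A Chernoff/Markov argument applied to the nonnegative variable $\exp\prn*{-\tfrac{1}{2}\sum_i W(f;x_i,\ls_i)+n\,\En_{\cD}\brk*{\Dhelshort^2(\cdot)}}$ (whose mean is at most $1$), together with a union bound over the $\abs{\cF}$ functions and the $\K$ actions, shows that with probability at least $1-\delta$, simultaneously for all $f\in\cF$ and all $a$,
\[
  n\,\En_{\cD}\brk*{\Dhelshort^2\prn*{\fstar(x,a)\dmid f(x,a)}}\le\log(\K\abs{\cF}/\delta)+\tfrac{1}{2}\sum_{i=1}^{n}W(f;x_i,\ls_i).
\]
The key subtlety is that retaining the affinity in the form $1-\Dhelshort^2$ (rather than weakening it to $e^{-\Dhelshort^2}$) before tensorizing is what causes the \emph{population} expectation $\En_{\cD}$ to appear on the left, rather than the empirical average over $x_1,\dots,x_n$.

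Finally I would sum this per-action inequality over $a\in\cA$ and specialize to $f=\fhatkl$. By the definition of $\fhatkl$ as the empirical \loglosst minimizer in \pref{eq:plugin_erm}, the data-dependent term $\sum_{i}\sum_{a}W(\fhatkl;x_i,\ls_i)$ is at most $0$, so it drops out and \pref{eq:plan-hellinger} follows. I expect the main obstacle to be the single-action exponential-moment computation for $[0,1]$-valued losses: the convexity/chord argument is what lets the Bernoulli affinity identity survive beyond binary targets, and the care in keeping $1-\Dhelshort^2$ is what delivers an out-of-sample rather than in-sample bound. The multiclass refinement \pref{thm:multinomial_plugin} would follow the same template with the multinomial likelihood replacing the per-action Bernoulli likelihood, which collapses the sum over actions and removes the leading factor of $\K$.
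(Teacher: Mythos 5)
Your proposal is correct and follows essentially the same route as the paper's proof: the same Bernoulli-randomization/chord argument (exploiting that $\logloss(\pred,\cdot)$ is affine in the target) to reduce $[0,1]$-valued losses to the Hellinger affinity $1-\Dhelshort^2$, the same tensorization that keeps the \emph{population} Hellinger distance on the left, the same use of the ERM property of $\fhatkl$ to discard the empirical term, and the same final conversion via \pref{prop:hellinger_chi_squared}. The only difference is cosmetic: you obtain uniformity over $\cF$ by a direct Markov-plus-union-bound over the $\abs{\cF}\cdot\K$ pairs, whereas the paper packages the same step as a symmetrization/aggregation inequality using $\max_f g(f)\leq \log\sum_{f}\exp(g(f))$, and both yield the identical $\log\abs{\cF}+\log(\K/\delta)$ dependence.
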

\pref{thm:plugin_chi_squared} builds on classical convergence results for maximum-likelihood estimators in
well-specified settings, which provide bounds of the form \[
  \EE_{\cD}\brk*{
\Dhelshort^2(\fhatkl{}(x,a) \dmid{} f^\star(x,a))} \leq \bigoh\prn*{\frac{\log
(|\Fcal|/\delta)}{n}}
\] for
any fixed action~\citep[cf.][]{Sara00,zhang2006from}. 
~\pref{thm:plugin_chi_squared} follows quickly from this
classical analysis by applying \pref{prop:hellinger_chi_squared},
which shows that the Hellinger divergence
between Bernoulli distributions upper bounds the \tridis that appears
on the \lhs of \pref{eq:tridis_generalization}.

\pref{thm:plugin_main} immediately follows by combining
\pref{thm:plugin_chi_squared} with the refined Cauchy-Schwarz lemma
(\pref{lem:triangle_selfbounding}) which we restate and prove here.
\triangle*
\begin{proof}[\pfref{lem:triangle_selfbounding}]
  Let $f\in\cF$ be fixed. We first state a simple technical lemma. 
\begin{lemma}
\label{lem:fhat_to_fstar}
For any function $f\in\cF$ and policy $\pi: \Xcal \to \Acal$,
\begin{align*}
\EE_{\cD}\brk{\fstar(x,\pi(x))+f(x,\pi(x))} \leq \En_{\cD}\brk*{\Dtri{\fstar(x,\cdot)}{f(x,\cdot)}} + 4 L(\pi) .
\end{align*}
\end{lemma}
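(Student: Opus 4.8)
The plan is to reduce the claimed inequality to an elementary scalar computation applied pointwise in $x$, so that no concentration or delicate estimation is involved. First I would invoke realizability (\pref{assum:plugin_realizability}) to rewrite the loss as $L(\pi) = \En_{\cD}\brk*{\ls(\pi(x))} = \En_{\cD}\brk*{\fstar(x,\pi(x))}$, using that $\pi(x)$ depends only on $x$ together with $\En\brk*{\ls(a)\mid{}x} = \fstar(x,a)$. After this substitution, and taking expectations over $x\sim\cD$, it suffices to prove the pointwise bound
\begin{equation}
\label{eq:plan_pointwise}
\fstar(x,\pi(x)) + f(x,\pi(x)) \leq \Dtri{\fstar(x,\cdot)}{f(x,\cdot)} + 4\,\fstar(x,\pi(x))
\end{equation}
for every fixed context $x$.

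Next I would exploit that every summand in the definition \pref{eq:tridis_def} of the triangular discrimination is nonnegative, so that retaining only the term indexed by $a=\pi(x)$ already yields the lower bound $\Dtri{\fstar(x,\cdot)}{f(x,\cdot)} \geq \frac{(\fstar(x,\pi(x)) - f(x,\pi(x)))^{2}}{\fstar(x,\pi(x)) + f(x,\pi(x))}$. Writing $p = \fstar(x,\pi(x))$ and $q = f(x,\pi(x))$, this turns \pref{eq:plan_pointwise} into the purely scalar claim
\[
p + q \leq \frac{(p-q)^{2}}{p+q} + 4p, \qquad p,q\in\brk*{0,1}.
\]

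Finally I would settle this scalar inequality by clearing the positive denominator: it is equivalent to $(q - 3p)(p+q)\leq (p-q)^{2}$, and expanding both sides makes every mixed and quadratic term cancel except for the residual $-4p^{2}\leq 0$, which always holds; the degenerate case $p+q=0$ forces $p=q=0$ and is trivial. Integrating the pointwise bound against $\cD$ and substituting $\En_{\cD}\brk*{\fstar(x,\pi(x))} = L(\pi)$ then gives the lemma. I do not expect a genuine obstacle here, since this is essentially a one-line self-bounding manipulation; the only points warranting care are the realizability rewriting of $L(\pi)$ and the observation that keeping just the single action $a=\pi(x)$ in the triangular discrimination already suffices, so that no cross-action interaction must be controlled.
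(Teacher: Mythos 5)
Your proof is correct, and it reaches the lemma by a more elementary route than the paper's. The paper works in expectation: writing $\gamma(x,a) = \fstar(x,a) - f(x,a)$ and $s(x,a) = \fstar(x,a) + f(x,a)$, it first bounds $\En_{\cD}\brk*{s(x,\pi(x))} \leq \En_{\cD}\abs*{\gamma(x,\pi(x))} + 2L(\pi)$ via the triangle inequality, then applies AM-GM in the self-bounding form $\abs*{\gamma} \leq \tfrac{1}{2}s + \tfrac{1}{2}\gamma^2/s$, upper bounds the single-action ratio $\gamma(x,\pi(x))^2/s(x,\pi(x))$ by the sum over all actions (i.e., by $\Dtri{\fstar(x,\cdot)}{f(x,\cdot)}$), and finally rearranges the resulting $\tfrac{1}{2}\En_{\cD}\brk*{s(x,\pi(x))}$ term across the inequality. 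Your argument compresses all of this into the single pointwise scalar claim $p + q \leq (p-q)^2/(p+q) + 4p$, verified by clearing the denominator; this is exactly the inequality that the paper's triangle-inequality-plus-AM-GM chain yields after rearrangement (pointwise, $s \leq 2p + \abs*{\gamma}$ and $\abs*{\gamma} \leq \tfrac{s}{2} + \tfrac{\gamma^2}{2s}$ combine to $s \leq 4p + \gamma^2/s$), and your step of retaining only the $a=\pi(x)$ term of the discrimination is the same move as the paper's sum-over-actions step, just read in the opposite direction. What your version buys: the only use of expectations is linearity, so there is no rearrangement of possibly implicit quantities, and it sidesteps two blemishes in the paper's write-up --- the proof there writes $2L(\pi^\star)$ where $2L(\pi)$ is meant (the lemma concerns an arbitrary policy $\pi$ and is later invoked with both $\pi_f$ and $\pistar$), and it cites \pref{thm:plugin_chi_squared} even though no estimation enters this purely deterministic statement. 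What the paper's version buys is stylistic uniformity: the same AM-GM self-bounding pattern recurs in \pref{lem:y_to_f} and in the per-round inequality \pref{thm:per_round}, so the reader sees one technique throughout.
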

Going forward, define $\gamma(x,a) := f^\star(x,a) -
f(x,a)$ and $s(x,a) := f^\star(x,a) +
f(x,a)$, and $\Delta \ldef
\En_{\cD}\brk*{\Dtri{\fstar(x,\cdot)}{f(x,\cdot)}}$. Let us adopt the
shorthand $\En\equiv\En_{\cD}$. We proceed to bound the cost-sensitive regret:
\begin{align*}
& L(\pi_{f}) - L(\pi^\star) \leq \EE\brk*{ f^\star(x,\pi_{f}(x)) - f(x,\pi_{f}(x)) + f(x,\pi^\star(x)) - f^\star(x,\pi^\star(x))}\\
& \leq \EE \brk*{\sqrt{\frac{\max\crl{s(x,\pi_{f}(x)), s(x,\pi^\star(x))}}{\max\crl{s(x,\pi_{f}(x)), s(x,\pi^\star(x))}}} \cdot \rbr{ |\gamma(x,\pi_{f}(x))| + |\gamma(x,\pi^\star(x))|}}\\
& \leq \sqrt{ \EE \brk*{\max\crl{s(x,\pi_{f}(x)), s(x,\pi^\star(x))}}} \cdot \rbr{ \sum_{\pi \in \{\pi_{f},\pi^\star\}} \sqrt{ \EE \brk*{\frac{ |\gamma(x,\pi(x))|^2}{\max\crl{s(x,\pi_{f}(x)), s(x,\pi^\star(x))}}}}}\\
& \leq \sqrt{\EE\brk*{s(x,\pi_{f}(x)) + s(x,\pi^\star(x)))}} \cdot \rbr{ \sqrt{\EE\brk*{ \frac{\gamma(x,\pi_{f}(x))^2}{s(x,\pi_{f}(x))}}} + \sqrt{\EE\brk*{ \frac{\gamma(x,\pi^\star(x))^2}{s(x,\pi^\star(x))}}}}\\
& \leq \sqrt{\EE\brk*{ (s(x,\pi_{f}(x)) + s(x,\pi^\star(x)))}} \cdot 2 \sqrt{\EE\brk[\bigg]{\sum_a \frac{\gamma(x,a)^2}{s(x,a)}}}\\
& = \sqrt{\EE\brk*{ (s(x,\pi_{f}(x)) + s(x,\pi^\star(x)))}} \cdot 2 \sqrt{\Delta}.
\end{align*}
Here, the first inequality uses that
$f(x,\pi_{f}(x))\leq f(x,\pi^\star(x))$ by the
definition of $\pi_{f}$. The second inequality introduces the $s$
and $\gamma$ quantities, while the third follows from
Cauchy-Schwarz. In the fourth we use that $s(x,\pi_{f}(x))\leq \max
\crl{s(x,\pi_{f}(x)), s(x,\pi^\star(x))}$ and analogously for
$\pi^\star$. Finally we sum over all actions to eliminate the
dependence on the policies to introduce the triangular discrimination $\Delta$. Applying~\pref{lem:fhat_to_fstar},
we additionally observe that
\begin{align*}
\EE \sbr{ s(x,\pi_{f}(x)) + s(x,\pi^\star(x))} \leq 2\Delta + 4\rbr{L(\pi_{f}) + L(\pi^\star)}. %
\end{align*}
After applying standard simplifications, this yields
\begin{align}
L(\pi_{f}) - L(\pi^\star) &\leq 2\sqrt{\Delta} \cdot \sqrt{ 2\Delta + 4 (L(\pi_{f}) + L(\pi^\star))} \leq 2\sqrt{2} \Delta + 4\sqrt{L(\pi^\star) \Delta} + 4\sqrt{L(\pi_{f}) \Delta}\label{eq:cauchy_intermediate}\\
& \leq 6\sqrt{2}\Delta + (L(\pi_{f})+L(\pi^\star))/2.\notag
\end{align}
Re-arranging, we deduce that $L(\pi_{f}) \leq 12\sqrt{2}\Delta + 3
L(\pi^\star)$, and plugging this back into the first inequality in
\pref{eq:cauchy_intermediate} gives
\begin{align*}
L(\pi_{f}) - L(\pi^\star) &\leq 2\sqrt{\Delta} \cdot \sqrt{ 2\Delta + 4 (L(\pi_{f}) + L(\pi^\star))} \leq 2\sqrt{\Delta} \cdot \sqrt{(2+48\sqrt{2})\Delta + 16 L(\pi^\star)}\\
& \leq 8\sqrt{L(\pi^\star) \Delta} + 17\Delta. \tag*\qedhere
\end{align*}
\end{proof}

\begin{proof}[Proof sketch for~\pref{thm:multinomial_plugin}]
The majority of the calculations in this proof are very similar to those
of~\pref{thm:plugin_main}, so we highlight the two main
differences. First, rather than use the \tridis-type bound
in~\pref{thm:plugin_chi_squared}, we use a Hellinger bound on the
maximum likelihood estimate of the multinomial
parameters. Specifically, using essentially the same argument as
in~\pref{thm:plugin_chi_squared}, we can prove that with probability at
least $1-\delta$,
\begin{align*}
\EE_{\cD}\brk*{ \Dhelshort^2(\hat{p}(\cdot \mid x) \dmid{} p^\star(\cdot \mid x))} \leq \frac{2  \log |\Fcal|/\delta}{n},
\end{align*}
where $p^\star(\cdot \mid x) \ldef \PP_{\cD}\brk{y = \cdot \mid x} =
1-f^\star(x,\cdot)$ and $\hat{p}(\cdot \mid x) \ldef 1 -
\fhatkl(x,\cdot)$.

The second change concerns the way we bound the quantity
\[(\fhatkl(x,\pi(x))-\fstar(x,\pi(x)))^{2}/(\fhatkl(x,\pi(x))+\fstar(x,\pi(x))),\] which is done throughout the proof
of~\pref{lem:triangle_selfbounding}.
Rather than naively introduce a sum over all actions
as was done previously, we instead
apply~\pref{prop:hellinger_mult_to_ber}, which relates the multinomial
Hellinger divergence to the \tridis-type quantity above.
\begin{proposition}
\label{prop:hellinger_mult_to_ber}
Let $p,q \in \Delta(\cA)$ be probability mass functions. Then
\begin{align*}
\max_{a \in \Acal} \frac{ (p_a - q_a)^2}{(1-p_a)+ (1-q_a)} \leq 4 \Dhelshort^2(p \dmid{} q).
\end{align*}
\end{proposition}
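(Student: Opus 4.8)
The plan is to prove the inequality pointwise in $a$ and then take the maximum over $\Acal$. Fix an arbitrary action $a \in \Acal$; it suffices to establish $\frac{(p_a - q_a)^2}{(1-p_a)+(1-q_a)} \leq 4\,\Dhelshort^2(p\dmid{}q)$. The key preliminary observation is that, since $p$ and $q$ are both probability mass functions, $p_a - q_a = -\sum_{a' \neq a}(p_{a'} - q_{a'})$, while simultaneously $1-p_a = \sum_{a' \neq a} p_{a'}$ and $1-q_a = \sum_{a' \neq a} q_{a'}$. This is precisely what makes the complementary denominator $(1-p_a)+(1-q_a)$ arise naturally from a sum over the index set $a' \neq a$.

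Next I would factor each summand as $p_{a'} - q_{a'} = (\sqrt{p_{a'}} - \sqrt{q_{a'}})(\sqrt{p_{a'}} + \sqrt{q_{a'}})$ and apply Cauchy--Schwarz to the sum over $a' \neq a$, obtaining
\[
(p_a - q_a)^2 \leq \Big(\sum_{a'\neq a}(\sqrt{p_{a'}}-\sqrt{q_{a'}})^2\Big)\Big(\sum_{a'\neq a}(\sqrt{p_{a'}}+\sqrt{q_{a'}})^2\Big).
\]
The first factor is at most $2\,\Dhelshort^2(p\dmid{}q)$, since the definition $\Dhelshort^2(p\dmid{}q) = \tfrac12\sum_{a'}(\sqrt{p_{a'}}-\sqrt{q_{a'}})^2$ runs over all indices and dropping the term $a'=a$ only decreases the sum. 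For the second factor, AM--GM gives $(\sqrt{p_{a'}}+\sqrt{q_{a'}})^2 = p_{a'}+q_{a'}+2\sqrt{p_{a'}q_{a'}} \leq 2(p_{a'}+q_{a'})$, so summing yields $\sum_{a'\neq a}(\sqrt{p_{a'}}+\sqrt{q_{a'}})^2 \leq 2\big((1-p_a)+(1-q_a)\big)$.

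Combining the two bounds gives $(p_a - q_a)^2 \leq 4\,\Dhelshort^2(p\dmid{}q)\big((1-p_a)+(1-q_a)\big)$; dividing through by $(1-p_a)+(1-q_a)$ and then maximizing over $a \in \Acal$ completes the argument. I do not anticipate a genuine obstacle here, as the proof is a short chain of Cauchy--Schwarz followed by AM--GM. The only mildly delicate point is the opening rewriting of $p_a - q_a$ over the complementary index set, which is exactly the maneuver that produces the target denominator $(1-p_a)+(1-q_a)$ and keeps the constant sharp enough to land on the claimed factor of $4$.
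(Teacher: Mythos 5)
Your proof is correct and achieves the constant $4$ exactly, but it is packaged differently from the paper's. The paper argues modularly: it first invokes the data-processing inequality for the Hellinger distance under the coarsening $X \mapsto \one\{X = a\}$, reducing to the Bernoulli pair $(p_a, 1-p_a)$ versus $(q_a, 1-q_a)$, and then applies \pref{prop:hellinger_chi_squared}, keeping only the term with denominator $(1-p_a)+(1-q_a)$. Your argument inlines both of these black boxes: your Cauchy--Schwarz step over the complementary index set is precisely a proof of the binary data-processing inequality for Hellinger (it is the standard argument that the Hellinger affinity can only increase under coarsening), and your AM--GM estimate $(\sqrt{p_{a'}}+\sqrt{q_{a'}})^2 \leq 2(p_{a'}+q_{a'})$ is the same estimate that drives the Bernoulli-level proposition, applied termwise to the complement rather than to the aggregated mass. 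So at heart the two proofs are the same chain of inequalities, with the factor $4$ arising in both as a product of two factors of $2$. What your version buys is self-containedness: it needs no auxiliary results and makes transparent how the denominator $(1-p_a)+(1-q_a)$ emerges from the sum over $a' \neq a$. What the paper's version buys is brevity and conceptual clarity: given the already-established Bernoulli lemma, the multinomial statement follows in one line from a general information-theoretic principle (Hellinger contracts under measurable maps), which also explains why no constant is lost in passing from the full alphabet to a single coordinate.
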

As a result, for any policy $\pi$
we have
\begin{align*}
\EE_{\cD}\brk*{
  \frac{\fhatkl(x,\pi(x))-\fstar(x,\pi(x)))^2}{\fhatkl(x,\pi(x))+\fstar(x,\pi(x))}}
  &= \EE_{\cD}\brk*{ \frac{ \rbr{p^\star(\pi(x) \mid x) -
    \hat{p}(\pi(x) \mid x)}^2}{(1-p^\star(\pi(x)\mid x)) +
    (1-\hat{p}(\pi(x) \mid x))}} \\
&\leq 2 \EE_{\cD}\brk*{\Dhelshort^2(\hat{p}(\cdot \mid x) \dmid{} p^\star(\cdot \mid x))}.
\end{align*}
All
other calculations are unaffected.
\end{proof}

\subsubsection{Proofs for Supporting Results}
\label{sec:plugin_supplement}

\begin{proof}[Proof of~\pref{prop:hellinger_chi_squared}]
Observe that we can write
\begin{align*}
\Dhelshort^2(p\dmid{}q) = \frac{1}{2}(\sqrt{p} - \sqrt{q})^2 + \frac{1}{2}(\sqrt{1-p} - \sqrt{1-q})^2.
\end{align*}
For each of these terms, we create a difference of squares as follows
\begin{align*}
(\sqrt{x} - \sqrt{y})^2 = \frac{(x-y)^2}{(\sqrt{x}+\sqrt{y})^2} \geq \frac{(x-y)^2}{2(x+y)},
\end{align*}
where the last inequality uses the fact that $2\sqrt{xy} \leq
x+y$. Applying this argument to both terms yields the result.
\end{proof}

\begin{proof}[Proof of~\pref{prop:hellinger_mult_to_ber}]
This is an immediate consequence of the data processing inequality for
Hellinger divergence and~\pref{prop:hellinger_chi_squared}. Indeed, by data processing, we have
\begin{align*}
\Dhelshort^2(p\dmid{}q) \geq \Dhelshort^2 ( (p_a,1-p_a) \dmid{} (q_a,1-q_a)),
\end{align*}
since the latter is the distribution of the random variable $Y :=
\one\{X=a\}$ when $X \sim p$ (resp. $q$). Now that we have passed to
the Bernoulli Hellinger divergence, we simply
apply~\pref{prop:hellinger_chi_squared} and drop one of the two terms.
\end{proof}

\begin{proof}[\pfref{thm:plugin_chi_squared}]
\newcommand{\Loss}{C}
The initial steps of this proof parallel the classical analysis of
maximum likelihood estimators~\citep[see, e.g.,][]{zhang2006from}. We
start by establishing a symmetrization inequality. Let $D := \{(x_i,\ls_i)\}_{i=1}^n$ and $D':= \{(x'_i,\ls'_i)\}_{i=1}^n$
denote two \iid datasets of $n$ examples, let $\Loss(f,D)$ be any function of a regression function $f$ and dataset $D$,
and let $\hat{f}$ be any estimator that takes the dataset $D$ and outputs a
function in $\Fcal$. We first show that
\begin{align}
\EE_D\sbr{ \exp\prn*{\Loss(\hat{f}(D), D) - \log \EE_{D'}\brk*{\exp(\Loss(\hat{f}(D), D')}} - \log |\Fcal|)} \leq 1. \label{eq:plugin_chernoff}
\end{align}
This is a symmetrization inequality because it relates the ``training
error'' $\Loss(\hat{f}(D), D)$ to the error $\Loss(\hat{f}(D), D')$
measured on the ``ghost sample'' $D'$. The unusual form of the expression involving the
ghost sample is to accommodate the fact that $\Loss$ may be unbounded.

To prove~\pref{eq:plugin_chernoff}, let $\mu$ denote the uniform
distribution over $\Fcal$, and observe that for any distribution
$\hat{\mu} \in \Delta(\Fcal)$ and any function $g: \Fcal \to \RR$,
we have
\begin{align*}
\sum_{f \in \Fcal} \hat{\mu}(f) g(f) \leq \max_{f \in \Fcal} g(f) \leq
\log\sum_{f\in\Fcal} \exp(g(f)) = \log \rbr{\EE_{f \sim \mu} \exp(g(f))} + \log |\Fcal|.
\end{align*}
Now for any $D$ we take $\hat{\mu}(f) := \one\{f = \hat{f}(D)\}$ and $g(f) := \Loss(f,D) - \log
\EE_{D'} \exp(\Loss(f,D'))$ to obtain
\begin{align*}
\Loss(\hat{f}(D),D) - \log \EE_{D'} \exp(\Loss(\hat{f}(D),D')) \leq \log \rbr{ \EE_{f\sim\mu} \frac{\exp(\Loss(f,D))}{\EE_{D'}\exp(\Loss(f,D'))}} + \log |\Fcal|.
\end{align*}
We will exponentiate this inequality and take expectation over the
initial dataset $D$. When we do this, the first term on the \rhs simplifies to
\begin{align*}
\EE_{D} \exp\rbr{\log \rbr{ \EE_{f\sim\mu}
  \brk*{\frac{\exp(\Loss(f,D))}{\EE_{D'}\exp(\Loss(f,D'))}}}} = \EE_{f \sim
  \mu} \brk*{\frac{\EE_D \exp(\Loss(f,D))}{\EE_{D'} \exp(\Loss(f,D'))}} = 1.
\end{align*}
Re-arranging, we obtain~\pref{eq:plugin_chernoff}. With the
exponential moment bound in~\pref{eq:plugin_chernoff}, a standard
application of the Chernoff method yields that for any $\delta \in
(0,1)$ with probability at least $1-\delta$ we have
\begin{align*}
- \log \EE_{D'}\exp(\Loss(\hat{f}(D),D')) \leq - \Loss(\hat{f}(D),D) + \log |\Fcal| + \log(1/\delta).
\end{align*}
This high-probability bound holds for any fixed functional $\Loss$. To
apply it, for each $a\in\cA$, we define
\begin{align*}
\Loss_a(f,D) := -\frac{1}{2} \sum_{i=1}^{n} \ls_i(a) \log
  (f^\star(x_i,a)/f(x_i,a)) + (1-\ls_i(a))\log (
  (1-f^\star(x_i,a))/(1-f(x_i,a))), 
\end{align*}
where $y_i(a)$ is defined as in \pref{eq:plugin_erm}. We apply the
bound for each $C_a$, then take a union bound over all $a\in\cA$ and
sum up the resulting inequalities, which gives that with
probability at least $1-\delta$,
\begin{align*}
\sum_{a \in \Acal} - \log \EE_{D'}\exp(\Loss_a(\hat{f}(D),D')) \leq
  \sum_{a \in \Acal} -
  \Loss_a(\hat{f}(D),D) + \K{} \rbr{ \log |\Fcal| + \log(\K{}/\delta)}.
\end{align*}
If we apply this inequality with $\fhatkl$ as the maximum likelihood estimate, we
have $ \sum_{a} -\Loss_a(\fhatkl(D),D) \leq 0$. On the other hand, for
each action $a \in \Acal$, the corresponding term on the \lhs can
be simplified to
\begin{align*}
& -\log \EE_{D'} \exp\rbr{ -\frac{1}{2} \sum_{i=1}^n\rbr{ \ls'_i(a)
                 \log \frac{f^\star(x'_i,a)}{\fhatkl{}(x'_i,a)} +
                 (1-\ls'_i(a)) \log
                 \frac{1-f^\star(x_i',a)}{1-\fhatkl{}(x_i',a)}}}\\
  \intertext{Now, let $y'_i(a)\sim\mathrm{Ber}(\ls'_i(a))$. Then by
  Jensen's inequality, we have}
& \geq{} - n \log \EE_{x',\ls'}\En_{y'\mid{}\ls'} \exp\rbr{ -\frac{1}{2}\rbr{ y'(a) \log \frac{f^\star(x',a)}{\fhatkl{}(x',a)} + (1-y'(a)) \log \frac{1-f^\star(x',a)}{1-\fhatkl{}(x',a)}}}\\
& = - n \log \EE_{x',\ls'}\En_{y'\mid{}\ls'} \brk*{\rbr{\frac{f^\star(x',a)}{\fhatkl{}(x',a)}}^{-y'(a)/2}\rbr{\frac{1-f^\star(x',a)}{1-\fhatkl{}(x',a)}}^{-(1-y'(a))/2}}\\
& = - n \log \EE_{x'} \brk*{\sqrt{f^\star(x',a)\fhatkl{}(x',a)} + \sqrt{(1-f^\star(x',a))(1-\fhatkl{}(x',a))}}.
\end{align*}
Here the last line holds because the model is well-specified; in
particular $\PP\brk{y'(a) = 1 \mid x'} = f^\star(x',a)$. Continuing,
observe that for any random variables $u,v$ taking values in $[0,1]$
we have
\begin{align}
\label{eq:log_hellinger}
- \log \EE\brk*{\sqrt{uv} + \sqrt{(1-u)(1-v)}} = - \log \rbr{ 1 - \EE\sbr{ 1 - \sqrt{uv} - \sqrt{(1-u)(1-v)}}} \geq \frac{1}{2} \EE\brk*{ \Dhelshort^2(u\dmid{}v)},
\end{align}
where the last step uses that $x \leq -\log(1-x)$ for $x\in\brk*{0,1}$ along with the
definition of the Hellinger divergence.
Together, these inequalities establish that
\begin{align*}
\frac{1}{2} \sum_{a \in \Acal}\En_x\brk*{ \Dhelshort^2(f^\star(x,a) \dmid{} \fhatkl(x,a))} 
\leq
\frac{\K{}\rbr{ \log|\Fcal| + \log(\K{}/\delta)}}{n}.
\end{align*}
To conclude, we simply apply~\pref{prop:hellinger_chi_squared}, which yields the result. 
\end{proof}

\begin{proof}[Proof of~\pref{lem:fhat_to_fstar}]
  Let $f\in\cF$ be fixed and define $\gamma(x,a) := f^\star(x,a) -
f(x,a)$ and $s(x,a) := f^\star(x,a) +
f(x,a)$. By the triangle inequality, the AM-GM inequality, and an application
of~\pref{thm:plugin_chi_squared}, we have
\begin{align*}
\EE_{\cD}\brk*{s(x,\pi(x))} &\leq \EE_{\cD}|\gamma(x,\pi(x))| + 2 L(\pi^\star)\\
& \leq \EE_{\cD} \brk*{\sqrt{s(x,\pi(x))} \frac{|\gamma(x,\pi(x))|}{\sqrt{s(x,\pi(x))}}} + 2 L(\pi^\star)\\
& \leq \frac{1}{2}\EE_{\cD}\brk*{s(x,\pi(x))} + \frac{1}{2} \EE_{\cD}\brk*{\frac{\gamma(x,\pi(x))^2}{s(x,\pi(x))}} + 2L(\pi^\star)\\
& \leq \frac{1}{2}\EE_{\cD}\brk*{s(x,\pi(x))} + \frac{1}{2} \EE_{\cD}\brk*{\sum_a \frac{\gamma(x,a)^2}{s(x,a)}} + 2 L(\pi^\star)\\
& \leq \frac{1}{2}\EE_{\cD}\brk*{s(x,\pi(x))} +\frac{1}{2}\En_{\cD}\brk*{\Dtri{\fstar(x,\cdot)}{f(x,\cdot)}}+ 2 L(\pi^\star).
\end{align*}
Re-arranging yields the result.
\end{proof}

\arxiv{\section{Proofs for Contextual Bandit
  Results (\pref{sec:cb})}}
\neurips{\section{Proofs and Additional Discussion for Contextual Bandits Results}}
\label{app:cb}

\subsection{Online Regression Oracles}
\label{app:oracle}
In this section we briefly formalize the notion of an online regression oracle
sketched in the introduction and \pref{ass:logloss_regret}. The treatment here follows \citet{foster2020beyond}.

We consider the following model for the oracle \algtext.
\begin{itemize}
\item[] For $t=1,\ldots,T$:
  \begin{itemize}
  \item Nature selects context-action pair $(x_t,a_t)\in\cX\times\cA$.
  \item Algorithm produces prediction $\yh_t\in\brk*{0,1}$.
  \item Nature selects outcome $y_t\in\brk*{0,1}$.
  \end{itemize}
\end{itemize}
We model the oracle as a sequence of mappings
$\alg\ind{t}:(\cX\times\cA)\times\prn*{\cX\times\cA\times{}\bbR}^{t-1}\to\brk*{0,1}$, so that
$\yh_t=\alg\ind{t}\prn*{x_t,a_t\midsem\crl{(x_i,,a_i,y_i)}_{i=1}^{t-1}}$
above. Any algorithm of this type induces a mapping
\begin{equation}
\yh_t(x,a)
\ldef{}\alg\ind{t}\prn*{x,a\midsem\crl{(x_i,,a_i,y_i)}_{i=1}^{t-1}},\label{eq:yhat}
\end{equation}
which may be understood as the prediction the algorithm would make at time
$t$ if we froze its internal state and selected $(x_t,a_t)=(x,a)$.

\neurips{
\subsection{Examples of Oracles for \mainalg}
\label{sec:cb_examples}
\neurips{In this section we}\arxiv{We now} take advantage of the extensive literature on regression with the
logarithmic loss
\citep{cover1991universal,vovk1995game,kalai2002efficient,hazan2015online,orseau2017soft,rakhlin2015sequential,foster2018logistic,luo2018efficient}
and instantiate
\pref{thm:main} to give provable and efficient first-order
regret bounds for a number of function classes of interest. To the
best of our knowledge, our results are new for each of these special
cases.

\begin{example}[Finite function classes]
  If $\cF$ is a finite class, Vovk's aggregating algorithm \citep{vovk1995game} guarantees
  that\footnote{See \pref{prop:exp_concave} for a proof that the loss
    $\logl(\yhat,y)$ is mixable over the domain $\brk*{0,1}$, which is required to apply this result.}
  \begin{equation}
    \label{eq:vovk}
    \RegLog \leq \log\abs{\cF}.
  \end{equation}
  With this choice, $\mainalg$ satisfies $\En\brk{\RegCB} \leq \bigoh\prn*{\sqrt{\Lstar\cdot\K{}\log\abs{\cF}} + \K{}\log\abs{\cF}}$.
\end{example}

\begin{example}[Low-dimensional linear functions]
  Suppose that $\cF$ takes the form
  \[
\cF = \crl*{(x,a)\mapsto{} \tri*{w,\phi(x,a)}\mid{}w\in\Delta_{d}},
\]
where $\phi(x,a)\in\bbR_{+}^{d}$ is a fixed feature map with
$\nrm*{\phi(x,a)}_\infty\leq{}1$. Then the continuous exponential weights
algorithm ensures that
\[
\RegLog \leq{} \bigoh(d\log(T/d)),
\]
and can be implemented in $\mathrm{poly}(d,T)$ time per step using
log-concave sampling
\citep{cover1991universal,kalai2002efficient}. With this choice,
\mainalg satisfies
\begin{equation}
  \label{eq:linear}
  \En\brk*{\RegCB} \leq{}\bigoh\prn*{\sqrt{\Lstar\cdot\K{}d\log(T/d)} + \K{}d\log(T/d)}.
\end{equation}
\end{example}
Beyond attaining first-order regret, this bound in \pref{eq:linear} is minimax optimal
when the number of actions is constant \citep{li2019nearly}. A natural
direction for future work is to improve the result for large action spaces. Another
more practical choice for the oracle in this setting is the algorithm
of \citet{luo2018efficient}, which has slightly worse regret
$\RegLog\leq\bigoht(d^2)$, but runs in time $\bigoh(Td^{2.5})$ per step.

While first-order regret bounds for contextual bandits have primarily been
investigated for finite classes prior to this work, an advantage of
working within the regression oracle framework is that we can easily
lift our first-order guarantees to rich, nonparametric function classes.
\begin{example}[High/infinite-dimensional linear functions]
  \label{ex:highdim}
  Suppose that $\cF$ takes the form
  \[
\cF = \crl*{(x,a)\mapsto{} \tfrac{1}{2}(1+\tri*{w,\phi(x,a)})\mid{}\nrm*{w}_{2}\leq{}1},
\]
where $\nrm*{\phi(x,a)}_2\leq{}1$ is a fixed feature map. For this
setting, \citet[Section 6.1]{rakhlin2015sequential} show that the
follow-the-regularized-leader algorithm with
log-barrier regularization has\footnote{This is technically
  only proven for the case where $y\in\crl*{0,1}$, but the proof easily extends to $y\in\brk*{0,1}$.}
\[
\RegLog \leq{} \bigoh(\sqrt{T\log(T)}).
\]
This algorithm can be implemented in time $\bigoh(d)$ per step. For
this choice, \mainalg satisfies the dimension-independent rate
\begin{equation}
  \En\brk*{\RegCB} \leq{}\bigoh\prn*{(\K\Lstar)^{1/2}T^{1/4} +
    \K\sqrt{T}},
  \label{eq:example_highdim}
\end{equation}
\end{example}
Let us interpret the bound in \pref{eq:example_highdim}. First, we recall that the minimax optimal
rate for this function class is $\K^{1/2}T^{3/4}$, which the bound
above always achieves in the worst case
\citep{abe1999associative,foster2020beyond}; this
``worse-than-$\sqrt{T}$'' rate is the price we pay for working with an
expressive function class. On the other hand, if
$\Lstar$ is constant the bound in \pref{eq:example_highdim} improves
to $\bigoh(\K\sqrt{T})$, which beats the worst-case rate. While one
might hope that a tighter rate of the form, e.g.,
$(\Lstar)^{3/4}$, might be possible, by adapting
a lower bound in \citet[Section 4]{srebro2010smoothness}, one can show that the
result in \pref{eq:example_highdim} cannot be improved.

\begin{example}[Kernels]The algorithm in \pref{ex:highdim} kernelizes
  and hence can be immediately applied when
  $\cF=\crl{(x,a)\mapsto{}\frac{1}{2}(1+g(x,a))\mid{}g\in\cG}$, where
  $\cG$ is  reproducing kernel space with RKHS norm
  $\nrm*{\cdot}_{\cG}$ and kernel $\cK$. The regret bound in
  \pref{eq:example_highdim} continues to hold for this setting as
  long as
  $\nrm*{g}_{\cG}\leq{}1$ and $\cK((x,a), (x,a)) \leq{} 1$.
\end{example}
The logarithmic loss is also well-suited to generalized
linear models, as the following example highlights.%
\begin{example}[Generalized linear models]
  \label{ex:logistic}
Let
$\cF=\crl*{(x,a)\mapsto\sigma(\tri{w,\phi(x,a)})\mid{}w\in\bbR^{d},
  \nrm*{w}_2\leq{}1}$, where $\sigma(t)=1/(1+e^{-t})$ is the logistic
link function and $\phi(x,a)$ is a fixed feature map. In this case,
the map $w\mapsto{}\logloss(\sigma(\tri{w,\phi(x,a)}),y)$ is
equivalent to the
standard logistic loss function applied to $\tri{w,\phi(x,a)}$, and we can use the algorithm from
\citet{foster2018logistic} to obtain $\RegLog\leq\bigoh(d\log(T/d))$
and $\RegCB\leq\bigoht(\sqrt{\Lstar\cdot{}Ad} + Ad)$. When $d$ is
large, we can also use online gradient descent on the logistic
loss, which gives $\RegLog\leq\bigoh(\sqrt{T})$ and $\En\brk*{\RegCB} \leq{}\bigoh\prn*{(\K\Lstar)^{1/2}T^{1/4} +
    \K\sqrt{T}}$.
\end{example}

Beyond the algorithmic examples above, for general function classes \citet{bilodeau2020tight}
provide a tight characterization for the minimax optimal rates
for online regression with the logarithmic loss in terms of
\emph{sequential covering numbers} \citep{rakhlin2015sequential} for
the class $\cF$. We can use these in tandem with
\pref{thm:main} to give new regret bounds for general
classes. For example, when $\cF$ is the set of all $\brk*{0,1}$-valued
$1$-Lipschitz functions over $\brk*{0,1}^{d}$,
\citet{bilodeau2020tight} show that the optimal rate for \loglosst
regression is $\RegLog=\Theta(T^{\frac{d}{d+1}})$, which gives $\RegCB\approxleq
\bigoh\prn*{(\Lstar)^{1/2}T^{\frac{d}{2(d+1)}}+T^{\frac{d}{d+1}}}$ for
\mainalg.\neurips{\looseness=-1}

 }

\subsection{Proof of \pref{thm:main}}

\maintheorem*

\begin{proof}%
  Define $\Lhat_T=\sum_{t=1}^{T}\ls_t(a_t)$ and
  $\Lstar_T=\sum_{t=1}^{T}\ls_t(\pistar(x_t))$. All of the effort in this proof will be to show that for any choice $\gamma\geq{}10\K{}$, \pref{alg:online_main} has
 \begin{equation}
   \label{eq:online_oracle}
   \En\brk*{\RegCB}\leq 
\frac{10\K{}}{\gamma}\En\brk*{\Lstar_T} +
28\gamma\cdot{}\RegLog.
\end{equation}
The bound in \pref{eq:mainalg_regret} immediately follows from this
guarantee by using choice of $\gamma$ in the theorem statement.

Define a filtration    \begin{equation}
      \filt_{t-1}=\sigma((x_1,a_1,\ls_1(a_1)), \ldots,
      (x_{t-1},a_{t-1},\ls_{t-1}(a_{t-1})), x_t)\label{eq:filtration1}
    \end{equation}
    and let
    $\En_{t}\brk*{\cdot}\ldef\En\brk*{\cdot\mid{}\filt_t}$. Next, 
    define the following conditional-expected versions of the
    contextual bandit regret and \loglosst regret, respectively
    \[
      \RegBarCB =
      \sum_{t=1}^{T}\En_{t-1}\brk*{\ls_t(a_t)-\ls_t(\pistar(x_t))}
      = \sum_{t=1}^{T}\sum_{a}p_{t,a}(\fstar(x_t,a)-\fstar(x_t,\pistar(x_t)))
    \]
    and
    \[
      \RegBarLog =
      \sum_{t=1}^{T}\En_{t-1}\brk*{\logl(\yhat_{t}(x_t,a_t),
        \ls_t(a_t))- \logl(\fstar(x_t,a_t),\ls_t(a_t))}.
    \]
    Our starting point is to observe that
    $\En\brk*{\RegCB}=\En\brk*{\RegBarCB}$ and 
    $\En\brk*{\RegBarLog}\leq\RegLog$, where the latter holds since $\RegLog$ is a deterministic upper bound on the \loglosst regret of the oracle. So it suffices to relate the
    conditional-expected versions of these quantities.

The main step of the proof is to upper bound $\RegBarCB$, using the
first-order per-round inequality \pref{thm:per_round} (proven in \pref{app:per_round}), which we restate here
for completeness.
\perround*
    Applying \pref{thm:per_round} for each round $t$, we are guaranteed that
    \begin{align*}
\RegBarCB &\leq
\frac{5\K{}}{\gamma}\sum_{t=1}^{T}\sum_{a}p_{t,a}\fstar(x_t,a) + 7\gamma
            \sum_{t=1}^{T}\sum_{a}p_{t,a}\frac{(\yhat_t(x_t,a)-\fstar(x_t,a))^{2}}{\yhat_t(x_t,a)+\fstar(x_t,a)}\\
          &= \frac{5\K{}}{\gamma}\Lhatb_T + 7\gamma\cdot{}\ErrDelBar,
    \end{align*}
    where $\Lhatb_T\ldef \sum_{t=1}^{T}\sum_{a}p_{t,a}\fstar(x_t,a)$ and
    \[
      \ErrDelBar \ldef \sum_{t=1}^{T}\sum_{a}p_{t,a}\frac{(\yhat_t(x_t,a)-\fstar(x_t,a))^{2}}{\yhat_t(x_t,a)+\fstar(x_t,a)}.
    \]
Next, we relate the \tridis-type error $\ErrDelBar$ to the \loglosst regret using the following
proposition (proven in the sequel).
\begin{proposition}
  \label{prop:kl_calibrated}
  If $y\in\brk{0,1}$ is a random variable with $\En\brk*{y}=\mu$, then for any $\yhat\in\brk{0,1}$,
  \begin{equation}
    \label{eq:kl_calibrated}
    \En\brk*{\logl(\yhat,y)-\logl(\mu,y)} =\dkl{\mu}{\yhat} \geq{} \frac{1}{2}\cdot\frac{(\yhat-\mu)^{2}}{\yhat+\mu}.
  \end{equation}
\end{proposition}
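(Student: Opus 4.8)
The plan is to handle the identity and the inequality in \pref{eq:kl_calibrated} separately.

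For the identity, I would begin by expanding the logarithmic loss \pref{eq:logloss} and observing that the difference $\logl(\yhat,y)-\logl(\mu,y)$ is \emph{affine} in the outcome $y$:
\[
\logl(\yhat,y)-\logl(\mu,y)=y\log\frac{\mu}{\yhat}+(1-y)\log\frac{1-\mu}{1-\yhat}.
\]
Because $\yhat$ and $\mu$ do not depend on $y$, I can take expectations term by term and substitute $\En\brk*{y}=\mu$, collapsing the right-hand side to $\mu\log(\mu/\yhat)+(1-\mu)\log((1-\mu)/(1-\yhat))$. By the definition of the binary KL divergence this is exactly $\dkl{\mu}{\yhat}$, which establishes the first equality.

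For the inequality it then remains to prove the Pinsker-type bound $\dkl{\mu}{\yhat}\geq\frac{1}{2}\cdot\frac{(\yhat-\mu)^2}{\yhat+\mu}$, which I would obtain by chaining two facts already available in the development. First, the standard comparison $\dkl{\mu}{\yhat}\geq 2\,\Dhelshort^2(\mu\dmid{}\yhat)$ between the binary KL and Hellinger divergences; this follows from the elementary inequality $1-t\leq-\log t$ applied with $t=\sqrt{\yhat/\mu}$ and $t=\sqrt{(1-\yhat)/(1-\mu)}$ to the two terms inside the expectation defining the Hellinger affinity, and is the one-sided analogue of the bound used in \pref{eq:log_hellinger}. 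Second, \pref{prop:hellinger_chi_squared}, which gives $\Dhelshort^2(\mu\dmid{}\yhat)\geq\frac{1}{4}\cdot\frac{(\mu-\yhat)^2}{\mu+\yhat}$ after discarding one of the two nonnegative triangular terms. Multiplying the second bound by the factor $2$ coming from the first yields precisely the claimed constant $\frac{1}{2}$.

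The equality is a routine one-line computation, so the only genuine content is the inequality, and the subtle point there is the \emph{direction} of the divergence comparison. The crux---which is the central theme of the paper---is that it is the triangular discrimination, rather than the $\chi^2$-divergence, that is dominated by the KL divergence; this is what makes the one-sided bound hold with $(\yhat+\mu)$ in the denominator, whereas for $\chi^2$ the inequality would run the other way and the reduction to the logarithmic-loss oracle would break down. I would also note that the stated inequality is exactly the scalar specialization of the Pinsker-type bound \pref{eq:triangle_kl}, so one could alternatively cite that inequality directly rather than routing through \pref{prop:hellinger_chi_squared}.
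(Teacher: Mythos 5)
Your proof is correct; the equality step coincides with the paper's (expand the loss difference, which is affine in $y$, and use $\En\brk*{y}=\mu$), but your inequality step takes a genuinely different route. The paper argues by elementary calculus: setting $f_{\yhat}(\mu)=\dkl{\mu}{\yhat}$, it applies Taylor's theorem around $\yhat$, uses $f_{\yhat}(\yhat)=f_{\yhat}'(\yhat)=0$, and lower-bounds the second derivative at the intermediate point $\bar{y}\in\mathrm{conv}(\crl{\yhat,\mu})$ via $f_{\yhat}''(\bar{y})=\frac{1}{\bar{y}}+\frac{1}{1-\bar{y}}\geq\frac{1}{\max\crl{\yhat,\mu}}\geq\frac{1}{\yhat+\mu}$. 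You instead chain the standard comparison $\dkl{\mu}{\yhat}\geq 2\Dhelshort^2(\mu\dmid{}\yhat)$ (your derivation via $1-t\leq-\log t$ is the right one) with \pref{prop:hellinger_chi_squared}, and the constants indeed compose to the claimed $\frac{1}{2}$. What each buys: the paper's argument is self-contained within the contextual-bandit appendix and in fact proves the slightly stronger bound with denominator $\max\crl{\yhat,\mu}$ in place of $\yhat+\mu$; yours is modular, reuses the Hellinger-to-triangular machinery already proved for the plug-in classification analysis (\pref{thm:plugin_chi_squared}), and makes explicit that the same information-theoretic pathway (KL to Hellinger to \tridis) powers both halves of the paper --- and your emphasis on the direction of the divergence comparison (KL dominates $D_{\Delta}$, not $D_{\chi^2}$) is exactly the right one. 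One caution on your closing remark: citing \pref{eq:triangle_kl} ``directly'' would be circular within the paper, since that display is precisely the scalar inequality at issue --- it is stated in the overview with an attribution to \citet{topsoe2000some} but is only actually proved in the paper by \pref{prop:kl_calibrated} itself; pointing to the external reference is legitimate, but it does not constitute an alternative internal proof.
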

In particular, since $a_t$ and $\ls_t$ are conditionally independent given $\filt_{t-1}$, this implies
that
\[
\ErrDelBar \leq{}
2\sum_{t=1}^{T}\sum_{a}p_{t,a}\dkl{\fstar(x_t,a)}{\yhat_t(x,a_t)} = 2\RegBarLog,
\]
so that
\[
\RegBarCB \leq 
\frac{5\K{}}{\gamma}\Lhatb_T + 14\gamma\cdot{}\RegBarLog.
\]
To conclude, let
$\Lstarb_T=\sum_{t=1}^{T}\fstar(x_t,\pistar(x_t))$. Then this
inequality can be written as
\[
\Lhatb_T - \Lstarb_T \leq 
\frac{5\K{}}{\gamma}\Lhatb_T + 14\gamma\cdot{}\RegBarLog.
\]
  Since $1/(1-\veps)\leq{}1+2\veps$ for all $\veps\leq{}1/2$, this implies
  that whenever $\gamma{}\geq{}10\K{}$,
  \[
\Lhatb_T - \Lstarb_T \leq 
\frac{10\K{}}{\gamma}\Lstarb_T + 28\gamma\cdot{}\RegBarLog.
\]
Noting that $\En\brk{\Lstarb_T}=\En\brk*{\Lstar_T}$ and $\En\brk{\Lhatb_T}=\En\brk{\Lhat_T}$, this establishes \pref{eq:online_oracle}.

\end{proof}

\begin{proof}[\pfref{prop:kl_calibrated}]
  For the equality in \pref{eq:kl_calibrated}, we have
  \begin{align*}
    \En\brk*{\logl(\yhat,y)-\logl(\mu,y)} = \En\brk*{y\log(\mu/\yhat) + (1-y)\log((1-\mu)/(1-\yhat))}=\dkl{\mu}{\yhat}.
  \end{align*}
  To prove the inequality, let $f_{\yhat}(\mu) = \dkl{\mu}{\yhat}$. By Taylor's theorem, we
  have
  \[
    f_{\yhat}(\mu) = f_{\yhat}(\yhat) + f_{\yhat}'(\yhat)(\mu-\yhat)
    + \frac{1}{2}f_{\yhat}''(\bar{y})(\mu-\yhat)^{2},
  \]
  for some $\bar{y}\in\mathrm{conv}(\crl{\yhat,\mu})$. Observe that
  \[
  f'_{\yhat}(z) = \log(z/\yhat)  - \log((1-z)/(1-\yhat)),
  \]
  so that we have
  $f_{\yhat}(\yhat)=f_{\yhat}'(\yhat)=0$. Further
  \[
    f''_{\yhat}(\bar{y}) = \frac{1}{\bar{y}} + \frac{1}{1-\ybar}
    \geq{} \frac{1}{\max\crl{\yhat,\mu}}\geq\frac{1}{\yhat+\mu},
  \]
  which establishes the result.
\end{proof}

\subsection{Proof of \pref{thm:per_round}}
\label{app:per_round}
\perround*
\begin{proof}%
  To begin, we observe that by the AM-GM inequality,
  \begin{align}
    \notag
    \sum_{a}p_a(f_a-f_{\astar})
    &= \sum_{a\neq\astar}p_a(y_a-f_{\astar})
    + \sum_{a\neq\astar}p_a(f_a-y_a)\\
    &\leq{} \sum_{a\neq\astar}p_a(y_a-f_{\astar}) +
      \frac{1}{4\gamma}\sum_{a\neq\astar}p_a(f_a+y_a)
      + \gamma\sum_{a\neq\astar}p_a\frac{(y_a-f_a)^{2}}{y_a+f_a}.
      \label{eq:per_round_basic}
  \end{align}
  We focus on bounding the first term in \pref{eq:per_round_basic},
  then return to the other terms at the end of the proof. We have
\begin{align}
\sum_{a \ne a^\star} p_a (y_a - f_{a^\star}) &= \sum_{a \ne a^\star} p_a (y_a - y_b) + (1-p_{a^\star})(y_b - f_{a^\star})\notag\\
& = \sum_{a \notin \{a^\star, b\}} p_a (y_a - y_b) + (1-p_{a^\star})(y_b - f_{a^\star}). \label{eq:per_round_2}
\end{align}
Recall that for $a \ne b$ we set $p_a = \frac{y_b}{\K{}y_b
  + \gamma (y_a - y_b)}$ and for $p_b$ we set $p_b = 1 - \sum_{a \ne
  b} p_a$. With this setting, the first term in \pref{eq:per_round_2} is bounded as
\begin{align}
\sum_{a \notin \{a^\star, b\}} p_a (y_a - y_b) \leq \sum_{a \notin
  \{a^\star,b\}} \frac{y_b (y_a - y_b)}{\K{} y_b + \gamma(y_a - y_b)}
  \leq \K{}\frac{y_b}{\gamma}.
  \label{eq:per_round_not_ab}
\end{align}
It remains to bound the term
\[
(1-p_{a^\star})(y_b - f_{a^\star}).
\]
If $f_{a^\star} \geq y_b$ this is trivially negative, so we assume going forward that
$f_{a^\star} \leq y_b$, and upper bound as
\[
(1-p_{a^\star})(y_b - f_{a^\star}) \leq y_b - f_{a^\star}.
\]
We now appeal to the following lemma.
\begin{lemma}
  \label{lem:yb_fa}
  The distribution $p$ in \pref{thm:per_round} ensures that
  \begin{align}
    \label{eq:yb_fa}
    y_b - f_{a^\star} \leq{} \frac{\K{}}{4\gamma}y_{b} +   2\gamma\cdot{}p_{\astar}\frac{(y_{\astar}-f_{\astar})^{2}}{y_{\astar}+f_{\astar}}.
  \end{align}
\end{lemma}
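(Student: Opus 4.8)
The plan is to reduce to the essential case and then split on how large the gap $y_{\astar}-y_b$ is. First note that the right-hand side of~\pref{eq:yb_fa} is always nonnegative: each weight satisfies $p_a\ge 0$, and in particular $p_b\ge 1/\K$ because every $p_a=\tfrac{y_b}{\K y_b+\gamma(y_a-y_b)}\le 1/\K$ for $a\ne b$. Hence if $f_{\astar}>y_b$ the claim is immediate, and we may assume $f_{\astar}\le y_b$; we may also assume $y_b>0$, since otherwise $f_{\astar}=0$ and both sides vanish. Set $r\ldef y_{\astar}-y_b\ge 0$, so that $0\le y_b-f_{\astar}\le y_b$ and $y_{\astar}+f_{\astar}\le 2y_b+r$. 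The only fact about the weight I will use is the uniform bound $1/p_{\astar}\le \K+\gamma r/y_b$: this is an equality when $\astar\ne b$ (by the closed form $p_{\astar}=y_b/(\K y_b+\gamma r)$), and when $\astar=b$ it reads $1/p_b\le\K$, which is the bound $p_b\ge1/\K$ just noted.

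For the \textbf{balanced regime} $r\le 3y_b$, the idea is a weighted AM--GM engineered to reproduce the exploration error term exactly. Writing $y_b-f_{\astar}=(y_{\astar}-f_{\astar})-r$ and applying $d\le \tfrac{1}{4c}s+c\,\tfrac{d^2}{s}$ with $d=y_{\astar}-f_{\astar}$, $s=y_{\astar}+f_{\astar}$, and $c=2\gamma p_{\astar}$, I obtain
\[
y_b-f_{\astar}\le \frac{y_{\astar}+f_{\astar}}{8\gamma p_{\astar}}-r+2\gamma p_{\astar}\frac{(y_{\astar}-f_{\astar})^2}{y_{\astar}+f_{\astar}}.
\]
It then remains to verify that the leftover $\tfrac{1}{8\gamma p_{\astar}}(y_{\astar}+f_{\astar})-r$ is at most $\tfrac{\K}{4\gamma}y_b$, which when combined with the last term gives precisely the right-hand side of~\pref{eq:yb_fa}. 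Substituting $1/p_{\astar}\le \K+\gamma r/y_b$ and $y_{\astar}+f_{\astar}\le 2y_b+r$, and dividing by $r$ (the case $r=0$ being immediate), this reduces to $\tfrac{\K}{8\gamma}+\tfrac{r}{8y_b}\le \tfrac34$, which holds since $\gamma\ge 2\K$ gives $\tfrac{\K}{8\gamma}\le\tfrac1{16}$ and $r\le 3y_b$ gives $\tfrac{r}{8y_b}\le\tfrac38$. At $r=0$ this is an exact completed square, and it is here that the specific constants $1/4$ and $2$ in~\pref{eq:yb_fa} matter: they are calibrated so that the associated discriminant vanishes.

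For the \textbf{large-gap regime} $r\ge 3y_b$ (which forces $\astar\ne b$), I abandon AM--GM and simply use $y_b-f_{\astar}\le y_b$, showing the error term alone already exceeds $y_b$. From $y_{\astar}-f_{\astar}\ge r$, $y_{\astar}+f_{\astar}\le 2y_b+r$, the closed form for $p_{\astar}$, and $\K\le\gamma/2$, the error term is at least $\tfrac{2r^2}{(y_b/2+r)(2y_b+r)}\,y_b$, and this is $\ge y_b$ exactly when $r^2\ge \tfrac52 y_b r+y_b^2$, which holds for $r\ge 3y_b$. Since the balanced estimate is in fact valid up to $r\le\tfrac{11}{2}y_b$ and the large-gap estimate down to $r\gtrsim 2.85\,y_b$, the two cases together cover all $r\ge 0$.

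I expect the \textbf{main obstacle} to be exactly this large-gap regime. There $p_{\astar}\approx y_b/(\gamma r)$ is small, so the AM--GM leftover $\tfrac{1}{8\gamma p_{\astar}}(y_{\astar}+f_{\astar})$ grows faster in $r$ than the $-r$ term available to absorb it, and the balanced estimate breaks down. The resolution is to observe that the very same smallness of $p_{\astar}$ is compensated by a correspondingly large squared discrepancy $\tfrac{(y_{\astar}-f_{\astar})^2}{y_{\astar}+f_{\astar}}$, so that the exploration term on its own dominates the crude bound $y_b-f_{\astar}\le y_b$; getting the thresholds of the two regimes to overlap, so no value of $r$ is left uncovered, is the one point requiring care.
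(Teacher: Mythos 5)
Your proof is correct, but it follows a genuinely different route from the paper's. The paper splits on whether $\astar=b$ or $\astar\neq{}b$: the first case is exactly your balanced regime at $r=0$, while the second is handled for \emph{all} gap sizes at once, with no threshold on $r$, by exploiting the closed form of $p_{\astar}$ as an exact identity, $y_b-f_{\astar}=\K{}p_{\astar}(y_b-f_{\astar})+\gamma{}p_{\astar}(y_{\astar}-y_b)(y_b-f_{\astar})/y_b$; the paper then expands $y_{\astar}-y_b=(y_{\astar}-f_{\astar})-(y_b-f_{\astar})$, cancels the resulting negative square $-\gamma{}p_{\astar}(y_b-f_{\astar})^{2}/y_b$ against an AM--GM bound on $\K{}p_{\astar}(y_b-f_{\astar})$ (together with $p_{\astar}\leq{}1/\K{}$ for $\astar\neq{}b$), and converts the remaining cross term into the triangular term via monotonicity of $a\mapsto{}(a-b)/a$, i.e.\ $(y_b-f_{\astar})/y_b\leq{}(y_{\astar}-f_{\astar})/y_{\astar}$. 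You instead threshold on $r$: in the balanced regime the slack $-r$ absorbs the AM--GM leftover, and in the large-gap regime you use that the smallness of $p_{\astar}\approx{}y_b/(\gamma{}r)$ is compensated by the largeness of $(y_{\astar}-f_{\astar})^{2}/(y_{\astar}+f_{\astar})$, so the error term alone dominates the crude bound $y_b-f_{\astar}\leq{}y_b$; your overlap check (balanced valid for $r\leq{}\tfrac{11}{2}y_b$, large-gap for $r\geq{}\tfrac{5+\sqrt{41}}{4}y_b\approx{}2.85\,y_b$) is the step that makes the case analysis exhaustive, and it checks out. What each approach buys: the paper's identity-based argument is threshold-free and isolates the single place where the inverse-gap structure really enters (the monotonicity step), whereas yours uses only AM--GM, the uniform bound $1/p_{\astar}\leq{}\K{}+\gamma{}r/y_b$, and explicit arithmetic, and it makes visible the self-correcting tradeoff---less exploration of $\astar$ precisely when the prediction-error term is large---that is the reason the lemma is true. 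Both arguments ultimately rest on the same two facts about the allocation: $p_b\geq{}1/\K{}$ and the closed form of $p_{\astar}$ when $\astar\neq{}b$.
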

Combining \pref{eq:per_round_basic}, \pref{eq:per_round_not_ab}, and
\pref{eq:yb_fa}, we arrive at the bound.
\begin{align}
  \label{eq:per_round_main}
  \sum_{a}p_a(f_a-f_{\astar}) 
  \leq{} \frac{1}{4\gamma}\sum_{a}p_a(f_a+y_a) +
  2\gamma\sum_{a}p_a\frac{(y_a-f_a)^{2}}{y_a+f_a}
  + \frac{2\K{}}{\gamma}y_{b}.
\end{align}
To conclude, we relate the non-triangular terms above to
$\sum_{a}p_af_a$, which corresponds to the learner's expected loss. For the first
term, we use the following basic result.
\begin{lemma}
  \label{lem:y_to_f}
For any distribution $p\in\Delta_{A}$, 
  \begin{align*}
    \sum_{a}p_ay_a
   \leq{} 
    3\sum_{a}p_af_a + \sum_{a}p_a\frac{(y_a-f_a)^{2}}{y_a+f_a}.
  \end{align*}
\end{lemma}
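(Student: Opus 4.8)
The plan is to reduce the inequality to a per-action statement and then average against $p$. Since $p \in \Delta_{A}$ has nonnegative coordinates summing to one, it suffices to establish the scalar inequality
\[
y_a \leq 3 f_a + \frac{(y_a - f_a)^2}{y_a + f_a}
\]
for each fixed action $a$; multiplying by $p_a \geq 0$ and summing over $a$ then reproduces exactly the claimed bound. Thus the whole lemma collapses to a two-variable inequality in the nonnegative reals $y_a, f_a$.

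To prove this scalar inequality, I would first dispatch the degenerate case $y_a + f_a = 0$, in which $y_a = f_a = 0$ and the inequality reads $0 \leq 0$ (adopting the convention that the ratio vanishes when its denominator does). For $y_a + f_a > 0$ I would clear the denominator, multiplying through by $y_a + f_a > 0$ to reduce the claim to
\[
y_a(y_a + f_a) \leq 3 f_a(y_a + f_a) + (y_a - f_a)^2.
\]
Expanding, the left-hand side is $y_a^2 + y_a f_a$ and the right-hand side is $y_a^2 + y_a f_a + 4 f_a^2$, so after cancellation the inequality is equivalent to $0 \leq 4 f_a^2$, which is immediate.

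There is essentially no obstacle here: the only care needed is in handling the degenerate denominator and in observing that the gap between the two sides equals precisely $4 f_a^2 \geq 0$ after cross-multiplication. It is worth noting that the constant $3$ is exactly the threshold that makes this gap nonnegative---for any smaller constant the pointwise bound fails in the regime $f_a \to 0$ with $y_a$ bounded away from zero, which is precisely the small-loss regime that the surrounding first-order analysis is designed to exploit.
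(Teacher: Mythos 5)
Your proof is correct, and it takes a genuinely different route from the paper's. The paper writes $\sum_a p_a y_a = \sum_a p_a f_a + \sum_a p_a(y_a - f_a)$, bounds the second sum via AM--GM as $\sum_a p_a(y_a-f_a) \leq \tfrac{1}{2}\sum_a p_a(y_a+f_a) + \tfrac{1}{2}\sum_a p_a\frac{(y_a-f_a)^2}{y_a+f_a}$, and then absorbs the resulting $\tfrac{1}{2}\sum_a p_a y_a$ term back into the left-hand side by rearranging---a self-bounding manipulation at the level of the averaged quantities. You instead prove the pointwise inequality $y_a \leq 3f_a + \frac{(y_a-f_a)^2}{y_a+f_a}$ by clearing the denominator, reducing everything to $0 \leq 4f_a^2$, and then average against $p$. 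Your route is more elementary and slightly stronger: it yields the per-action statement (so the lemma holds for arbitrary nonnegative weights, not just probability vectors), it exhibits the exact slack $\frac{4f_a^2}{y_a+f_a}$, and your tightness remark is sound---taking $p$ to be a point mass with $y_a = 1$ and $f_a = \epsilon \to 0$ shows the constant $3$ cannot be improved even in the averaged form of the lemma. What the paper's argument buys is stylistic consistency: the same add-subtract-then-AM--GM pattern recurs throughout the surrounding analysis (e.g., in the proofs of \pref{lem:yb_fa} and \pref{thm:per_round}), whereas your cross-multiplication is specific to this inequality. The one point requiring care, which you handle explicitly via a convention, is the degenerate case $y_a = f_a = 0$; note that the paper's AM--GM step carries the same implicit division by $y_a + f_a$ and so needs the identical caveat.
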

Applying this gives
\begin{align*}
  \sum_{a}p_a(f_a-f_{\astar}) 
  \leq{} \frac{1}{\gamma}\sum_{a}p_af_a +
  3\gamma\sum_{a}p_a\frac{(y_a-f_a)^{2}}{y_a+f_a}
  + \frac{2\K{}}{\gamma}y_{b},
\end{align*}
where we have used that $\gamma\geq{}1$ to simplify. Our final step is
to relate the last term above to $f_{\astar}$. To do this, we observe
that if $\gamma\geq{}2\K{}$, then \pref{lem:yb_fa} implies (after
rearranging), that
\[
y_b \leq{} 2f_{a^\star} +   4\gamma\cdot{}p_{\astar}\frac{(y_{\astar}-f_{\astar})^{2}}{y_{\astar}+f_{\astar}},
\]
so that
\[
\frac{2\K{}}{\gamma}y_{b} \leq{} \frac{4\K{}}{\gamma}f_{\astar} +
8\K{}p_{\astar}\frac{(y_{\astar}-f_{\astar})^{2}}{y_{\astar}+f_{\astar}}
\leq{} \frac{4\K{}}{\gamma}f_{\astar} +
4\gamma\cdot{}p_{\astar}\frac{(y_{\astar}-f_{\astar})^{2}}{y_{\astar}+f_{\astar}}.
\]
With this, we have
\[
    \sum_{a}p_a(f_a-f_{\astar}) 
  \leq{} \frac{1}{\gamma}\sum_{a}p_af_a +
  7\gamma\sum_{a}p_a\frac{(y_a-f_a)^{2}}{y_a+f_a}
  + \frac{4\K{}}{\gamma}f_{\astar},
\]
Finally, since $\astar\in\argmin_a{f_{a}}$, we have
$f_{\astar}\leq{}\sum_{a}p_af_a$, so we can simplify to
\[
    \sum_{a}p_a(f_a-f_{\astar}) 
  \leq{} \frac{5\K{}}{\gamma}\sum_{a}p_af_a +
  7\gamma\sum_{a}p_a\frac{(y_a-f_a)^{2}}{y_a+f_a}.
\]
\end{proof}

\subsubsection{Proofs for Supporting Lemmas}
\label{app:per_round_supporting}

\begin{proof}[\pfref{lem:yb_fa}]
Assume that $y_b\geq{}f_{\astar}$, or else we are done. We consider two cases.
  \paragraph{Case 1: $\astar=b$}
  In this case, by the AM-GM inequality
  \[
    y_b - f_{a^\star} = y_{\astar} - f_{a^\star} \leq{}
    \frac{y_{\astar}+f_{\astar}}{8\gamma{}p_{\astar}} +
    2\gamma{}\cdot{}p_{\astar}\frac{(y_{\astar}-f_{\astar})^{2}}{y_{\astar}+f_{\astar}}.
  \]
  Since $\astar=b$, we have
  \begin{align*}
    p_{a^\star} = p_b = 1 - \sum_{a \ne b} \frac{y_b}{\K{}y_b + \gamma (y_a - y_b)} \geq 1/\K{},
  \end{align*}
  so we can further upper bound by
  \begin{equation*}
    \frac{\K{}}{8\gamma}(y_{\astar}+f_{\astar}) +
    2\gamma\cdot{}p_{\astar}\frac{(y_{\astar}-f_{\astar})^{2}}{y_{\astar}+f_{\astar}}
    \leq{}   \frac{\K{}}{4\gamma}y_{\astar}+
    2\gamma\cdot{}p_{\astar}\frac{(y_{\astar}-f_{\astar})^{2}}{y_{\astar}+f_{\astar}}
    = \frac{\K{}}{4\gamma}y_{b}+
    2\gamma\cdot{}p_{\astar}\frac{(y_{\astar}-f_{\astar})^{2}}{y_{\astar}+f_{\astar}},
  \end{equation*}
  where we have used that $f_{\astar}\leq{}y_b = y_{\astar}$, where the latter holds since, for this case, we are assuming $\astar=b$.

\paragraph{Case 2: $\astar\neq{}b$}
Observe that in this case, we have
\begin{equation}
  \label{eq:casetwo_conditions}
  y_{\astar}\geq{}y_b,\quad\text{and}\quad{}f_{b}\geq{}f_{\astar}.
\end{equation}
Since $\astar\neq{}b$, using the definition of $p_{\astar}$, we have
\begin{align*}
  y_{b}-f_{\astar} &=
                     p_{\astar}\frac{\K{}{}y_b+\gamma(y_{\astar}-y_b)}{y_b}(y_b-f_{\astar})\\
                   &= \K{}p_{\astar}(y_b-f_{\astar}) + \gamma{}\cdot{}p_{\astar}\frac{(y_{\astar}-y_b)(y_b-f_{\astar})}{y_b},
\end{align*}
which we can rewrite as 
\begin{align*}
    y_{b}-f_{\astar} &= \underbrace{\K{}p_{\astar}(y_b-f_{\astar}) - \gamma\cdot{}p_{\astar}\frac{(y_b-f_{\astar})^{2}}{y_b}}_{\textbf{A}}
                       + \underbrace{\gamma\cdot{}p_{\astar}\frac{(y_{\astar}-f_{\astar})(y_b-f_{\astar})}{y_b}}_{\textbf{B}}.
\end{align*}
For the term $\textbf{A}$ above, we observe that by the AM-GM inequality,
\begin{align}
  \label{eq:casetwo_partone}
  \K{}p_{\astar}(y_b-f_{\astar}) 
  \leq{} \frac{\K{}^{2}}{4\gamma}p_{\astar}y_b + \gamma{}p_{\astar}\frac{(y_b-f_{\astar})^{2}}{y_b},
\end{align}
so that
\[
\textbf{A} \leq{} \frac{\K{}^{2}}{4\gamma}p_{\astar}y_b \leq{} \frac{\K{}}{4\gamma}y_b,
\]
where we have used that $p_{\astar}\leq{}1/\K{}$ when $\astar\neq{}b$.

Next, to bound $\textbf{B}$, we observe that
$y_{\astar}\geq{}y_b\geq{}f_{\astar}\geq{}0$. Since the function
$a\mapsto\frac{(a-b)}{a}$ is increasing for $a,b\geq{}0$, we have
that
$\frac{(y_b-f_{\astar})}{y_{b}} \leq{}
\frac{(y_{\astar}-f_{\astar})}{y_{\astar}}$ and consequently
\[
  \frac{(y_{\astar}-f_{\astar})(y_b-f_{\astar})}{y_b} \leq{}
  \frac{(y_{\astar}-f_{\astar})^{2}}{y_{\astar}} \leq{}
  2\frac{(y_{\astar}-f_{\astar})^{2}}{y_{\astar}+f_{\astar}},
\]
where the second inequality uses that
$y_{\astar}\geq{}f_{\astar}$.

Altogether, we have that when $\astar\neq{}b$,
\begin{align}
  \label{eq:casetwo_conclusion}
  y_{b}-f_{\astar} =\textbf{A}+\textbf{B} \leq \frac{\K{}}{4\gamma}y_{b} + 2\gamma{}\cdot{}p_{\astar}  \frac{(y_{\astar}-f_{\astar})^{2}}{y_{\astar}+f_{\astar}}.
\end{align}
The result now follows by combining the two cases.
\end{proof}

\begin{proof}[\pfref{lem:y_to_f}]
  First, we write
  \begin{align*}
    \sum_{a}p_ay_a &=
                     \sum_{a}p_af_a +
                     \sum_{a}p_a(y_a-f_a).
  \end{align*}
By the AM-GM inequality, we have
  \begin{align*}
    \sum_{a}p_a(y_a-f_a)\leq{}
    \frac{1}{2}\sum_{a}p_a(y_a+f_a) + \frac{1}{2}\sum_{a}p_a\frac{(y_a-f_a)^{2}}{y_a+f_a},
  \end{align*}
so that
  \begin{align*}
    \sum_{a}p_ay_a
    \leq{} \frac{1}{2}\sum_{a}p_ay_a +
    \frac{3}{2}\sum_{a}p_af_a + \frac{1}{2}\sum_{a}p_a\frac{(y_a-f_a)^{2}}{y_a+f_a},
  \end{align*}
  and after rearranging,
  \begin{align*}
    \sum_{a}p_ay_a
    \leq{} 
    3\sum_{a}p_af_a + \sum_{a}p_a\frac{(y_a-f_a)^{2}}{y_a+f_a}.
  \end{align*}
\end{proof}

\subsection{Auxiliary Results}

\begin{proposition}
  \label{prop:exp_concave}
  When $\yhat,y\in\brk*{0,1}$, the logarithmic loss
  $\yhat\mapsto\logl(\yhat,y)$ is $1$-exp-concave and $1$-mixable.
  \end{proposition}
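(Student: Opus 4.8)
The plan is to verify the two properties separately, leaning on the standard fact (from the theory of prediction with expert advice) that $\eta$-exp-concavity of a loss implies $\eta$-mixability with the same constant; this makes the exp-concavity computation the heart of the matter and reduces mixability to essentially a citation (plus an optional direct check). First I would recall the definition: the loss $\yhat \mapsto \logl(\yhat,y)$ is $\eta$-exp-concave if $\yhat \mapsto \exp(-\eta\,\logl(\yhat,y))$ is concave on $[0,1]$ for every fixed $y \in [0,1]$. For a twice-differentiable loss this is equivalent to the pointwise inequality $\ell''(\yhat) \geq \eta\,(\ell'(\yhat))^2$, where I write $\ell(\yhat) := \logl(\yhat,y) = -y\log\yhat - (1-y)\log(1-\yhat)$ for fixed $y$. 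I would then record the derivatives $\ell'(\yhat) = -\frac{y}{\yhat} + \frac{1-y}{1-\yhat}$ and $\ell''(\yhat) = \frac{y}{\yhat^2} + \frac{1-y}{(1-\yhat)^2}$.

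The key step is the algebraic identity
\[
\ell''(\yhat) - (\ell'(\yhat))^2 = y(1-y)\left(\frac{1}{\yhat} + \frac{1}{1-\yhat}\right)^2,
\]
which I would obtain by expanding the square $(\ell')^2$ and collecting terms, using $y^2 - y = -y(1-y)$ together with the analogous identity for $1-y$. Since $y(1-y) \geq 0$ for $y \in [0,1]$ and the squared factor is nonnegative, the right-hand side is $\geq 0$, giving $\ell'' \geq (\ell')^2$ and hence $1$-exp-concavity. Equivalently, one can bypass the derivative criterion by noting that $\exp(-\logl(\yhat,y)) = \yhat^{y}(1-\yhat)^{1-y}$ and checking concavity of this explicit expression directly; the same computation shows its second derivative equals $-y(1-y)\bigl(\tfrac{1}{\yhat}+\tfrac{1}{1-\yhat}\bigr)^2\,\yhat^{y}(1-\yhat)^{1-y} \leq 0$.

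For mixability I would invoke the implication ``$\eta$-exp-concave $\Rightarrow$ $\eta$-mixable'' just established above to conclude $1$-mixability immediately. For added transparency in the binary case I would also note the clean geometric argument: under the transform $(u,v)\mapsto(e^{-u},e^{-v})$ with $\eta = 1$, the loss curve $\yhat \mapsto (\logl(\yhat,0),\logl(\yhat,1)) = (-\log(1-\yhat),\,-\log\yhat)$ maps to the straight segment $\yhat \mapsto (1-\yhat,\,\yhat)$, so the image of the superprediction set is the convex triangle $\{(p,q): p,q\geq 0,\ p+q\leq 1\}$ — precisely the defining condition for $1$-mixability, and indeed the reason log loss is the canonical $1$-mixable loss.

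The only real obstacle is bookkeeping rather than any conceptual difficulty: I would need to handle the boundary points $\yhat \in \{0,1\}$ and $y \in \{0,1\}$ with a little care, since individual terms of $\ell'$ and $\ell''$ blow up there, whereas the exp-transformed function $\yhat^{y}(1-\yhat)^{1-y}$ extends continuously to the closed interval, so concavity on the open interval extends to $[0,1]$ by continuity. I would also remark that the continuous-outcome case $y \in [0,1]$ needs no separate treatment beyond the computation above, since $\logl(\yhat,\cdot)$ is affine in $y$ and the factor $y(1-y)$ controlling the sign remains nonnegative throughout $[0,1]$.
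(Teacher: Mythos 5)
Your proposal is correct and takes essentially the same route as the paper: both verify the second-order exp-concavity criterion $\ell''(\yhat)\geq(\ell'(\yhat))^{2}$ with the same derivatives, and both obtain $1$-mixability by citing the standard implication from exp-concavity. The only difference is cosmetic---where the paper bounds $(\ell')^{2}\leq \ell''$ via Jensen's inequality applied to the convex combination with weights $y$ and $1-y$, you expand the difference exactly as $\ell''-(\ell')^{2}=y(1-y)\prn*{\tfrac{1}{\yhat}+\tfrac{1}{1-\yhat}}^{2}\geq 0$, which is precisely the Jensen gap computed in closed form.
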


\begin{proof}[\pfref{prop:exp_concave}]
  Let $f_y(\yhat)=\logl(\yhat,y)$. From \citet{hazan2007logarithmic},
  the loss is $\alpha$-exp-concave if and only if
  $f''_y(\yhat)\geq{}\alpha(f'_y(\yhat))^{2}$ for all
  $\yhat,y\in\brk*{0,1}$. We observe that $f'_y(\yhat) = -\frac{y}{\yhat} +
  \frac{1-y}{1-\yhat}$ and $f''_y(\yhat)=\frac{y}{\yhat^{2}} +
  \frac{1-y}{\prn{1-\yhat}^{2}}$. Since $y\in\brk*{0,1}$, Jensen's
  inequality implies that
\[
  (f'_y(\yhat))^{2} \leq y\prn*{\frac{-1}{\yhat}}^2 + (1-y)\prn*{\frac{1}{1-\yhat}}^{2} = f''_y(\yhat),
\]
so we may take $\alpha=1$.

Mixability is an immediate consequence of exp-concavity \citep{PLG}.
\end{proof}

\section{Extensions}
\label{app:extensions}

\subsection{Small Rewards}
\newcommand{\Rstar}{R^{\star}}

In this section we sketch an extension of \mainalg to the setting where the
learner observes rewards $r_t(a) \in [0,1]$ rather than losses
$\ls_t(a)$, and aims to achieve high reward rather than low
loss. As before, we
assume access to a function class $\Fcal$ such that the Bayes predictor $f^\star(x,a) := \EE[r(a) \mid x] \in \Fcal$. Formally, we define regret for this setting as
\[
\RegCB = \sum_{t=1}^{T}r_t(\pistar(x_t)) - \sum_{t=1}^{T}r_t(a_t),
\]
where $\pistar(x)\ldef\argmax_{a\in\cA}\fstar(x,a)$ is the optimal
policy.

Our aim here is to provide regret bounds that adapt whenever the
reward of the optimal policy is small. This type of guarantee is
natural if we believe a-priori that rewards are typically very
small, which is common in personalization and recommendation
applications, where clicks are often used as reward signal, yet
click-through rates are typically well below $1\%$. In such settings, it
is favorable to have regret scaling with the reward $R^\star$ of the
optimal policy. Note that this is \emph{not} equivalent to an $L^\star$
bound after the translation $r_t(a) = 1 - \ell_t(a)$, since having low
reward corresponds to having high loss.

\mainalg can be adapted to the small-reward setting achieve
\[
\En\brk*{\RegCB} \leq \bigoh\prn*{\sqrt{\Rstar\cdot{}A\RegLog} + A\RegLog}
\]
whenever $\En\brk*{\sum_{t=1}^{T}r_t(\pistar(x_t))}\leq{}\Rstar$. The
algorithm remains essentially as described in \pref{alg:online_main}, with
the only difference being that we change the reweighted inverse gap
weighting strategy used in \pref{line:igw}. The new strategy and
corresponding per-round inequality are described in the following theorem.
\begin{theorem}
\label{thm:per_round_rstar}
Let $y \in [0,1]^\K{}$ be given and $b := \argmax_{a}
y_a$. Define $p_a = \frac{y_b}{\K{}y_b + \gamma (y_b-y_a)}$ for $a \ne b$ and
$p_b = 1 - \sum_{a \ne b} p_a$. If $\gamma\geq{}4\K{}$, then for all
$f\in\brk*{0,1}^{\K}$ and $\astar\in\argmax_{a}f_a$, we have
\begin{align*}
      \sum_{a}p_a(f_{\astar}-f_{a}) 
  \leq{} \frac{9\K{}}{\gamma} \sum_a p_a f_a + 10 \gamma \sum_a p_a \frac{(y_a - f_a)^2}{y_a+f_a}.
\end{align*}
\end{theorem}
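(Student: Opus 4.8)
The plan is to mirror the proof of \pref{thm:per_round}, tracking how the switch to maximization and the flipped weighting $p_a = \frac{y_b}{\K y_b + \gamma(y_b - y_a)}$ (with $b=\argmax_a y_a$) alters each step. First I would decompose the regret by inserting $y_b$ and the predictions: since $f_{\astar}-f_a = (f_{\astar}-y_b)+(y_b-y_a)+(y_a-f_a)$, summing against $p$ gives
\[
\sum_a p_a(f_{\astar}-f_a) = (f_{\astar}-y_b) + \sum_a p_a(y_b-y_a) + \sum_a p_a(y_a-f_a).
\]
The middle, ``exploration'' term is controlled exactly as in the loss case: because $y_b\ge y_a$, each summand obeys $p_a(y_b-y_a)=\frac{y_b(y_b-y_a)}{\K y_b+\gamma(y_b-y_a)}\le y_b/\gamma$, so the term is at most $\K y_b/\gamma$. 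The last term I would split by the arithmetic--geometric mean inequality into $\frac{1}{4\gamma}\sum_a p_a(y_a+f_a)$ plus $\gamma\sum_a p_a\frac{(y_a-f_a)^2}{y_a+f_a}$, producing the triangular-discrimination error.

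Two steps then require genuinely new arguments. The first is converting the stray $\frac{\K}{\gamma}y_b$ into the first-order quantity $\sum_a p_a f_a$. The loss proof does this through $f_{\astar}\le\sum_a p_a f_a$, but here $\astar=\argmax_a f_a$ makes that inequality point the \emph{wrong way}; instead I would relate $y_b$ to the \emph{predicted} reward $\sum_a p_a y_a$. Writing $z_a=y_a/y_b$, a direct computation gives $\sum_a p_a z_a = 1 - \sum_{a\ne b}\frac{1-z_a}{\K+\gamma(1-z_a)}\ge 1-(\K-1)/\gamma\ge 3/4$ whenever $\gamma\ge 4\K$, i.e. $y_b\le\tfrac{4}{3}\sum_a p_a y_a$. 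Combining this with the loss-free bound $\sum_a p_a y_a\le 3\sum_a p_a f_a+\sum_a p_a\frac{(y_a-f_a)^2}{y_a+f_a}$ of \pref{lem:y_to_f} turns every $y_b$ into $\sum_a p_a f_a$ plus further triangular error. This is exactly where the stronger hypothesis $\gamma\ge 4\K$ (versus $\gamma\ge 2\K$ for losses) and the larger leading constant $9\K/\gamma$ come from.

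The second, and I expect the main obstacle, is bounding the leftover $f_{\astar}-y_b$ by the reward analogue of \pref{lem:yb_fa}: $f_{\astar}-y_b\le\frac{\K}{4\gamma}y_b+2\gamma\,p_{\astar}\frac{(y_{\astar}-f_{\astar})^2}{y_{\astar}+f_{\astar}}$ (this term is $\le 0$ and harmless unless $f_{\astar}>y_b$). As before I would split into $\astar=b$ and $\astar\ne b$; the case $\astar=b$ follows from AM-GM together with $p_b\ge 1/\K$, just as for losses. The hard case is $\astar\ne b$: the loss proof uses $y_{\astar}\ge y_b\ge f_{\astar}$ and monotonicity of $a\mapsto(a-b)/a$ to replace the denominator by $y_{\astar}+f_{\astar}$, but in the reward setting the optimal action may have \emph{large} true reward $f_{\astar}$ yet \emph{small} predicted reward $y_{\astar}$ (even $y_{\astar}=0$), so the identical monotonicity step collapses the denominator to $y_{\astar}$ and the bound blows up. The resolution is a dichotomy: when $f_{\astar}-y_b$ is small relative to $y_b$ it is absorbed by the $\frac{\K}{4\gamma}y_b$ term (which survives precisely because of the $y_b$-to-$\sum_a p_a f_a$ conversion above), and otherwise the weight $p_{\astar}=\frac{y_b}{\K y_b+\gamma(y_b-y_{\astar})}$ is large enough that the triangular term at $\astar$ dominates. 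Threading the AM-GM through this split is what consumes the extra slack reflected in the constants $9$ and $10\gamma$. Finally I would collect the $\sum_a p_a f_a$ coefficients ($\tfrac{5\K}{\gamma}$ from the exploration term plus $O(1/\gamma)$ from the AM-GM remainder) and the triangular coefficients, verify they fit within $\frac{9\K}{\gamma}$ and $10\gamma$ using $\gamma\ge 4\K\ge 4$, and conclude.
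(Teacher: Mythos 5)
Your outer decomposition and the conversion $y_b \le \tfrac{4}{3}\sum_a p_a y_a$ (valid for $\gamma\ge 4\K$, nicely combined with \pref{lem:y_to_f}) are sound, but the lemma on which you rest everything else is false as stated. Take $\K=2$, $\gamma=8=4\K$, $y_1=y_2=0.47$, $f_1=0.53$, $f_2=0$, so that $b=\astar=1$ and $p_1=p_2=\tfrac12$. Then $f_{\astar}-y_b=0.06$, while
\begin{align*}
\frac{\K}{4\gamma}y_b+2\gamma p_{\astar}\frac{(y_{\astar}-f_{\astar})^2}{y_{\astar}+f_{\astar}}
=\frac{2}{32}\cdot 0.47 + 2\cdot 8\cdot\tfrac12\cdot\frac{(0.06)^2}{1}
=0.0294+0.0288=0.0582<0.06,
\end{align*}
so the claimed inequality fails (perturbing $y_1$ slightly to break the tie changes nothing). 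The error is precisely in your assertion that the case $\astar=b$ ``follows from AM-GM together with $p_b\ge1/\K$, just as for losses'': AM-GM gives $f_{\astar}-y_{\astar}\le \frac{y_{\astar}+f_{\astar}}{8\gamma p_{\astar}}+2\gamma p_{\astar}\frac{(y_{\astar}-f_{\astar})^2}{y_{\astar}+f_{\astar}}$, and with $p_{\astar}\ge 1/\K$ the first term is at most $\frac{\K}{4\gamma}\max\crl{y_{\astar},f_{\astar}}$. In the loss case the relevant regime has $f_{\astar}\le y_b=y_{\astar}$, so the max is $y_b$; in the reward case it has $f_{\astar}\ge y_b=y_{\astar}$, so the max is $f_{\astar}$, and the counterexample shows this sign flip is not an artifact of the proof. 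Your dichotomy for $\astar\ne b$ suffers the same quantitative defect: with absorption threshold $\frac{\K}{4\gamma}y_b$ and triangular coefficient $2\gamma p_{\astar}$, there is a middle regime (exactly where the counterexample lives, $f_{\astar}-y_b\approx 0.13\,y_b$) in which neither branch fires.

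The paper's proof instead keeps $f_{\astar}$, not $y_b$, in the analogue of \pref{lem:yb_fa}: it proves $f_{\astar}-y_b\le\frac{\K}{4\gamma}f_{\astar}+4\gamma p_{\astar}\frac{(f_{\astar}-y_{\astar})^2}{f_{\astar}+y_{\astar}}$, where the case $\astar\ne b$ is handled not by a dichotomy but by the same monotonicity trick as in the loss proof, applied with $f_{\astar}$ as the \emph{largest} quantity: $f_{\astar}\ge y_b\ge y_{\astar}$ gives $\frac{y_b-y_{\astar}}{y_b}\le\frac{f_{\astar}-y_{\astar}}{f_{\astar}}$, so the denominator becomes $f_{\astar}$ rather than $y_{\astar}$, and the blow-up you feared never occurs. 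This leaves terms of order $\frac{\K}{\gamma}f_{\astar}$ on the right-hand side, and the step your proposal is missing is how to remove them: since $\sum_a p_a=1$, the left-hand side of the theorem is exactly $f_{\astar}-\sum_a p_a f_a$, so the paper \emph{rearranges} the resulting inequality (a self-bounding step) to obtain $f_{\astar}\le 4\sum_a p_a f_a + 10\gamma\sum_a p_a\frac{(y_a-f_a)^2}{y_a+f_a}$ under $\gamma\ge 4\K$, and substitutes this back in. That rearrangement---not the $y_b\le\tfrac43\sum_a p_a y_a$ conversion---is where the hypothesis $\gamma\ge4\K$ and the constants $9$ and $10$ actually come from. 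Your $y_b$-based architecture could conceivably be salvaged by proving a $y_b$-version of the lemma with a strictly larger triangular coefficient (roughly $3\gamma$), which would indeed avoid the rearrangement, but that requires a genuinely new argument; as written, both the lemma and its proposed proof are incorrect.
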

Observe that the left hand side is the per-round regret of the learner
when $f$ is the \emph{reward} (rather than loss) model, which contrasts with the
\lhs in~\pref{thm:per_round}. On the other hand, the \rhs only differs from that of~\pref{thm:per_round} in the
constants. As such, it naturally yields an $R^\star$ bound when
applied with $y=\yhat_t(x_t,\cdot)$ as in \pref{alg:online_main}.

It should be noted that achieving $R^\star$-based first-order bounds for contextual
bandits appears to be considerably easier than achieving $L^\star$-based bounds. Indeed, the
standard analysis of the \expfour algorithm already yields a
$\bigoh(\sqrt{R^\star\cdot \K\log |\Pi|})$ regret bound, under the
benign assumption that the policy class contains the policy that
selects actions uniformly at random on every context~\citep[][Theorem
  7.1]{auer2002non}. On the other hand, \expfour cannot achieve an $\Lstar$-based bound without modifications \citep{allen2018make}.

\begin{proof}[Proof of~\pref{thm:per_round_rstar}]
The proof parallels that of~\pref{thm:per_round}. We start by adding
and subtracting $y_a$ and applying the AM-GM inequality
\begin{align}
\sum_a p_a(f_{\astar} - f_a) &= \sum_{a\ne \astar} p_a(f_{\astar}-y_a) + \sum_{a \ne \astar} p_a(y_a - f_a)\notag\\
& \leq \sum_{a\ne\astar} p_a(f_{\astar}-y_a) + \frac{1}{4\gamma} \sum_{a\ne\astar} p_a (y_a + f_a) + \gamma\sum_{a\ne\astar}p_a \frac{(y_a - f_a)^2}{y_a+f_a}\notag.
\end{align}
For the first term above, let us consider two cases. 

\paragraph{Case 1} 
First, if $y_b \geq f_{\astar}$ then
\begin{align*}
\sum_{a \ne \astar} p_a (f_{\astar} - y_a) \leq \sum_{a \notin
  \{\astar,b\}} p_a (f_{\astar} - y_a) \leq \sum_{a \notin
  \{\astar,b\}}p_a (f_{\astar} - y_a)\one\{ f_{\astar} \geq y_a\} .
\end{align*}
Here we have simply dropped negative terms. Now, using the definition of
$p_a$ for $a\neq{}b$, we have
\begin{align*}
  p_a (f_{\astar} - y_a)\one\{f_{\astar} \geq y_a\} = \frac{y_b (f_{\astar} - y_a)}{\K{}y_b + \gamma(y_b - y_a)}\one\{f_{\astar} \geq y_a\} \leq \frac{y_b (f_{\astar} - y_a)}{\gamma (y_b - y_a)}\one\{f_{\astar} \geq y_a\}.
\end{align*}
Observe that $y_b/(y_b - y_a) \leq f_{\astar}/(f_{\astar} - y_a)$,
since $y_b \geq f_{\astar} \geq y_a\geq{}0$. This yields
\begin{align*}
\one\{f_{\astar} \geq y_a\} \frac{y_b (f_{\astar} - y_a)}{\gamma (y_b - y_a)} \leq \one\{f_{\astar} \geq y_a\} \frac{ f_{\astar} (f_{\astar}-y_a)}{\gamma (f_{\astar} - y_a)} \leq \frac{f_{\astar}}{\gamma}.
\end{align*}
And so, if $y_b \geq f_{\astar}$ we have the bound
\begin{align*}
\sum_a p_a (f_{\astar} - f_a) \leq \frac{\K{} f_{\astar}}{\gamma} + \frac{1}{4\gamma}\sum_{a \ne \astar} p_a (f_a + y_a) + \gamma\sum_{a \ne \astar} p_a \frac{(y_a - f_a)^2}{y_a + f_a}.
\end{align*}

\paragraph{Case 2}
If $y_b \leq f_{\astar}$ then for the first term, we write
\begin{align}
\sum_{a \ne \astar} p_a (f_{\astar} - y_a) = \sum_{a \notin\{\astar,b\}} p_a(y_b - y_a) + (1-p_{\astar})(f_{\astar} - y_b) \leq \sum_{a\notin\{\astar,b\}}p_a(y_b - y_a) + (f_{\astar}-y_b). \label{eq:rstar_case_2_1}
\end{align}
For the first term in \pref{eq:rstar_case_2_1}, using the definition of $p_a$, we have
\begin{align}
\sum_{a \notin\{\astar,b\}} p_a(y_b - y_a) = \sum_{a \notin\{\astar,b\}} \frac{y_b(y_b - y_a)}{\K{}y_b + \gamma(y_b - y_a)} \leq \sum_{a \notin\{\astar,b\}} \frac{y_b}{\gamma} \leq \frac{\K{}y_b}{\gamma} \leq \frac{\K{}f_{\astar}}{\gamma}. \label{eq:rstar_case_2_2}
\end{align}

For the second term, we first note that $p_b = 1 - \sum_{a\ne b} p_a \geq
1 - \sum_{a \ne b}\frac{y_b}{\K{}y_b} \geq \frac{1}{\K{}}$, then
consider two subcases.
\paragraph{Case 2a ($y_b \leq f_{\astar}$ and $\astar=b$)}
Here we simply use the AM-GM inequality to show that
\begin{align*}
f_{\astar} - y_b = f_{\astar} - y_{\astar} &\leq \frac{f_{\astar} + y_{\astar}}{8\gamma p_{\astar}} + 2\gamma p_{\astar} \frac{(y_{\astar} - f_{\astar})^2}{y_{\astar}+f_{\astar}} \\
& \leq \frac{\K{}}{4\gamma}f_{\astar} + 2\gamma p_{\astar} \frac{(y_{\astar} - f_{\astar})^2}{y_{\astar}+f_{\astar}}.
\end{align*}
Here the first inequality is AM-GM, while the second uses that
$y_{\astar} = y_b\leq f_{\astar}$ (by the conditions for this case), along with the
fact that $p_{\astar} = p_b \geq 1/\K{}$.

\paragraph{Case 2b ($y_b \leq f_{\astar}$ and $\astar \neq b$)}
In this case, we have
\begin{align*}
y_b \geq y_{\astar}, \quad \textrm{and} \quad f_{\astar} \geq f_b.
\end{align*}
Using the definition for $p_{\astar}$, we have
\begin{align*}
f_{\astar} - y_b &= p_{\astar} \frac{\K{}y_b + \gamma(y_b - y_{\astar})}{y_b} (f_{\astar} - y_b) = p_{\astar} \K{} (f_{\astar} - y_b) + p_{\astar}\gamma \frac{(y_b - y_{\astar})(f_{\astar} - y_b)}{y_b}\\
& \leq p_{\astar} \K{} (f_{\astar} - y_b) + p_{\astar}\gamma \frac{(f_{\astar} - y_{\astar})(f_{\astar} - y_b)}{f_{\astar}}\\
& = p_{\astar} \K{} (f_{\astar} - y_b) + p_{\astar}\gamma \frac{(f_{\astar} - y_{\astar})^2}{f_{\astar}} + p_{\astar}\gamma \frac{(f_{\astar} - y_{\astar})(y_{\astar} - y_b)}{f_{\astar}}\\
& \leq p_{\astar} \K{} (f_{\astar} - y_b) + p_{\astar}\gamma \frac{(f_{\astar} - y_{\astar})^2}{f_{\astar}}\\
& \leq p_{\astar} \K{} (f_{\astar} - y_{\astar}) + p_{\astar}\gamma \frac{(f_{\astar} - y_{\astar})^2}{f_{\astar}}.
\end{align*}
Here, in the first inequality we use that $a \mapsto (a-b)/a$ is
increasing in $a$, for $a,b \geq 0$ along with the fact that
$f_{\astar} \geq y_b \geq y_{\astar}$. The second and third
inequalities both use that $y_{\astar} \leq y_b$.

Now by the AM-GM inequality, we have
\begin{align*}
p_{\astar} \K{} (f_{\astar} - y_{\astar}) &\leq \frac{p_{\astar} \K{}^2}{4\gamma} f_{\astar} + p_{\astar} \gamma \frac{(f_{\astar} - y_{\astar})^2}{f_{\astar}}\\
& \leq \frac{\K{}}{4\gamma}f_{\astar} + p_{\astar} \gamma \frac{(f_{\astar} - y_{\astar})^2}{f_{\astar}},
\end{align*}
where the second inequality uses the fact that $p_{\astar} \leq 1/\K{}$ since $a_\star \ne b$. 
Finally, we use that $y_{\astar} \leq f_{\astar}$ to conclude that in
this case,
\begin{align}
f_{\astar} - y_b \leq \frac{\K{}}{4\gamma} f_{\astar} + 4\gamma p_{\astar} \frac{(f_{\astar} - y_{\astar})^2}{f_{\astar} + y_{\astar}}. \label{eq:rstar_subcases}
\end{align}
This bound applies to both Case 2a and 2b. 

\paragraph{Wrapping up}
Returning to Case 2 and combining~\pref{eq:rstar_case_2_1},~\pref{eq:rstar_case_2_2}, and~\pref{eq:rstar_subcases}, we have
\begin{align*}
\sum_{a \ne \astar} p_a (f_{\astar} - y_a) \leq \frac{2\K{}f_{\astar}}{\gamma} + 4\gamma p_{\astar} \frac{(f_{\astar} - y_{\astar})^2}{f_{\astar} + y_{\astar}}.
\end{align*}
Combining this with our initial calculation, we have
\begin{align*}
\sum_{a} p_a (f_{\astar} - f_a) &\leq \frac{2\K{}f_{\astar}}{\gamma} + 4\gamma p_{\astar} \frac{(f_{\astar} - y_{\astar})^2}{f_{\astar} + y_{\astar}}  + \frac{1}{4\gamma}\sum_{a \ne \astar} p_a (y_a + f_a) + \gamma \sum_{a \ne \astar} p_a \frac{(y_a - f_a)^2}{y_a + f_a}\\
& \leq 4 \gamma \sum_a p_a \frac{(y_a - f_a)^2}{y_a + f_a} + \frac{1}{4\gamma}\sum_{a} p_a (y_a + f_a) + \frac{2\K{}}{\gamma} f_{\astar}.
\end{align*}
Next, we can apply~\pref{lem:y_to_f} as-is, which yields
\begin{align*}
\sum_{a} p_a (f_{\astar} - f_a) \leq \frac{1}{\gamma} \sum_a p_a f_a + 5\gamma \sum_a p_a\frac{(y_a - f_a)^2}{y_a+f_a} + \frac{2\K{}}{\gamma} f_{\astar}.
\end{align*}
This inequality, after using assumption the that $\gamma \geq 4\K{}$
and rearranging, implies
\begin{align*}
f_{\astar} \leq 2 (1+1/\gamma)\sum_a p_a f_a + 10\gamma \sum_a p_a \frac{(y_a - f_a)^2}{y_a+f_a} \leq 4\sum_a p_a f_a + 10 \gamma \sum_a p_a \frac{(y_a - f_a)^2}{y_a+f_a}.
\end{align*}
Plugging this bound in for the final expression gives
\begin{align*}
\sum_{a} p_a (f_{\astar} - f_a) &\leq \frac{1}{\gamma} \sum_a p_a f_a + 5\gamma \sum_a p_a\frac{(y_a - f_a)^2}{y_a+f_a} + \frac{8\K{}}{\gamma} \sum_a p_a f_A + 20 \K{} \sum_a p_a \frac{(y_a - f_a)^2}{y_a+f_a}\\
& \leq \frac{9\K{}}{\gamma} \sum_a p_a f_a + 10 \gamma \sum_a p_a \frac{(y_a - f_a)^2}{y_a+f_a},
\end{align*}
as desired.
\end{proof}

\section{Details for Experiments}
\label{app:experiments}

\subsection{Assets and Computing Resources}

\paragraph{Assets} The code for the contextual bandit evaluation setup of \citet{bietti2018contextual}, which we used as a starting point, is
publicly available at
\url{https://github.com/albietz/cb_bakeoff}. Likewise, the source code
for Vowpal Wabbit, upon which our implementation is built, is publicly available at
\url{https://github.com/vowpalwabbit/vowpal_wabbit/}. \neurips{The source code
used to run the experiments is included in the supplementary material.}

All datasets used in the experiments are publicly available via the
OpenML collection (\url{https://www.openml.org}). Readers
can refer to
the information page for each respective dataset (e.g.,
\url{https://www.openml.org/d/1041} for dataset \texttt{1041}) for
copyright information.

\paragraph{Computing resources} Experiments were run on a single \texttt{n1-highcpu-32} instance on
Google Compute Engine. The total compute time required to run the experiments was
under 12 hours.

\subsection{Additional Details}

\paragraph{Datasets}
We restrict to a subset of the bake-off suite consisting of 516
multiclass classification datasets in the same fashion as
\citet{foster2020instance}.

\paragraph{Algorithms and oracle}
For \squarecbl and \fastcbl we take $\cF$ to be a class of generalized linear
models:
\begin{equation}
  \label{eq:glm}
  \cF=\crl*{(x,a)\mapsto\sigma(\tri{w,\phi(x,a)}) \mid{} w\in\bbR^{d}},
\end{equation}
where $\sigma(t)=1/(1+e^{-t})$ is the logistic link function and
$\phi(x,a)$ is a fixed (dataset-dependent) feature map. 
This choice is
convenient because i) it naturally produces predictions in
$\brk*{0,1}$, as required by \mainalg, and ii), we have that
$\logloss(\sigma(\tri{w,\phi(x,a)}),y)=\ls_{\mathrm{logistic}}(\tri{w,\phi(x,a)},
y)$, so that online regression with the logarithmic loss is equivalent
to online logistic regression (cf. \pref{ex:logistic}).

Even though
\squarecb is designed for the square loss rather than the \loglosst, one
can show that under the realizability assumption
(\pref{ass:realizability}), any \loglosst oracle is admissible for
\squarecb. Indeed, for any \loglosst oracle satisfying
\pref{ass:logloss_regret}, realizability and Pinsker's inequality
imply that
\begin{equation}
  \label{eq:kl_square}
  \En\brk*{
    \sum_{t=1}^{T}\prn*{\yhat_t(x_t,a_t)-\fstar(x_t,a_t)}^{2}
  }
  \leq{} 2   \En\brk*{
    \sum_{t=1}^{T}\dkl{\fstar(x_t,a_t)}{\yhat_t(x_t,a_t)}
    } \leq{} 2\RegLog,
  \end{equation}
  which means that the oracle is a valid square loss oracle for
  \squarecb in
  the sense of Assumption 2b in \citet{foster2020beyond}.

   The oracle is trained with the default VW learning
  rule, which performs online gradient descent with adaptive
  updates~\citep{duchi2011adaptive,karampatziakis2011online,ross2013normalized}. We
  treat the algorithm's step size parameter as a tunable hyperparameter.

For \squarecbs, we configure \squarecb exactly as described in
\citet{foster2020instance}. We take $\cF$ to be the class of linear
models
\[
  \cF=\crl*{(x,a)\mapsto\tri{w,\phi(x,a)} \mid{} w\in\bbR^{d}},
\]
and the oracle applies the default VW learning rule to the
square loss. We use the same hyperparameter range as for \squarecbl
and \fastcbl, both for the \squarecb learning rate and for the VW
learning rule's step size.

\paragraph{Tables in \pref{fig:experiments}}
For both tables, each cell $(a,b)$ plots the number of datasets in
which algorithm $a$ significantly beats $b$, minus the number of
datasets in which $b$ significantly beats $a$. Following
\citet{bietti2018contextual}, we define a significant win using a
heuristic based on an approximate $Z$-test. If $p_a$ and $p_b$ are the
final PV loss values for algorithms $a$ and $b$, respectively, we say that $a$
significantly beats $b$ if 

\begin{equation}
  \label{eq:ztest}
  1 - \Phi\prn*{
    \frac{p_a-p_b}{\sqrt{\frac{p_a(1-p_a)}{n}+\frac{p_b(1-p_b)}{n}}
    }} < 0.05,
  \end{equation}
  where $n$ is the number of examples and $\Phi$ is the Gauss error function.

In the left table, we choose the configuration
(hyperparameters for \squarecb/\mainalg and learning rate for the VW
learner) with lowest final PV loss for each algorithm on a per-dataset
basis. In the right table, for each algorithm we choose the hyperparameter configuration with best
performance on a held-out collection of 200 datasets using the method
described in \citet{bietti2018contextual}. We keep this configuration
fixed and tune only the learning rate for the VW learner on each dataset.

\paragraph{Plots in \pref{fig:experiments}}
Each plot shows the progressive validation loss $L_{\textsf{PV}}(t)$ as a
function of the number of examples $t$, for the best-performing (in
terms of final PV loss) hyperparameter configuration for each
algorithm. We consider 10 replicates for each dataset, where
  each replicate has the example order randomly permuted, and plot the
average progressive validation loss across the replicates. Error bands
in each plot correspond to significance $p < 0.05$ under the $Z$-test
in \pref{eq:ztest}, setting $n=t\cdot(\mathrm{\# replicates})$ at each time $t$.

The algorithm \textsf{Supervised.L} included in each of the plots is
an oracle benchmark that runs online logistic regression using the true
label for each example (which the bandit algorithms do not have access
to). The only hyperparameter for this algorithm is the learning rate
for the VW learning rule.

\subsection{Additional Figures}

\pref{fig:ada} shows the results for the experiment in
\pref{fig:experiments} (Top-Left) with two additional adaptive algorithms, \adacb and
\regcb, included. These algorithms were found to have the strongest
overall performance on the bake-off suite in
\citet{foster2020instance} using the same online square loss oracle as
\squarecbs, and are considered state-of-the-art
\citep{bietti2018contextual,foster2020instance}. We see in that switching
\squarecb from regression with the square loss to the logistic
loss (\squarecbl) is already enough to outperform \adacb and \regcb, and
that the performance of \fastcbl is even stronger. It would be
interesting to understand how the performance of \adacb and \regcb
improves if we switch to the generalized linear model \pref{eq:glm} in
the same fashion as \squarecbl/\fastcbl, but it is unclear how to
efficiently compute the confidence sets required by these algorithms in
this case. We leave this for future work.

\begin{figure}[h!]
\centering
    \begin{tabular}{ | l | c | c | c | c | c | }
      \hline
      $\downarrow$ vs $\rightarrow$ & R.S & A.S & S.S & S.L & F.L \\ \hline
      RegCB.S & - & 6 & 46 & -6 & -12 \\ \hline
      AdaCB.S & -6 & - & 42 & -8 & -18 \\ \hline
      SquareCB.S & -46 & -42 & - & -55 & -66 \\ \hline
      SquareCB.L & 6 & 8 & 55 & - & -11 \\ \hline
      \textbf{FastCB.L} & \textbf{12} & \textbf{18} & \textbf{66} & \textbf{11} & - \\ \hline
    \end{tabular}
  \caption{Head-to-head win-loss differences. Each entry indicates the statistically
    significant win-loss difference between the row algorithm and the column
    algorithm. Hyperparameters are per-dataset.}
\label{fig:ada}
\end{figure}

\end{document}